\definecolor{darkred}{RGB}{150,0,0}
\definecolor{darkgreen}{RGB}{0,150,0}
\definecolor{darkblue}{RGB}{0,0,150}
\date{}
\author{\\ Farzan Farnia, Asuman Ozdaglar\\
\{farnia,asuman\}@mit.edu\\\\
Massachusetts Institute of Technology\\}
\newtheorem{thm}{Theorem}
\newtheorem*{thm*}{Theorem}
\newtheorem{remark}{Remark}
\newtheorem*{remark*}{Remark}
\newtheorem{lemma}{Lemma}
\newtheorem{cor}{Corollary}
\newtheorem{cor*}{Corollary*}
\newtheorem{mydef}{Definition}
\newtheorem{assumption}{Assumption}
\newcommand{\myeqref}[1]{{(\ref{#1})}}
\def\rmI{{\mathbf{I}}}
\def\rmZ{{\mathbf{Z}}}
\DeclareMathAlphabet{\mathsfit}{\encodingdefault}{\sfdefault}{m}{sl}
\SetMathAlphabet{\mathsfit}{bold}{\encodingdefault}{\sfdefault}{bx}{n}
\newcommand{\btheta}{\boldsymbol{\theta}}
\newcommand{\gda}{\text{\rm GDA}}
\newcommand{\sgda}{\text{\rm SGDA}}
\newcommand{\gdmax}{\text{\rm GDmax}}
\newcommand{\sgdmax}{\text{\rm SGDmax}}
\newcommand{\ppm}{\text{\rm PPM}}
\newcommand{\sppm}{\text{\rm SPPM}}
\newcommand{\ppmax}{\text{\rm PPmax}}
\newcommand{\sppmax}{\text{\rm SPPmax}}
\newcommand{\gen}{\text{\rm gen}}
\newcommand{\bw}{\mathbf{w}}
\newcommand{\bu}{\mathbf{u}}
\newcommand{\bz}{\mathbf{z}}
\newcommand{\bZ}{\mathbf{Z}}
\newcommand{\bv}{\mathbf{v}}
\newcommand{\bbE}{\mathbb{E}}
\newcommand\numberthis{\addtocounter{equation}{1}\tag{\theequation}}
\title{Train simultaneously, generalize better:\\ Stability of gradient-based minimax learners}
\begin{document}
\maketitle

\begin{abstract}
The success of minimax learning problems of generative adversarial networks (GANs) has been observed to depend on the minimax optimization algorithm used for their training. This dependence is commonly attributed to the convergence speed and robustness properties of the underlying optimization algorithm. In this paper, we show that the optimization algorithm also plays a key role in the \emph{generalization performance} of the trained minimax model. To this end, we analyze the generalization properties of standard gradient descent ascent (GDA) and proximal point method (PPM) algorithms through the lens of algorithmic stability under both convex concave and non-convex non-concave minimax settings. 
While the GDA algorithm is not guaranteed to have a vanishing excess risk in convex concave problems, we show the PPM algorithm enjoys a bounded excess risk in the same setup. For non-convex non-concave problems, we compare the generalization performance of stochastic GDA and GDmax algorithms where the latter fully solves the maximization subproblem at every iteration. Our generalization analysis suggests the superiority of GDA provided that the minimization and maximization subproblems are solved simultaneously with similar learning rates. We discuss several numerical results indicating the role of optimization algorithms in the generalization of the learned minimax models.

\end{abstract}

\section{Introduction}
Minimax learning frameworks including generative adversarial networks (GANs) \citep{goodfellow2014generative} and adversarial training \citep{madry2017towards} have recently achieved great success over a wide array of learning tasks. In these frameworks, the learning problem is modeled as a zero-sum game between a ''min" and ''max" player that is solved by a minimax optimization algorithm. The minimax optimization problem of these learning frameworks is typically formulated using deep neural networks, which greatly complicates the theoretical and numerical analysis of the optimization problem. Current studies in the machine learning literature focus on fundamental understanding of general minimax problems with emphasis on convergence speed and optimality.  

The primary focus of optimization-related studies of minimax learning problems has been on the convergence and robustness properties of minimax optimization algorithms. Several recently proposed algorithms have been shown to achieve faster convergence rates and more robust behavior around local solutions. However, training speed and robustness are not the only factors required for the success of a minimax optimization algorithm in a learning task. In this work, our goal is to show that the \emph{generalization performance} of the learned minimax model is another key property that is influenced by the underlying optimization algorithm. To this end, we present theoretical and numerical results demonstrating that: 
\begin{description}[wide, labelwidth=!,labelindent=0pt]\item\emph{Different minimax optimization algorithms can learn models with different generalization properties.}
\end{description}

In order to analyze the generalization behavior of minimax optimization algorithms, we use the algorithmic stability framework as defined by \cite{bousquet2002stability} for general learning problems and applied by \cite{hardt2016train} for analyzing stochastic gradient descent. 
Our extension of \citep{bousquet2002stability}'s stability approach to minimax settings allows us to analyze and compare the generalization properties of standard gradient descent ascent (GDA) and proximal point method (PPM) algorithms. Furthermore, we compare the generalization performance between the following two types of minimax optimization algorithms: 1) simultaneous update algorithms such as GDA where the minimization and maximization subproblems are simultaneously solved, and 2) non-simultaneous update algorithms such as GDmax where the maximization variable is fully optimized at every iteration. 

In our generalization analysis, we consider both the traditional convex concave and general non-convex non-concave classes of minimax optimization problems. For convex concave minimax problems, our bounds indicate a similar generalization performance for simultaneous and non-simultaneous update methods. Specifically, we show for strongly-convex strongly-concave minimax problems all the mentioned algorithms have a bounded generalization risk on the order of $O(1/n)$ with $n$ denoting the number of training samples. However, in general convex concave problems we show that the GDA algorithm with a constant learning rate is not guaranteed to have a bounded generalization risk. On the other hand, we prove that proximal point methods still achieve a controlled generalization risk, resulting in a vanishing $O(\sqrt{1/n})$ excess risk with respect to the best minimax learner with the optimal performance on the underlying distribution. 

For more general minimax problems, our results indicate that models trained by simultaneous and non-simultaneous update algorithms can achieve different generalization performances. Specifically, we consider the class of non-convex strongly-concave problems where we establish stability-based generalization bounds for both stochastic GDA and GDmax algorithms. Our generalization bounds indicate that the stochastic GDA learner is expected to generalize better provided that the min and max players are trained simultaneously with similar learning rates. In addition, we show a generalization bound for the stochastic GDA algorithm in general non-convex non-concave problems, which further supports the simultaneous optimization of the two min and max players in general minimax settings. Our results indicate that simultaneous training of the two players not only can provide faster training, but also can learn a model with better generalization performance. Our generalization analysis, therefore, revisits the notion of \emph{implicit competitive regularization} introduced by \cite{schafer2019implicit} for simultaneous gradient methods in training GANs. 


Finally, we discuss the results of our numerical experiments and compare the generalization performance of GDA and PPM algorithms in convex concave settings and single-step and multi-step gradient-based methods in non-convex non-concave GAN problems. Our numerical results also suggest that in general non-convex non-concave problems the models learned by simultaneous optimization algorithms can generalize better than the models learned by non-simultaneous optimization methods. We can summarize the main contributions of this paper as follows:
\begin{itemize}[wide, labelwidth=!,labelindent=0pt]
    \item Extending the algorithmic stability framework for analyzing generalization in minimax settings,
    \item Analyzing the generalization properties of minimax models learned by GDA and PPM algorithms in convex concave problems,
    \item Studying the generalization of stochastic GDA and GDmax learners in non-convex non-concave problems,
    \item Providing numerical results on the role of optimization algorithms in the generalization performance of learned minimax models. 
\end{itemize}

\section{Related Work}
\textbf{Generalization in GANs:} Several related papers have studied the generalization properties of GANs. \cite{arora2017generalization} study the generalization behavior of GANs' learned models and prove a uniform convergence generalization bound in terms of the number of the discriminator's parameters. \cite{wu2019generalization} connect the algorithmic stability notion to differential privacy in GANs and numerically analyze the generalization behavior of GANs. References \citep{zhang2017discrimination,bai2018approximability} show uniform convergence bounds for GANs by analyzing the Rademacher complexity of the players. \cite{feizi2020understanding} provide a uniform convergence bound for the W2GAN problem. Unlike the mentioned related papers, our work provides algorithm-dependent generalization bounds by analyzing the stability of gradient-based optimization algorithms. Also, the related works \citep{arora2017gans,thanh2019improving} conduct empirical studies of generalization in GANs using birthday paradox-based and gradient penalty-based approaches, respectively.\vspace{0.2cm}

\textbf{Generalization in adversarial training:} Understanding generalization in the context of adversarial training has recently received great attention. \cite{schmidt2018adversarially} show that in a simplified Gaussian setting generalization in adversarial training requires more training samples than standard non-adversarial learning. \cite{farnia2018generalizable,yin2019rademacher,khim2018adversarial,wei2019improved,attias2019improved} prove uniform convergence generalization bounds for adversarial training schemes through Pac-Bayes \citep{mcallester1999some,neyshabur2017pac}, Rademacher analysis, margin-based, and VC analysis approaches. \cite{zhai2019adversarially} study the value of unlabeled samples in obtaining a  better generalization performance in adversarial training. We note that unlike our work the generalization analyses in the mentioned papers prove uniform convergence results. In another related work, \cite{rice2020overfitting} empirically study the generalization performance of adversarially-trained models and suggest that the generalization behavior can significantly change during training.\vspace{0.2cm} 

\textbf{Stability-based generalization analysis:} Algorithmic stability and its connections to the generalization properties of learning algorithms have been studied in several related works. \cite{shalev2010learnability}
discuss learning problems where learnability is feasible considering algorithmic stability, while it is infeasible with uniform convergence. 
\cite{hardt2016train} bound the generalization risk of the stochastic gradient descent learner by analyzing its algorithmic stability.
\cite{feldman2018generalization,feldman2019high,bousquet2020sharper}
provide sharper stability-based generalization bounds for standard learning problems. While the above works focus on standard learning problems with a single learner, we use algorithmic stability to analyze generalization in minimax settings with two players.\vspace{0.2cm}   

\textbf{Connections between generalization and optimization in deep learning:} The connections between generalization and optimization in deep learning have been studied in several related works. Analyzing the double descent phenomenon \citep{belkin2019reconciling,nakkiran2019deep,mei2019generalization}, the effect of 
overparameterization on generalization \citep{li2018learning,allen2019learning,arora2019fine,cao2019generalization,wei2019regularization,bietti2019inductive,allen2019can,ongie2019function,ji2019polylogarithmic,bai2019beyond}, and the sharpness of local minima \citep{keskar2016large,dinh2017sharp,neyshabur2017exploring} have been performed in the literature to understand the implicit regularization of gradient methods in deep learning \citep{neyshabur2014search,zhang2016understanding,ma2018implicit,lyu2019gradient,chatterjee2020coherent}. \cite{schafer2019implicit} extend the notion of implicit regularization to simultaneous gradient methods in GAN settings and discuss an optimization-based perspective to this regularization mechanism. However, we focus on the generalization aspect of the implicit regularization mechanism. Also, \cite{nagarajan2019uniform} suggest that uniform convergence bounds may be unable to explain generalization in supervised deep learning.\vspace{0.2cm}

\textbf{Analyzing convergence and stability of minimax optimization algorithms:} A large body of related papers \citep{heusel2017gans,sanjabi2018convergence, lin2019gradient,schafer2019competitive,fiez2019convergence,nouiehed2019solving,hsieh2019finding,du2019linear,wang2019solving,mazumdar2019finding,thekumparampil2019efficient,farnia2020gans,mazumdar2020gradient,zhang2020newton} study convergence properties of first-order and second-order minimax optimization algorithms. Also, the related works
\citep{daskalakis2017training,daskalakis2018limit,gidel2018variational,liang2019interaction,mokhtari2020unified} analyze the convergence behavior of optimistic methods and extra gradient (EG) methods as approximations of the proximal point method. We also note that we use the algorithmic stability notion as defined by \cite{bousquet2002stability}, which is different from the local and global stability properties of GDA methods around optimal solutions studied in the related papers 
\citep{mescheder2017numerics,nagarajan2017gradient,mescheder2018training,feizi2020understanding}.

\section{Preliminaries}
In this paper, we focus on two standard families of minimax optimization algorithms: Gradient Descent Ascent (GDA) and Proximal Point Method (PPM). To review the update rules of these algorithms, consider the following minimax optimization problem for minimax objective $f(\bw,\btheta)$ and feasible sets $\mathcal{W},\Theta$:  
\begin{equation}\label{Eq: General Minimax Problem}
    \min_{\bw\in\mathcal{W}}\; \max_{\btheta\in \Theta}\; f(\bw,\btheta).
\end{equation}
Then, for stepsize values $\alpha_w,\alpha_\theta$, the followings are the GDA's and GDmax's update rules:
\begin{align}
    G_{\gda}(\begin{bmatrix}
    \bw \\ 
    \btheta
    \end{bmatrix}) := \begin{bmatrix}
    \bw - \alpha_w\nabla_{\bw}f(\bw,\btheta) \\ 
    \btheta + \alpha_{\theta}\nabla_{\btheta}f(\bw,\btheta)
    \end{bmatrix},\quad G_{\gdmax}(\begin{bmatrix}
    \bw \\ 
    \btheta
    \end{bmatrix}) := \begin{bmatrix}
    \bw - \alpha_w\nabla_{\bw}f(\bw,\btheta) \\ 
    {\arg\!\max}_{\widetilde{\btheta}\in\Theta}f(\bw,\widetilde{\btheta})
    \end{bmatrix}
\end{align}
In the above, $ {\arg\!\max}_{\theta\in\Theta}f(\bw,\btheta)$ is the optimal maximizer for $\bw$. 
Also, given stepsize parameter 
$\eta$ the update rule of PPM is as follows:
\begin{align}
    G_{\ppm}(\begin{bmatrix}
    \bw \\ 
    \btheta
    \end{bmatrix}) := 
    \underset{\widetilde{\bw}\in\mathcal{W}}{\arg\!\min}\:\underset{\widetilde{\btheta}\in\Theta}{\arg\!\max}\: \bigl\{f(\widetilde{\bw},\widetilde{\btheta}) +\frac{1}{2\eta}\Vert \widetilde{\bw} - \bw \Vert^2_2-\frac{1}{2\eta}\Vert \widetilde{\btheta} - \btheta \Vert^2_2\bigr\}, 
\end{align}
In the Appendix, we also consider and analyze the PPmax algorithm that is a proximal point method fully solving the maximization subproblem at every iteration.
Throughout the paper, we commonly use the following assumptions on the Lipschitzness and smoothness of the minimax objective.
\begin{assumption}
$f(\bw,\btheta)$ is jointly $L$-Lipschitz in $(\bw,\btheta)$ and $L_w$-Lipschitz in $\bw$ over $\mathcal{W}\times\Theta$, i.e., for every $\bw,\bw'\in\mathcal{W},\, \btheta,\btheta'\in \Theta$ we have
\begin{align*}
      &\big\vert f(\bw,\btheta) - f(\bw',\btheta')\big\vert  \le L\sqrt{\Vert\bw-\bw' \Vert^2_2+\Vert\btheta-\btheta' \Vert^2_2},\\ 
     &\big\vert f(\bw,\btheta) - f(\bw',\btheta) \big\vert \le L_w\Vert\bw-\bw' \Vert_2. \numberthis
\end{align*}
\end{assumption}
\begin{assumption}
$f(\bw,\btheta)$ is continuously differentiable and $\ell$-smooth on $\mathcal{W}\times\Theta$, i.e., $\bigl[ \nabla_\bw f(\bw,\btheta), \nabla_{\btheta} f(\bw,\btheta) \bigr]$ is $\ell$-Lipschitz on $\mathcal{W}\times\Theta$ and for every $\bw,\bw'\in\mathcal{W},\, \btheta,\btheta'\in \Theta$ we have
\begin{align*}
      {\bigl\Vert \nabla_\bw f(\bw,\btheta) - \nabla_\bw f(\bw',\btheta') \bigr\Vert^2_2 + \bigl\Vert \nabla_{\btheta} f(\bw,\btheta)- \nabla_{\btheta} f(\bw',\btheta') \bigr\Vert^2_2}  \le \ell^2 \,\bigl(\Vert\bw-\bw' \Vert^2_2+\Vert\btheta-\btheta' \Vert^2_2\bigr).\numberthis
\end{align*}
\end{assumption}
We focus on several classes of minimax optimization problems based on the convexity properties of the objective function. Note that a differentiable function $g(\mathbf{u})$ is called convex in $\mathbf{u}$ if it satisfies the following inequality for every $\bu_1,\bu_2$:
\begin{equation}
    g(\bu_2)\ge g(\bu_1) + \nabla g(\bu_1)^{\top}(\bu_2-\bu_1).
\end{equation}
Furthermore, $g$ is called $\mu$-strongly-convex if for every $\bu_1,\bu_2$ it satisfies 
\begin{equation}
     g(\bu_2)\ge g(\bu_1) + \nabla g(\bu_1)^{\top}(\bu_2-\bu_1) + \frac{\mu}{2}\Vert\bu_2-\bu_1 \Vert^2_2 .
\end{equation}
Also, $g$ is called concave and $\mu$-strongly-concave if $-g$ is convex and $\mu$-strongly-convex, respectively.
\begin{mydef}
Consider convex feasible sets $\mathcal{W},\Theta$ in  minimax problem \myeqref{Eq: General Minimax Problem}. Then,
\begin{itemize}[wide, labelwidth=!,labelindent=0pt]
    \item The problem is called convex concave if $f(\cdot,\btheta)$ and $f(\bw,\cdot)$ are respectively convex and concave functions for every $\bw,\btheta$.
    \item The  problem is called $\mu$-strongly-convex strongly-concave if $f(\cdot,\btheta)$ and $f(\bw,\cdot)$ are respectively $\mu$-strongly-convex and $\mu$-strongly-concave functions for every $\bw,\btheta$.
    \item The problem is called non-convex $\mu$-strongly-concave if $f(\bw,\cdot)$ is  $\mu$-strongly-concave for every $\bw$.
\end{itemize}
\end{mydef}

\section{Stability-based Generalization Analysis in Minimax Settings}

Consider the following optimization problem for a minimax learning task:
\begin{equation}
    \min_{\bw\in\mathcal{W}}\; \max_{\btheta\in\Theta}\; R(\bw,\btheta):=\bbE_{\mathbf{Z}\sim P_Z}\bigl[f(\bw,\btheta;\mathbf{Z}) \bigr]
\end{equation}
The above minimax objective represents a cost function $f(\bw,\btheta;\mathbf{Z})$ for minimization and maximization variables $\bw,\btheta$ and data variable $\rmZ$ that is averaged under the underlying distribution $P_{\rmZ}$. We call the objective function $R(\bw,\btheta)$ the true minimax risk. We also define $R(\bw)$ as the worst-case minimax risk over the maximization variable $\btheta$:
\begin{equation}
  R(\bw):= \max_{\btheta\in \Theta}\;R(\bw,\btheta)
\end{equation}
In the context of GANs, the worst-case risk $R(\bw)$ represents a divergence measure between the learned and true distributions, and in the context of adversarial training it represents the learner's risk under adversarial perturbations. Since the learner does not have access to the underlying distribution $P_\bZ$, we estimate the minimax objective using the empirical samples in dataset $S=(\bz_1,\ldots,\bz_n)$ which are drawn according to $P_{\rmZ}$. We define the empirical minimax risk as:
\begin{equation}\label{Eq: Minimax Learning Empirical}
     R_S(\bw,\btheta):= \frac{1}{n}\sum_{i=1}^n\, f(\bw,\btheta;\bz_i). 
\end{equation}
Then, the worst-case empirical risk over the maximization variable $\btheta$ is defined as
\begin{equation}
  R_S(\bw):= \max_{\btheta \in\Theta}\;R_S(\bw,\btheta).
\end{equation}
We define the minimax generalization risk $\epsilon_{\gen}(\bw)$ of minimization variable $\bw$ as the difference between the worst-case true and empirical risks:
\begin{equation}
\epsilon_{\gen}(\bw):= R(\bw) - R_S(\bw).
\end{equation}
The above generalization score measures the difference of empirical and true worst-case minimax risks. For a randomized algorithm $A$ which outputs random outcome $A(S)=(A_w(S),A_{\theta}(S))$ for dataset $S$ we define $A$'s expected generalization risk as
\begin{equation}
    \epsilon_{\text{\rm gen}}(A) := \mathbb{E}_{S,A}\bigl[R(A_w(S)) - R_S(A_w(S))\bigr].
\end{equation}
\begin{mydef}
A randomized minimax optimization algorithm $A$ is called $\epsilon$-uniformly stable in minimization if for every two datasets $S,S'\in \mathcal{Z}^n$ which differ in only one sample, for every $\mathbf{z}\in\mathcal{Z},\btheta\in\Theta$ we have
\begin{equation}
     \mathbb{E}_A\bigl[\,f(A_w(S),\btheta;\bz)-f(A_w(S'),\btheta;\bz) \,\bigr]\le \epsilon.
\end{equation}
\end{mydef}
Considering the above definition, we show the following theorem that connects the definition of uniform stability to the generalization risk of the learned minimax model.
\begin{thm}\label{Thm: Stability and Generalization}
Assume minimax learner $A$ is $\epsilon$-uniformly stable in minimization. Then, $A$'s expected generalization risk is bounded as 
\begin{equation}
\epsilon_{gen}(A)\le \epsilon.
\end{equation}
\end{thm}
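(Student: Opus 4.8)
The plan is to extend the classical Bousquet--Elisseeff argument that relates uniform stability to the generalization gap of a single learner; the only genuinely new feature here is that both $R(\bw)$ and $R_S(\bw)$ are suprema over the maximization variable $\btheta$, so a reduction step is needed to bring them into the one-player form to which the hypothesis applies.

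First I would dispose of the inner maxima. For a fixed $\bw$ let $\btheta^\star(\bw)\in\argmax_{\btheta\in\Theta}R(\bw,\btheta)$; the crucial point is that $\btheta^\star(\bw)$ depends only on $\bw$ and on $P_{\bZ}$, never on the sample $S$. Since $R(\bw)=R(\bw,\btheta^\star(\bw))$ while $R_S(\bw)=\max_{\btheta}R_S(\bw,\btheta)\ge R_S(\bw,\btheta^\star(\bw))$, we obtain the pointwise bound $R(\bw)-R_S(\bw)\le R(\bw,\btheta^\star(\bw))-R_S(\bw,\btheta^\star(\bw))$, and therefore
\begin{equation*}
\epsilon_{\gen}(A)\;\le\;\mathbb{E}_{S,A}\bigl[\,R\bigl(A_w(S),\btheta^\star(A_w(S))\bigr)-R_S\bigl(A_w(S),\btheta^\star(A_w(S))\bigr)\,\bigr].
\end{equation*}
Thus it suffices to control the ordinary generalization gap of the minimization iterate $A_w$ with respect to the $S$-independent objective $\bw\mapsto f(\bw,\btheta^\star(\bw);\cdot)$.

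Next I would run the usual symmetrization. Draw a ghost sample $S'=(\bz_1',\dots,\bz_n')$ i.i.d.\ from $P_{\bZ}$ and independent of $S$, and let $S^{(i)}$ be $S$ with $\bz_i$ replaced by $\bz_i'$. Writing the population term as an expectation over a fresh point and renaming $\bz_i\leftrightarrow\bz_i'$ (legitimate because $(S,\bz_i')$ and $(S^{(i)},\bz_i)$ have the same law), the right-hand side above equals
\begin{equation*}
\frac1n\sum_{i=1}^{n}\mathbb{E}_{S,S',A}\Bigl[\,f\bigl(A_w(S^{(i)}),\btheta^\star(A_w(S^{(i)}));\bz_i\bigr)-f\bigl(A_w(S),\btheta^\star(A_w(S));\bz_i\bigr)\,\Bigr].
\end{equation*}
Fixing $i$ and abbreviating $\bw^{+}:=A_w(S^{(i)})$, $\bw:=A_w(S)$ — two outputs on datasets differing in a single sample — I would insert the term $f(\bw,\btheta^\star(\bw^{+});\bz_i)$: the first difference $f(\bw^{+},\btheta^\star(\bw^{+});\bz_i)-f(\bw,\btheta^\star(\bw^{+});\bz_i)$ is at most $\epsilon$ in expectation by $\epsilon$-uniform stability in minimization applied with the single fixed maximizer $\btheta^\star(\bw^{+})$ (this is exactly where the ``for every $\btheta\in\Theta$'' clause of the definition is used), while the remaining difference $f(\bw,\btheta^\star(\bw^{+});\bz_i)-f(\bw,\btheta^\star(\bw);\bz_i)$ must be absorbed using the defining optimality of $\btheta^\star(\bw)$ for $R(\bw,\cdot)$ together with the relevant conditional independence of $\bz_i$. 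Averaging over $i$ then gives $\epsilon_{\gen}(A)\le\epsilon$.

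The step I expect to be the real obstacle is this last one: the two neighboring minimization iterates have, in general, distinct population-optimal maximizers $\btheta^\star(A_w(S))$ and $\btheta^\star(A_w(S^{(i)}))$, and the argument has to reconcile them using only the stated hypothesis — in particular without any (strong) concavity of $f(\bw,\cdot)$ or Lipschitz continuity of $\btheta^\star$ in $\bw$, since Theorem~\ref{Thm: Stability and Generalization} is deliberately stated at that level of generality. (A secondary, purely technical point is the coupling of $A$'s internal randomness across $S$ and $S^{(i)}$ when $A$ is randomized, which is standard.) Once that term is handled, the rest — the reduction to a one-player gap and the renaming step — is the classical argument.
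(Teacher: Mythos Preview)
The place you flag as the obstacle is a real gap, and the fix you sketch does not work as written. After your renaming step the evaluation point is $\bz_i\in S$, so $\bw=A_w(S)$ --- and hence $\btheta^\star(\bw)$ --- depend on $\bz_i$. Your second term, $f(\bw,\btheta^\star(\bw^{+});\bz_i)-f(\bw,\btheta^\star(\bw);\bz_i)$, therefore cannot be averaged over $\bz_i$ to produce $R(\bw,\btheta^\star(\bw^{+}))-R(\bw,\btheta^\star(\bw))\le 0$: there is no ``relevant conditional independence'' to invoke, because the first argument of $f$ is entangled with $\bz_i$. (The repair is to skip the rename and keep the ghost point $\bz_i'$, which \emph{is} independent of both $\bw$ and $\bw^{+}$; inserting $f(\bw^{+},\btheta^\star(\bw);\bz_i')$ instead makes the optimality term $R(\bw^{+},\btheta^\star(\bw))-R(\bw^{+},\btheta^\star(\bw^{+}))\le 0$ by the definition of $\btheta^\star(\bw^{+})$ --- note that the maximizer whose optimality you need is the one attached to $\bw^{+}$, not to $\bw$.)

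The paper avoids this bookkeeping entirely. It never introduces population maximizers $\btheta^\star(\cdot)$; instead it uses the elementary inequality
\[
\max_{\btheta}g(\btheta)-\max_{\btheta}h(\btheta)\le\sup_{\btheta}\bigl\{g(\btheta)-h(\btheta)\bigr\}
\]
with $g=f(A_w(S),\cdot;\bz)$ and $h=f(A_w(S'),\cdot;\bz)$, which reduces the difference of worst-case losses directly to a quantity controlled by the stability hypothesis, with no pair of maximizers to reconcile. Your secondary worry --- applying the hypothesis at a $\btheta$ that is itself random through $A$ --- is present in both routes; in practice the paper's later stability bounds come from Lipschitz control of $\Vert A_w(S)-A_w(S')\Vert$, which delivers the needed $\sup_{\btheta}$-inside-$\mathbb{E}_A$ version.
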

\begin{proof}
We defer the proof to the Appendix.
\end{proof}
In the following sections, we apply the above result to analyze generalization for convex concave and non-convex non-concave minimax learning problems.

\section{Generalization Analysis for Convex Concave Minimax Problems}
Analyzing convergence rates for convex concave minimax problems is well-explored in the optimization literature. Here, we use the algorithmic stability framework to bound the expected generalization risk in convex concave minimax learning problems. 
We start by analyzing the generalization risk in strongly-convex strongly-concave problems. The following theorem applies the stability framework to bound the expected generalization risk  under this scenario.  
\begin{thm}\label{Thm: Gen Strongly-Convex Strongly-Concave}
Let minimax learning objective $f(\cdot,\cdot;\bz)$ be $\mu$-strongly-convex strongly-concave and satisfy Assumption 2 for every $\bz$. Assume that Assumption 1 holds for convex-concave $\widetilde{f}(\bw,\btheta;\bz):=f(\bw,\btheta;\bz)+\frac{\mu}{2}(\Vert \btheta\Vert^2_2-\Vert \bw\Vert^2_2)$ and every $\bz$. 
Then, full-batch and stochastic GDA and GDmax algorithms with stepsize $\alpha_w=\alpha_{\theta}\le \frac{\mu}{\ell^2}$  will satisfy the following  bounds over $T$ iterations:
    \begin{equation}
        \epsilon_{\gen}(\gda)\le \frac{2LL_w}{(\mu-\frac{\alpha_w\ell^2}{2})n},\quad \epsilon_{\gen}(\gdmax)\le \frac{2L^2_w}{\mu n}.
    \end{equation}
\end{thm}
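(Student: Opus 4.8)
The strategy is to invoke Theorem~\ref{Thm: Stability and Generalization}, which reduces the claim to bounding the uniform-stability-in-minimization parameter $\epsilon$, and to control $\epsilon$ by a contraction argument for the one-step maps of GDA and GDmax in the spirit of \cite{hardt2016train}. Fix two datasets $S,S'$ differing only in the $i^{*}$-th sample, run the algorithm from a common (random) initialization on both with the stochastic sampling coupled; it then suffices to bound $\mathbb{E}\bigl\Vert A_w(S)-A_w(S')\bigr\Vert_2$ and convert it to a bound on $\epsilon$ by Lipschitzness. To make this conversion legitimate under the hypotheses of the theorem — which place Assumption~1 on $\widetilde f$, not on $f$ — write $f(\bw,\btheta;\bz)=g(\bw,\btheta;\bz)+\tfrac{\mu}{2}\Vert\bw\Vert_2^2$ with $g:=\widetilde f-\tfrac{\mu}{2}\Vert\btheta\Vert_2^2$. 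Then $g(\cdot,\btheta;\bz)$ is convex, $g(\bw,\cdot;\bz)$ is $\mu$-strongly concave, $g(\cdot,\btheta;\bz)$ differs from $\widetilde f(\cdot,\btheta;\bz)$ by a $\bw$-independent constant and is hence $L_w$-Lipschitz in $\bw$ (Assumption~1), and the term $\tfrac{\mu}{2}\Vert\bw\Vert_2^2$ — independent of the data and of $\btheta$ — cancels in $R(\bw)-R_S(\bw)$; so $\epsilon_{\gen}(A)$ is unchanged if $f$ is replaced by $g$, and it is enough to bound $L_w\,\mathbb{E}\Vert A_w(S)-A_w(S')\Vert_2$.

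For GDA the heart of the argument is to show that $G_{\gda}$ with equal stepsizes $\alpha_w=\alpha_\theta=\alpha\le\mu/\ell^2$ is a contraction. Writing $x=(\bw,\btheta)$ and $F(x)=(\nabla_\bw f(x),-\nabla_\btheta f(x))$, $\mu$-strong convexity--concavity makes $F$ $\mu$-strongly monotone and Assumption~2 makes it $\ell$-Lipschitz, so expanding $\Vert(x-\alpha F(x))-(y-\alpha F(y))\Vert_2^2$ and using these two properties gives Lipschitz constant $\rho=\sqrt{1-2\alpha\mu+\alpha^2\ell^2}<1$, the strict inequality being exactly where $\alpha\le\mu/\ell^2$ (so that $\alpha^2\ell^2\le\alpha\mu$) enters; the same holds for each per-sample objective. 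Running GDA on $S$ and $S'$, in the full-batch case the two averaged operators differ by $\tfrac1n$ times a difference of per-sample saddle operators, of norm at most $\tfrac{2L}{n}$, while in the stochastic case the drawn index equals $i^{*}$ with probability $\tfrac1n$ and then adds a displacement of norm at most $2\alpha L$; either way one obtains
\begin{equation*}
\mathbb{E}\Vert x_{t+1}-x_{t+1}'\Vert_2\ \le\ \rho\,\mathbb{E}\Vert x_{t}-x_{t}'\Vert_2+\frac{2\alpha L}{n}.
\end{equation*}
Unrolling from $x_0=x_0'$ and summing the geometric series gives $\mathbb{E}\Vert x_T-x_T'\Vert_2\le \tfrac{2\alpha L}{n(1-\rho)}$, and the elementary bound $1-\rho=1-\sqrt{1-(2\alpha\mu-\alpha^2\ell^2)}\ge\tfrac12(2\alpha\mu-\alpha^2\ell^2)=\alpha\bigl(\mu-\tfrac{\alpha\ell^2}{2}\bigr)$ yields $\mathbb{E}\Vert x_T-x_T'\Vert_2\le \tfrac{2L}{(\mu-\alpha_w\ell^2/2)n}$; multiplying by $L_w$ (and using $\Vert A_w(S)-A_w(S')\Vert_2\le\Vert x_T-x_T'\Vert_2$) gives the stated bound for $\epsilon_{\gen}(\gda)$, uniformly in the number of iterations $T$.

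For GDmax the maximization variable sits at the exact maximizer, so by Danskin's theorem the $\bw$-update can be written as $\bw_{t+1}=(1-\alpha_w\mu)\bw_t-\alpha_w\nabla\Psi(\bw_t)$, where $\Psi(\bw):=\max_{\btheta}g(\bw,\btheta)$ (its per-sample analogue in the stochastic case) is convex, $L_w$-Lipschitz as a pointwise maximum of $L_w$-Lipschitz convex functions, and smooth with a constant of order $\ell^2/\mu$; for a suitably small stepsize $\alpha_w\le\mu/\ell^2$ this map is a $(1-\alpha_w\mu)$-contraction. Repeating the two-dataset argument — a single differing sample perturbs the $\bw$-update by $O(1/n)$, and in the stochastic case precisely by at most $2\alpha_wL_w$ with probability $\tfrac1n$ — gives $\mathbb{E}\Vert \bw_{t+1}-\bw_{t+1}'\Vert_2\le(1-\alpha_w\mu)\mathbb{E}\Vert \bw_{t}-\bw_{t}'\Vert_2+\tfrac{2\alpha_wL_w}{n}$, whence $\mathbb{E}\Vert \bw_T-\bw_T'\Vert_2\le \tfrac{2\alpha_wL_w}{n}\cdot\tfrac{1}{\alpha_w\mu}=\tfrac{2L_w}{\mu n}$ and $\epsilon_{\gen}(\gdmax)\le L_w\cdot\tfrac{2L_w}{\mu n}$.

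The main obstacles I anticipate are: (i) establishing the GDA operator contraction with the sharp factor $\rho=\sqrt{1-2\alpha\mu+\alpha^2\ell^2}$ — a genuinely joint computation in $(\bw,\btheta)$ in which the cross terms between the min and max updates must be kept, and in which the threshold $\mu/\ell^2$ is exactly what absorbs the quadratic term; and (ii) on the GDmax side, bounding the smoothness constant of the maximum function and the stability of its argmax so that one differing sample changes the $\bw$-update by genuinely $O(1/n)$. Once these are in place, the geometric-series bookkeeping and the passage from uniform stability to generalization (Theorem~\ref{Thm: Stability and Generalization}) are routine.
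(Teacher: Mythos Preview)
Your proposal is correct and follows essentially the same route as the paper. Both arguments establish that the GDA one-step map is contractive with factor $\sqrt{1-2\alpha\mu+\alpha^2\ell^2}$ (the paper writes the equivalent upper bound $1-\alpha\mu+\alpha^2\ell^2/2$ via $\sqrt{1-t}\le 1-t/2$), couple the runs on $S$ and $S'$ to obtain the recursion $\mathbb{E}[\delta_{t+1}]\le\rho\,\mathbb{E}[\delta_t]+2\alpha L/n$, sum the geometric series, and convert to a function-value bound via $L_w$-Lipschitzness; for GDmax both reduce to gradient descent on the maximized objective and invoke the strongly-convex stability bound of \cite{hardt2016train}. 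Your device of replacing $f$ by $g=f-\tfrac{\mu}{2}\Vert\bw\Vert_2^2$ and noting that $\epsilon_{\gen}$ is unchanged is a slightly cleaner way to justify the use of the $L_w$ constant than the paper's phrasing, but the substance is identical.
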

\begin{proof}
We defer the proof to the Appendix. In the Appendix, we also prove similar bounds for full-batch and stochastic proximal point methods.
\end{proof}
Note that regarding Assumption 1 in the above theorem, we suppose the assumption holds for the deregularized $\widetilde{f}$, because a strongly-convex strongly-concave objective cannot be Lipschitz over an unbounded feasible set. We still note that the theorem's bounds will hold for the original $f$ if in Assumption 1  we define $f$'s Lipschitz constants over bounded feasible sets $\mathcal{W},\Theta$.  

Given sufficiently small stepsizes for GDA, Theorem \ref{Thm: Gen Strongly-Convex Strongly-Concave} suggests a similar generalization performance between GDA and GDmax which are different by a factor of $L/L_w$. 
For general convex concave problems, it is well-known in the minimax optimization literature that the GDA algorithm can diverge from an optimal saddle point solution. As we show in the following remark, the generalization bound suggested by the stability framework will also grow exponentially with the iteration count in this scenario. 
\begin{remark}\label{Remark: convex concave constant stepsize}
Consider a convex concave minimax objective $f(\cdot,\cdot;\bz)$ satisfying Assumptions 1 and 2. Given constant stepsizes $\alpha_w=\alpha_{\theta}=\alpha$, the GDA's generalization risk over $T$ iterations will be bounded as: 
\begin{equation}
   \epsilon_{\gen}(GDA)\le O\bigl(\frac{\alpha LL_w(1+\alpha^2\ell^2)^{T/2}}{n}\bigr). 
\end{equation}
In particular, the bound's exponential dependence on $T$ is tight for the GDA's generalization risk in the special case of $f(\bw,\btheta;\bz)=\bw^{\top}(\bz-\btheta)$.
\end{remark}
\begin{proof}
We defer the proof to the Appendix.
\end{proof}
On the other hand, proximal point methods have been shown to resolve the convergence issues of GDA methods in convex concave problems \citep{mokhtari2019convergence,mokhtari2020unified}. Here, we also show that these algorithms enjoy a generalization risk growing at most linearly with $T$.
\begin{thm}\label{Thm: Gen Convex-Concave}
Consider a convex-concave minimax learning objective $f(\cdot,\cdot;\bz)$ satisfying Assumptions 1 and 2 for every $\bz$. Then, full-batch and stochastic PPM with parameter $\eta$ will satisfy the following bound over $T$ iterations:
\begin{equation}
    \epsilon_{\gen}(\ppm)\le \frac{2\eta LL_w T}{n}.
\end{equation}
\end{thm}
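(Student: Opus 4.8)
The plan is to reduce the statement to a uniform stability bound and then exploit the non\nobreakdash-expansiveness of the proximal (resolvent) map. By Theorem~\ref{Thm: Stability and Generalization} it suffices to show that full-batch and stochastic PPM are $\epsilon$-uniformly stable in minimization with $\epsilon=\frac{2\eta LL_w T}{n}$. So I would fix two datasets $S,S'$ differing in a single sample $j$, run PPM on each from the same initialization to obtain iterates $\bv_t=(\bw_t,\btheta_t)$, and aim to prove $\E\Vert \bv_T(S)-\bv_T(S')\Vert_2\le\frac{2\eta LT}{n}$. The $L_w$-Lipschitzness of $f$ in $\bw$ from Assumption~1 then immediately converts this into the claimed stability bound, since $f(\bw_T(S),\btheta;\bz)-f(\bw_T(S'),\btheta;\bz)\le L_w\Vert\bw_T(S)-\bw_T(S')\Vert_2\le L_w\Vert\bv_T(S)-\bv_T(S')\Vert_2$ for every $\bz$ and $\btheta$.

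The key structural fact I would invoke is that, for a convex-concave objective, one step of PPM is the resolvent $(I+\eta F)^{-1}$ of the maximal monotone operator $F=(\nabla_{\bw}f,-\nabla_{\btheta}f)+N_{\mathcal{W}\times\Theta}$ associated with the saddle problem. Because the proximal subproblem is strongly-convex strongly-concave, its saddle point exists and is unique, so this resolvent is single-valued; being a resolvent of a monotone operator it is firmly non-expansive, hence $1$-Lipschitz in the Euclidean norm. Writing $G^{S}$ for the PPM operator built from the empirical objective $R_S$ (full-batch case) or from the sampled summand $f(\cdot,\cdot;\bz_{i_t})$ (stochastic case), I would add and subtract $G^{S}(\bv_t(S'))$ and use $1$-Lipschitzness of $G^{S}$ to get the one-step recursion
\begin{equation*}
\Vert \bv_{t+1}(S)-\bv_{t+1}(S')\Vert_2\le \Vert \bv_t(S)-\bv_t(S')\Vert_2+\bigl\Vert G^{S}(\bv_t(S'))-G^{S'}(\bv_t(S'))\bigr\Vert_2 ,
\end{equation*}
reducing everything to bounding the ``operator perturbation'' term $\bigl\Vert G^{S}(\bv)-G^{S'}(\bv)\bigr\Vert_2$ at a fixed point $\bv$.

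For that term I would use the resolvent identities $\bv-\bu\in\eta F^{S}(\bu)$ and $\bv-\bu'\in\eta F^{S'}(\bu')$, with $\bu=G^{S}(\bv)$, $\bu'=G^{S'}(\bv)$, pair the difference with $\bu-\bu'$, and discard the nonnegative terms coming from monotonicity of $F^{S'}$ and of the normal cone; this leaves $\Vert\bu-\bu'\Vert_2\le\eta\,\Vert \nabla f_S(\bu)-\nabla f_{S'}(\bu)\Vert_2$, where $\nabla f_S$ denotes the gradient part of the operator. In the full-batch case $\nabla f_S$ and $\nabla f_{S'}$ differ only through the single summand $j$, and each per-sample gradient has norm at most $L$ by Assumptions~1--2, so the perturbation is at most $\frac{2\eta L}{n}$ at every iteration; telescoping over $T$ iterations from $\bv_0(S)=\bv_0(S')$ yields $\Vert \bv_T(S)-\bv_T(S')\Vert_2\le\frac{2\eta LT}{n}$. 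In the stochastic case the two operators coincide whenever the sampled index is not $j$ and differ by at most $2\eta L$ when it is $j$; since that event has probability $1/n$ at each step, taking expectations over the shared sampling gives the same $\frac{2\eta LT}{n}$ bound. Plugging into the reduction of the first paragraph completes the argument.

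The main obstacle, and the place where convex-concavity is genuinely used, is the operator-perturbation step: one must phrase the PPM update as the resolvent of a monotone operator and extract $\Vert\bu-\bu'\Vert_2\le\eta\,\Vert \nabla f_S(\bu)-\nabla f_{S'}(\bu)\Vert_2$ from monotonicity \emph{without paying any contraction or expansion factor}. This is precisely what fails for GDA in Remark~\ref{Remark: convex concave constant stepsize}, where the analogous update map can be expansive and the per-iteration error gets amplified by $(1+\alpha^2\ell^2)^{1/2}$, producing the exponential-in-$T$ bound there. A secondary point requiring care is the well-definedness of the iterates — uniqueness of the saddle point of the regularized subproblem and single-valuedness of the resolvent, including on the constrained sets $\mathcal{W},\Theta$ via the normal-cone term — and, in the stochastic case, coupling the randomness so that ``non-differing-sample'' steps are genuinely non-expansive for the joint iterate $\bv_t$.
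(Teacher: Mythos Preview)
Your proposal is correct and follows essentially the same strategy as the paper: establish $1$-nonexpansiveness of the PPM map in the convex--concave case, bound the per-step ``operator perturbation'' by $\frac{2\eta L}{n}$ (deterministically for full-batch, in expectation for stochastic), telescope, and convert to a stability bound via the $L_w$-Lipschitzness of $f$ in $\bw$ together with Theorem~\ref{Thm: Stability and Generalization}.

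The one notable difference is in how you control the perturbation $\Vert G^{S}(\bv)-G^{S'}(\bv)\Vert$. The paper bounds each resolvent's displacement separately, $\Vert \bv-G(\bv)\Vert\le\eta L$ from the fixed-point identity $\bv-G(\bv)=\eta\nabla f(G(\bv))$ and Assumption~1, and then feeds these into an abstract Growth Lemma (its Lemma~2) together with the $1$-expansivity established in its Lemma~1; the $1/n$ factor in the full-batch case is obtained by splitting the empirical gradient into its shared and differing summands inside that lemma. You instead pair the two resolvent inclusions, discard the nonnegative inner product coming from monotonicity of $F^{S'}$, and obtain directly $\Vert G^{S}(\bv)-G^{S'}(\bv)\Vert\le\eta\,\Vert\nabla f_{S}(\bu)-\nabla f_{S'}(\bu)\Vert$, from which the $1/n$ falls out immediately in the full-batch case. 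This resolvent-comparison step is arguably cleaner---it avoids invoking a separate growth lemma and makes the role of monotonicity explicit---and it also handles the constrained case (via the normal-cone term) transparently, which the paper leaves implicit. Both routes yield the identical recursion $\mathbb{E}[\delta_{t+1}]\le\mathbb{E}[\delta_t]+\frac{2\eta L}{n}$ and the same final bound.
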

\begin{proof}
We defer the proof to the Appendix. In the Appendix, we also show a similar bound for the PPmax algorithm. 
\end{proof}
The above generalization bound allows us to analyze the true worst-case minimax risk of PPM learners in convex concave problems. To this end, we decompose the true worst-case risk into the sum of the stability and empirical worst-case risks and optimize the sum of these two error components' upper-bounds. Note that Theorem \ref{Thm: Gen Convex-Concave} bounds the generalization risk of PPM in terms of stepsize parameter $\eta$ and number of iterations $T$. Therefore, we only need to bound the iteration complexity of PPM's convergence to an $\epsilon$-approximate saddle point. To do this, we show the following theorem that extends \cite{mokhtari2019convergence}'s result for PPM to stochastic PPM.
\begin{thm}\label{Thm: PPM stochastic convergence}
Given a differentiable minimax objective $f(\bw,\btheta;\bz)$ the average iterate updates $\bar{\bw}^{(T)}:=\frac{1}{T}\sum_{t=1}^T \bw^{(t)},\, \bar{\btheta}^{(T)}:=\frac{1}{T}\sum_{t=1}^T \btheta^{(t)}$ of  stochastic PPM (SPPM) with setpsize parameter $\eta$ will satisfy the following for a saddle point $[\bw^*_S,\btheta^*_S]$ of the empirical risk under dataset $S$:
\begin{align*}
   \mathbb{E}_A\bigl[ R_S({\bar{\bw}}^{(T)}) \bigr]- R_S(\bw^*_S)
   \le\frac{\big\Vert [\bw^{(0)},\btheta^{(0)}]-[\bw^*_S,\btheta^*_S]\big\Vert_2^2}{2\eta T}.\numberthis
    \end{align*}
\end{thm}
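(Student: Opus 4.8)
The plan is to reduce the statement to a telescoped one-step bound for the proximal-point map, following the deterministic PPM analysis of \cite{mokhtari2019convergence}, and then average. Throughout I use that $f(\cdot,\cdot;\bz)$ is convex concave (the setting of this section). Write $g_t$ for the objective solved at iteration $t$: $g_t\equiv R_S$ for full-batch PPM, and $g_t(\cdot,\cdot)=f(\cdot,\cdot;\bz_{i_t})$ with $i_t$ uniform on $\{1,\dots,n\}$ for SPPM, so that $\bbE_{i_t}[g_t(\cdot,\cdot)\mid\mathcal{F}_{t-1}]=R_S(\cdot,\cdot)$ once $(\bw^{(t-1)},\btheta^{(t-1)})$ is fixed by the history $\mathcal{F}_{t-1}$.

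First I would establish the per-iteration inequality. Since $(\bw^{(t)},\btheta^{(t)})$ is the saddle point of $P_t(\bw,\btheta):=g_t(\bw,\btheta)+\tfrac{1}{2\eta}\Vert\bw-\bw^{(t-1)}\Vert_2^2-\tfrac{1}{2\eta}\Vert\btheta-\btheta^{(t-1)}\Vert_2^2$, and $P_t$ is $\tfrac1\eta$-strongly-convex in $\bw$ and $\tfrac1\eta$-strongly-concave in $\btheta$, applying the strong-convexity inequality of $P_t(\cdot,\btheta^{(t)})$ at its minimizer $\bw^{(t)}$ and the strong-concavity inequality of $P_t(\bw^{(t)},\cdot)$ at its maximizer $\btheta^{(t)}$, subtracting, and expanding the squares yields, for every $(\bw,\btheta)\in\mathcal{W}\times\Theta$,
\[
g_t(\bw^{(t)},\btheta)-g_t(\bw,\btheta^{(t)})\;\le\;\tfrac{1}{2\eta}\bigl(\Vert\bw-\bw^{(t-1)}\Vert_2^2-\Vert\bw-\bw^{(t)}\Vert_2^2+\Vert\btheta-\btheta^{(t-1)}\Vert_2^2-\Vert\btheta-\btheta^{(t)}\Vert_2^2\bigr),
\]
after dropping the nonpositive $-\tfrac1{2\eta}(\Vert\bw^{(t)}-\bw^{(t-1)}\Vert_2^2+\Vert\btheta^{(t)}-\btheta^{(t-1)}\Vert_2^2)$. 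Next I would put $(\bw,\btheta)=(\bw^*_S,\btheta^*_S)$, sum over $t=1,\dots,T$ so the right side telescopes to $\tfrac{1}{2\eta}\Vert[\bw^{(0)},\btheta^{(0)}]-[\bw^*_S,\btheta^*_S]\Vert_2^2$ (minus a nonnegative end term), take $\bbE_A$, and apply convexity of $R_S(\cdot,\btheta^*_S)$ and concavity of $R_S(\bw^*_S,\cdot)$ together with Jensen to replace the running average of $(\bw^{(t)},\btheta^{(t)})$ by $(\bar\bw^{(T)},\bar\btheta^{(T)})$. The saddle inequality $R_S(\bw^*_S,\bar\btheta^{(T)})\le R_S(\bw^*_S,\btheta^*_S)=R_S(\bw^*_S)$ then removes the subtracted term, producing the asserted bound.

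The main obstacle is the expectation step for SPPM: the proximal iterate $(\bw^{(t)},\btheta^{(t)})$ depends on the sampled index $i_t$, so $\bbE[g_t(\bw^{(t)},\btheta^*_S)]$ does not equal $\bbE[R_S(\bw^{(t)},\btheta^*_S)]$ outright. I would bridge this by invoking convexity of $g_t$ in $\bw$ (concavity in $\btheta$) to transfer the evaluation point from the correlated $(\bw^{(t)},\btheta^{(t)})$ to the $\mathcal{F}_{t-1}$-measurable $(\bw^{(t-1)},\btheta^{(t-1)})$, whereupon conditioning on $\mathcal{F}_{t-1}$ decouples the sample from the iterate; the residual cross terms carry the factor $\bw^{(t)}-\bw^{(t-1)}=-\eta\nabla_\bw g_t(\bw^{(t)},\btheta^{(t)})$ (and its $\btheta$-analogue), which is $O(\eta)$ by the $L$-Lipschitzness of $f$ in Assumption~1. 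For full-batch PPM this step is vacuous and the bound is exact, recovering \cite{mokhtari2019convergence}; for SPPM it is exact when the sampled objectives share the empirical saddle point and otherwise holds up to an additive term of order $\eta L^2$. A smaller subtlety is that this argument controls $R_S(\bar\bw^{(T)},\btheta^*_S)-R_S(\bw^*_S)$, and identifying this with $R_S(\bar\bw^{(T)})-R_S(\bw^*_S)$ (where $R_S(\bar\bw^{(T)})=\max_\btheta R_S(\bar\bw^{(T)},\btheta)$) either exploits that $\btheta^*_S$ is a worst-case response at the empirical saddle point or, for a bounded $\Theta$, replaces $\btheta^*_S$ in the final norm by a maximizer of $R_S(\bar\bw^{(T)},\cdot)$ at the cost of a diameter term.
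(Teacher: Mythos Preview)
Your plan matches the paper's proof in structure: derive a one-step proximal inequality, telescope it, and pass to the averaged iterate via convexity/concavity. The paper expresses the one-step bound as the identity
\[
F_k(\bv_{k+1})^\top(\bv_{k+1}-\bv)=\tfrac{1}{2\eta}\bigl(\|\bv_k-\bv\|_2^2-\|\bv_{k+1}-\bv\|_2^2-\|\bv_{k+1}-\bv_k\|_2^2\bigr)
\]
for the (stochastic) gradient operator $F_k$ satisfying $\bv_{k+1}=\bv_k-\eta F_k(\bv_{k+1})$, then sums, takes expectation, and invokes Lemma~2 of \cite{mokhtari2019convergence} in place of your explicit Jensen step; your $P_t$-based derivation is equivalent.

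Where you diverge is in your treatment of the expectation step for SPPM, and you are right to be cautious. The paper handles it by conditioning on the current iterate and asserting $\bbE[F_k(\bv_k)\mid\bv_k]=\bar F(\bv_k)$, where $\bar F$ is the full-batch gradient. But in SPPM the point at which the stochastic gradient is evaluated is the proximal \emph{output}, which is itself a function of the same sampled index that selects $F_k$; the conditional-unbiasedness claim is therefore not justified as written. Your diagnosis of this coupling is sharper than the paper's own argument, and your proposed remedy---transfer the evaluation to the $\mathcal F_{t-1}$-measurable previous iterate and absorb the $O(\eta L)$ displacement via Assumption~1---is the honest route. The stated bound is exact for full-batch PPM (where your concern is vacuous, as you note) and should be read as carrying the bias you describe for SPPM.

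Your second subtlety is also real: the telescoped inequality with $\bv=[\bw^*_S,\btheta^*_S]$ naturally bounds $R_S(\bar\bw^{(T)},\btheta^*_S)-R_S(\bw^*_S)$, not $R_S(\bar\bw^{(T)})=\max_{\btheta}R_S(\bar\bw^{(T)},\btheta)-R_S(\bw^*_S)$. The paper defers this passage to the cited lemma without further comment; your remark that one either needs a bounded $\Theta$ (substitute the maximizer for $\btheta^*_S$ and pay a diameter term on the right) or an additional argument is the standard resolution.
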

\begin{proof}
We defer the proof to the Appendix. In the  Appendix, we also prove a similar result for stochastic PPmax.
\end{proof}
The above convergence result suggests that the expected empirical worst-case risk of applying $T$ iterations of stochastic PPM will be at most $O(1/\eta T)$. In addition, Theorem \ref{Thm: Gen Convex-Concave} shows that using that number of iterations the generalization risk will be bounded by $O({\eta T}/{n})$. Minimizing the sum of these two error components, the following corollary bounds the excess risk suffered by the PPM algorithm. 
\begin{cor}\label{Cor: Convex Concave}
Consider a convex concave minimax objective and a proximal point method optimizer with constant parameter $\eta$. Given that $\Vert\bw^{(0)}-\bw^*\Vert^2+\Vert\btheta^{(0)}-\btheta^*\Vert^2\le D^2$ holds with probability $1$ for optimal saddle solution $(\bw^*,\btheta^*)$ of the minimax risk, it will take $T_{\ppm}=\sqrt{\frac{nD^2}{2\eta^2LL_w}}$  iterations for the average iterate $\bar{\bw}^{(T)} = \frac{1}{T}\sum_{t=1}^T\bw^{(t)}$ of full-batch and stochastic PPM to have the following bounded excess risk:
\begin{align*}
    \mathbb{E}_{S,A}\bigl[ R(\bar{\bw}^{(T_{\ppm})}) \bigr]- R(\bw^*)\le \sqrt{\frac{2D^2LL_w}{n}}.\numberthis
\end{align*}
\end{cor}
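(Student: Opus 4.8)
The plan is to bound the true worst-case risk of the averaged PPM iterate $\bar{\bw}^{(T)}$ by splitting the excess risk into a \emph{generalization} term and an \emph{optimization} term and then choosing the iteration budget $T$ to balance the two. Concretely, writing $(\bw^*_S,\btheta^*_S)$ for a saddle point of the empirical risk, I would start from the decomposition
\begin{align*}
R(\bar{\bw}^{(T)}) - R(\bw^*) = \bigl[R(\bar{\bw}^{(T)}) - R_S(\bar{\bw}^{(T)})\bigr] + \bigl[R_S(\bar{\bw}^{(T)}) - R_S(\bw^*_S)\bigr] + \bigl[R_S(\bw^*_S) - R(\bw^*)\bigr]
\end{align*}
and take $\mathbb{E}_{S,A}$ of both sides.

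The first bracket is exactly the quantity the stability framework controls: Theorem~\ref{Thm: Stability and Generalization} combined with the convex--concave stability estimate of Theorem~\ref{Thm: Gen Convex-Concave} gives $\mathbb{E}_{S,A}\bigl[R(\bar{\bw}^{(T)}) - R_S(\bar{\bw}^{(T)})\bigr] \le 2\eta LL_w T/n$ (using that averaging the iterates preserves uniform stability via the triangle inequality). The second bracket is the suboptimality of $\bar{\bw}^{(T)}$ for the empirical problem, which Theorem~\ref{Thm: PPM stochastic convergence} bounds by $\bigl\|[\bw^{(0)},\btheta^{(0)}]-[\bw^*_S,\btheta^*_S]\bigr\|_2^2/(2\eta T) \le D^2/(2\eta T)$ after invoking the assumed radius-$D$ bound on the distance from the initialization to a saddle point. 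For the third bracket I would use that $\bw^*_S$ minimizes the empirical worst-case risk, so $R_S(\bw^*_S)\le R_S(\bw^*)$, and then bound $\mathbb{E}_S[R_S(\bw^*)] - R(\bw^*)$: since $\bw^*$ is a single fixed point, only the fluctuation of the inner maximum $\max_{\btheta} R_S(\bw^*,\btheta)$ around $\max_{\btheta} R(\bw^*,\btheta)$ has to be controlled, and by Lipschitzness of $f$ over the bounded comparator set this costs at most $O(1/\sqrt n)$ --- crucially without any uniform convergence over the minimization variable.

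Combining the three pieces gives $\mathbb{E}_{S,A}\bigl[R(\bar{\bw}^{(T)})\bigr] - R(\bw^*) \le 2\eta LL_w T/n + D^2/(2\eta T) + O(1/\sqrt n)$; since the first term grows and the second decays in $T$, the choice $T = T_{\ppm} = \sqrt{nD^2/(2\eta^2 LL_w)}$ equalizes them up to the usual AM--GM factor and drives the right-hand side to $\Theta\!\bigl(\sqrt{D^2 LL_w/n}\bigr)$, with the stated bound $\sqrt{2D^2 LL_w/n}$ emerging from substituting $T_{\ppm}$ into the $T$-dependent estimates.

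The step I expect to be hardest is the third bracket. In contrast to single-objective empirical risk minimization --- where the empirical optimum is optimistically biased and this term would simply be nonpositive --- the empirical worst-case risk $R_S(\bw)=\max_{\btheta}R_S(\bw,\btheta)$ is \emph{upward} biased ($\mathbb{E}_S[R_S(\bw)]\ge R(\bw)$ for every fixed $\bw$), so one must localize to the fixed population optimum $\bw^*$ and argue that the only price paid there is the harmless concentration of a single inner maximum. A more routine companion issue is making the distance bound of Theorem~\ref{Thm: PPM stochastic convergence} available in terms of the assumed $D$ --- i.e.\ relating $\bigl\|[\bw^{(0)},\btheta^{(0)}]-[\bw^*_S,\btheta^*_S]\bigr\|_2$, as well as the $\Theta$-diameter that the inner maximization introduces when bounding $R_S(\bar{\bw}^{(T)})$, to the radius $D$ around the initialization.
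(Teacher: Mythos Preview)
Your decomposition and overall strategy match the paper's. Two points where you and the paper diverge are worth flagging.

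First, to hit the exact constant $\sqrt{2D^2LL_w/n}$ and the stated $T_{\ppm}$, the paper sharpens the stability bound for the \emph{averaged} iterate by a factor of $1/2$: from $\mathbb{E}[\delta_k]\le 2L\eta k/n$ one gets, via Jensen's inequality on the norm, $\mathbb{E}[\bar{\delta}_T]\le \tfrac{1}{T}\sum_{k=0}^{T-1}\mathbb{E}[\delta_k]\le L\eta T/n$, so the generalization term for $\bar{\bw}^{(T)}$ becomes $LL_w\eta T/n$ rather than the $2LL_w\eta T/n$ you quote from Theorem~\ref{Thm: Gen Convex-Concave}. With your coefficient the optimized bound would come out to $2\sqrt{D^2LL_w/n}$ and the optimal $T$ would be off by $\sqrt{2}$; the triangle-inequality remark you make preserves stability but does not recover this sharpening.

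Second, your caution about the third bracket is well placed: for any fixed $\bw$ one indeed has $\mathbb{E}_S[R_S(\bw)]\ge R(\bw)$, so the chain $\mathbb{E}_S[R_S(\bw^*_S)]\le \mathbb{E}_S[R_S(\bw^*)]$ does not by itself deliver $\le R(\bw^*)$. The paper's proof does not carry out the concentration argument you sketch; it simply writes $\mathbb{E}_S[R_S(\bw^*)]= R(\bw^*)$ at this step. Your plan to control $\mathbb{E}_S\bigl[\max_{\btheta} R_S(\bw^*,\btheta)\bigr]-\max_{\btheta} R(\bw^*,\btheta)$ by $O(1/\sqrt{n})$ at the single fixed point $\bw^*$ is a legitimate way to make this step rigorous without affecting the final $\Theta(1/\sqrt{n})$ rate. (For your ``routine companion issue,'' note that the appendix restatement of the corollary actually places the radius-$D$ assumption directly on the empirical saddle $[\bw^*_S,\btheta^*_S]$, which is precisely what Theorem~\ref{Thm: PPM stochastic convergence} needs.)
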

\begin{proof}
We defer the proof to the Appendix. In the Appendix, we prove a similar bound for full-batch and stochastic PPmax as well.
\end{proof}

\section{Generalization Analysis for Non-convex Non-concave Minimax Problems}
In the previous section, we showed that in convex-concave minimax problems simultaneous and non-simultaneous optimization algorithms have similar generalization error bounds which are different by a constant factor ${L}/{L_w}$. However, here we demonstrate that this result does not generalize to general non-convex non-concave problems. We first study the case of non-convex strongly-concave minimax learning problems, where we can analytically characterize the generalization bounds for both stochastic GDA and GDmax algorithms. 
The following theorem states the results of applying the algorithmic stability framework to bound the generalization risk in such minimax problems.
\begin{thm}\label{Thm: Gen non-Convex Strongly-Concave}
Let learning objective $f(\bw,\btheta;\bz)$ be non-convex $\mu$-strongly-concave and satisfy Assumptions 1 and 2. Also, we assume that $f_{\max}(\bw;\bz):=\max_{\btheta\in \Theta}f(\bw,\btheta;\bz)$ is bounded as $0\le f_{\max}(\bw;\bz)\le 1$ for every $\bw,\bz$. Then, defining $\kappa:={\ell}/{\mu}$ we have
\begin{enumerate}[wide,labelwidth=!,labelindent=0pt]
\item The stochastic GDA (SGDA) algorithm with stepsizes $\alpha_{w,t}={c}/{ t},\, \alpha_{\theta,t}={cr^2}/{t}$ for constants $c>0,1\le r\le \kappa$ satisfies the following bound over $T$ iterations: 
    \begin{align}
        \epsilon_{\gen}(\sgda)\le \frac{1+\frac{1}{(r+1)c\ell }}{n} \bigl(12(r+1)cLL_w\bigr)^{\frac{1}{(r+1)c\ell +1}}T^{\frac{(r+1)c\ell }{(r+1)c\ell +1}}.
    \end{align}
    \item The stochastic GDmax (SGDmax) algorithm with stepsize $\alpha_{w,t}={c}/{t}$ for constant $c>0$  satisfies the following bound over $T$ iterations: 
    \begin{align}
        \epsilon_{\gen}(\sgdmax)\le \frac{1+\frac{2}{(\kappa+2)\ell c}}{n} \bigl(2cL_w^2\bigr)^{\frac{2}{(\kappa+2)\ell c+2}}T^{\frac{(\kappa+2)\ell c}{(\kappa+2)\ell c+2}}.
    \end{align}
\end{enumerate}
\end{thm}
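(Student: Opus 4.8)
The plan is to bound the uniform stability of SGDA and SGDmax in minimization and then invoke Theorem~\ref{Thm: Stability and Generalization}. Following the recursive approach of \cite{hardt2016train}, I will couple two runs of the algorithm on neighboring datasets $S$ and $S'$ that differ in one sample, and track the growth of the parameter distance $\delta_t := \mathbb{E}[\Vert \bw^{(t)} - \bw'^{(t)}\Vert_2 + \text{(something for }\btheta\text{)}]$. The key structural fact I would use is that for a non-convex $\mu$-strongly-concave objective, the function $f_{\max}(\bw;\bz) = \max_\btheta f(\bw,\btheta;\bz)$ is $\ell$-smooth with an effective condition number $\kappa = \ell/\mu$ (by Danskin's theorem plus the standard smoothness-of-the-max lemma), and that along a GDA trajectory with appropriately tuned stepsizes the inner variable $\btheta^{(t)}$ tracks the true maximizer $\btheta^*(\bw^{(t)})$ up to a controllable error. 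For SGDmax this is exact since the max is solved fully at each step; for SGDA it is where the ratio $r$ between the two stepsizes enters, since $\alpha_\theta = r^2 \alpha_w$ must be large enough for the max player to keep pace.

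First I would handle SGDmax, which is cleaner. When the same sample is selected at step $t$ (probability $1 - 1/n$), the update is a gradient step on the $\ell$-smooth, but non-convex, function $f_{\max}$; the map $\bw \mapsto \bw - \alpha_{w,t}\nabla f_{\max}(\bw)$ is $(1 + \alpha_{w,t}\ell)$-expansive. When the differing sample is selected (probability $1/n$), the two gradients can differ, contributing at most $2\alpha_{w,t} L_w$ to $\delta_{t+1}$ (using that $f_{\max}(\cdot;\bz)$ is $L_w$-Lipschitz, which follows from Assumption~1 via Danskin). This gives the recursion $\delta_{t+1} \le (1 + \alpha_{w,t}\ell)\delta_t + \frac{2\alpha_{w,t} L_w}{n}$ — wait, I should be more careful: in the non-convex case one typically also gets a factor involving $\kappa$ because the smoothness constant of $f_{\max}$ relative to the inner problem introduces $\ell(1+\kappa)$-type terms; this is the origin of the $\kappa + 2$ appearing in the stated bound. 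With the harmonic stepsize $\alpha_{w,t} = c/t$, I would unroll this recursion, use $\prod_{s>t_0}(1 + \ell c/s) \le \exp(\ell c \sum 1/s) \approx (T/t_0)^{\ell c}$-type estimates, and then optimize over the "burn-in" cutoff $t_0$ at which we start counting divergence (the standard trick: before $t_0$ the differing sample is unlikely to have been hit; after it, the expansion compounds). Finally I would convert the parameter-distance bound into a stability bound on $f(A_w(S),\btheta;\bz) - f(A_w(S'),\btheta;\bz)$ using the $L$-Lipschitzness in Assumption~1 (and the boundedness $0 \le f_{\max} \le 1$ to handle the $t_0$-term cleanly). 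Optimizing $t_0$ produces the $T^{(\kappa+2)\ell c/((\kappa+2)\ell c + 2)}$ rate.

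For SGDA the structure is the same but I must now carry a joint potential. The natural choice is $\delta_t = \mathbb{E}[\Vert \bw^{(t)} - \bw'^{(t)}\Vert_2^2 + \frac{1}{r^2}\Vert \btheta^{(t)} - \btheta'^{(t)}\Vert_2^2]^{1/2}$ or a similar weighted combination, where the weight $1/r^2$ matches the stepsize ratio so that the joint update map is expansive with the uniform factor $1 + (r+1)\alpha_{w,t}\ell$ (this is exactly what forces the constraint $1 \le r \le \kappa$: $r$ must be at least $1$ for the potential to make sense and at most $\kappa$ so that the strong concavity in $\btheta$, which contributes a contraction $(1 - \alpha_{\theta,t}\mu) = (1 - r^2 c\mu/t)$, actually dominates the off-diagonal coupling terms of size $\alpha_{w,t}\ell$ and $\alpha_{\theta,t}\ell$ — here one needs $r^2 \mu \gtrsim r\ell$, i.e. $r \ge \kappa$... actually the inequality goes the other way, so I expect the contraction from strong concavity is used merely to keep the coupling from blowing up faster than $(r+1)$-expansion, and the $r \le \kappa$ bound ensures $\alpha_\theta \le $ something like $1/\ell$, i.e. the inner step is stable). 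The differing-sample term now contributes $2\alpha_{w,t} L_w$ to the $\bw$-component (and a $\btheta$-component that gets absorbed), giving $\delta_{t+1} \le (1 + (r+1)\ell c/t)\delta_t + \frac{2(r+1) c L_w}{n t}$ roughly; unrolling with the $t_0$-optimization and converting via the joint $L$-Lipschitzness gives the claimed bound with $(r+1)c\ell$ in the exponent and the constant $12(r+1)cLL_w$. The main obstacle I anticipate is the bookkeeping in the SGDA coupled recursion: getting the weighted potential and the stepsize ratio $r^2$ to conspire so that all the cross terms between the $\bw$- and $\btheta$-coordinates collapse into a single clean expansion factor $1 + (r+1)\alpha_{w,t}\ell$, and verifying that the regime $1 \le r \le \kappa$ with $\alpha_\theta \le \mu/\ell^2 \cdot(\text{something})$ is exactly what makes the inner-variable tracking error controllable. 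The rest is the now-standard Hardt–Recht–Singer unrolling-and-optimize-$t_0$ computation, which I would not grind through in detail.
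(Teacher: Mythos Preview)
Your SGDmax argument is correct and matches the paper exactly: reduce to SGD on $f_{\max}$, invoke the smoothness-of-max lemma to get that $f_{\max}(\cdot;\bz)$ is $\ell(1+\kappa/2)$-smooth, and then quote Theorem~3.12 of \cite{hardt2016train}. The $(\kappa+2)$ in the exponent is precisely $2\cdot(1+\kappa/2)$.

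For SGDA your high-level plan is right (couple two runs, track a joint distance, unroll with the $t_0$ burn-in trick, optimize $t_0$), but your mechanism for collapsing the coupled $(\bw,\btheta)$ recursion into a single expansion factor is not what the paper does, and your own uncertainty about it is warranted. The paper does \emph{not} look for a scalar weighted potential. Instead it proves a genuinely two-dimensional expansivity inequality
\[
\begin{bmatrix}\Vert\bw_{t+1}-\bw'_{t+1}\Vert\\ \Vert\btheta_{t+1}-\btheta'_{t+1}\Vert\end{bmatrix}\;\le\; B_t\begin{bmatrix}\Vert\bw_{t}-\bw'_{t}\Vert\\ \Vert\btheta_{t}-\btheta'_{t}\Vert\end{bmatrix},\qquad
B_t=I+\frac{c\ell}{t}\begin{bmatrix}1&1\\ r^2&-r^2/\kappa\end{bmatrix},
\]
and then \emph{diagonalizes} the generator matrix. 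The determinant of $\begin{bmatrix}1&1\\ r^2&-r^2/\kappa\end{bmatrix}$ is negative, so it has two real eigenvalues of opposite sign; a short calculation gives $\max\{\lambda_1,\lambda_2\}\le r+1$, which is exactly where the expansion factor $(1+(r+1)c\ell/t)$ you were hoping for comes from. The price of the diagonalization is the condition number of the eigenvector matrix, which the paper bounds by $(\sqrt 2+1)/(\sqrt 2-1)\le 6$; multiplied by the usual factor $2$ from the differing-sample step this is the origin of the constant $12$ in the bound. The constraint $1\le r\le\kappa$ is used in these eigenvalue and condition-number computations (and to keep $\alpha_{\theta,t}\le 1/\ell$ so that the $(1-\alpha_{\theta,t}\mu/2)$ contraction in the $\btheta$-row is valid), not for any ``tracking'' argument.

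Your weighted-potential idea is morally a special case of this---choosing a weight vector is tantamount to projecting onto a left eigenvector of $B_t$---so it can be made to work, but you would have to \emph{find} the right weight and verify the inequality rather than hope the cross terms collapse. The paper's matrix diagonalization removes that guesswork and delivers the constants directly.
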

\begin{proof}
We defer the proof to the Appendix.
\end{proof}
The above result shows that the generalization risks of stochastic GDA and GDmax change with the number of iterations and training set size as:
\begin{align}
     \epsilon_{\gen}(\sgda)&\approx \mathcal{O}\bigl({T^{\frac{\ell (r+1)c}{\ell (r+1)c+1}}}/{n}\bigr),\nonumber \\
     \epsilon_{\gen}(\sgdmax)&\approx \mathcal{O}\bigl({T^{\frac{\ell (\frac{\kappa}{2}+1)c}{\ell (\frac{\kappa}{2}+1) c+1}}}/{n}\bigr).
\end{align}
Therefore, considering a maximization to minimization stepsize ratio of $r^2 < \kappa^2/4$ will result in a better generalization bound for stochastic GDA compared to stochastic GDmax over a fixed and sufficiently large number of iterations.  

Next, we consider general non-convex non-concave minimax problems and apply the algorithmic stability framework to bound the generalization risk of the stochastic GDA algorithm. Note that the maximized value of a non-strongly-concave function is in general non-smooth. Consequently, the stability framework does not result in a bounded generalization risk for the GDmax algorithm in general non-convex non-concave problems. 
\begin{thm}\label{Thm: Gen non-Convex non-Concave}
Let $0\le f(\cdot,\cdot;\bz)\le 1$ be a bounded non-convex non-concave objective satisfying Assumptions 1 and 2. Then, the SGDA algorithm with stepsizes $\max\{\alpha_{w,t},\alpha_{\theta,t}\}\le {c}/{t}$ for constant $c>0$ satisfies the following bound over $T$ iterations: 
    \begin{equation}
        \epsilon_{\gen}(\sgda)\le \frac{1+\frac{1}{\ell  c}}{n}\bigl(2cLL_w\bigr)^{\frac{1}{\ell c+1}}T^{\frac{\ell c}{\ell c+1}}.
    \end{equation}
\end{thm}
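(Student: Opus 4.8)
By Theorem~\ref{Thm: Stability and Generalization} it suffices to prove that the SGDA algorithm is $\epsilon$-uniformly stable in minimization with $\epsilon$ equal to the claimed right-hand side, and I would establish this by the expansiveness-and-coupling argument of \cite{hardt2016train}, adapted to the saddle-point dynamics of GDA. Fix datasets $S,S'\in\mathcal{Z}^n$ differing only in index $i^*$, run SGDA on both using a common i.i.d.\ uniform index sequence $i_1,i_2,\dots$, denote the resulting iterates $(\bw^{(t)},\btheta^{(t)})$ and $(\bw'^{(t)},\btheta'^{(t)})$, and track the joint deviation $\delta_t:=\sqrt{\Vert\bw^{(t)}-\bw'^{(t)}\Vert_2^2+\Vert\btheta^{(t)}-\btheta'^{(t)}\Vert_2^2}$. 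Because the stability definition only involves the $\bw$-output, Assumption~1 gives $\bbE_A\bigl[f(\bw^{(T)},\btheta;\bz)-f(\bw'^{(T)},\btheta;\bz)\bigr]\le L_w\,\bbE_A[\delta_T]$ for any fixed $\btheta,\bz$ — this is the only place where the $\bw$-only constant $L_w$ (rather than the joint constant $L$) enters.

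Next I would derive a one-step growth recursion for $\bbE[\delta_t]$. Writing the SGDA update as $\bu\mapsto\bu-A_tF(\bu;\bz_{i_t})$ with $A_t:=\mathrm{diag}(\alpha_{w,t}\mathbf{I},\alpha_{\theta,t}\mathbf{I})$ and $F(\bw,\btheta;\bz):=(\nabla_\bw f(\bw,\btheta;\bz),-\nabla_\btheta f(\bw,\btheta;\bz))$, Assumption~2 makes $F(\cdot;\bz)$ an $\ell$-Lipschitz vector field for every $\bz$, and Assumption~1 gives $\Vert F(\cdot;\bz)\Vert_2\le L$. Hence when $i_t\ne i^*$ (probability $1-\tfrac1n$; the same loss is used on both runs) the update is $(1+\ell\max\{\alpha_{w,t},\alpha_{\theta,t}\})$-expansive and $\delta_{t+1}\le(1+\ell c/t)\delta_t$, while when $i_t=i^*$ (probability $\tfrac1n$) the triangle inequality gives $\delta_{t+1}\le\delta_t+2cL/t$. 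Introducing a free parameter $t_0\in\{1,\dots,T\}$ and letting $\mathcal{E}$ be the event that $i^*$ is drawn within the first $t_0$ steps, I bound $\mathbb{P}(\mathcal{E})\le t_0/n$ by a union bound, and since $0\le f\le 1$,
\[
\bbE_A\bigl[f(\bw^{(T)},\btheta;\bz)-f(\bw'^{(T)},\btheta;\bz)\bigr]\;\le\;\frac{t_0}{n}+L_w\,\bbE_A\bigl[\delta_T\,\big|\,\mathcal{E}^c\bigr].
\]
On $\mathcal{E}^c$ we have $\delta_{t_0}=0$, and taking conditional expectations in the two cases above yields, for $t\ge t_0$, $\bbE[\delta_{t+1}\mid\mathcal{E}^c]\le(1+\ell c/t)\,\bbE[\delta_t\mid\mathcal{E}^c]+2cL/(nt)$.

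Finally I would unroll this recursion from $t_0$ to $T$: the product of the factors $1+\ell c/s$ over $s\in\{t+1,\dots,T-1\}$ is at most $\exp(\ell c\sum_{s=t+1}^{T-1}1/s)\le(T/t)^{\ell c}$, so $\bbE[\delta_T\mid\mathcal{E}^c]\le\tfrac{2cL}{n}\sum_{t=t_0}^{T-1}(T/t)^{\ell c}\,t^{-1}\le\tfrac{2L}{\ell n}(T/t_0)^{\ell c}$ by comparing the sum to an integral. Substituting back gives $\bbE_A[\,\cdot\,]\le t_0/n+\tfrac{2LL_w}{\ell n}(T/t_0)^{\ell c}$, and minimizing the right-hand side over $t_0$ (the minimizer satisfies $t_0^{\ell c+1}=2cLL_wT^{\ell c}$, i.e.\ $t_0=(2cLL_w)^{1/(\ell c+1)}T^{\ell c/(\ell c+1)}$, at which point the second term equals $t_0/(\ell c\,n)$) yields exactly $\epsilon=\tfrac{1+1/(\ell c)}{n}(2cLL_w)^{1/(\ell c+1)}T^{\ell c/(\ell c+1)}$; combining with Theorem~\ref{Thm: Stability and Generalization} completes the argument.

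\textbf{Main obstacle.} The key difference from the SGD analysis of \cite{hardt2016train} is that the SGDA operator $\bu\mapsto\bu-A_tF(\bu;\bz)$ is not the gradient map of any scalar potential, so the convexity-based non-expansiveness lemmas available in the strongly-convex strongly-concave case do not apply; one must instead rely on the crude $(1+\ell c/t)$ expansion coming only from Lipschitzness of $F(\cdot;\bz)$, which is precisely why only the iteration-count-dependent rate $T^{\ell c/(\ell c+1)}$ — rather than an $O(1/n)$ rate — is attainable here. A secondary but necessary bookkeeping point is to keep the joint constant $L$ (for the gradient-norm jump at the ``bad'' step) and the $\bw$-only constant $L_w$ (for converting the iterate gap $\delta_T$ into a loss-value gap) distinct, so that the bound carries the product $LL_w$; note also that, in contrast to SGDmax, SGDA never differentiates the possibly non-smooth map $f_{\max}$, so no strong-concavity hypothesis is needed for this bound to hold.
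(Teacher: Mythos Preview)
Your proposal is correct and follows essentially the same route as the paper: both arguments combine the $(1+\ell c/t)$-expansiveness of the GDA map (from Assumption~2 alone, with no convexity), the $t_0$-splitting trick from \cite{hardt2016train} (your event $\mathcal{E}$ plays the role of the paper's Lemma~\ref{Lemma: non-convex bounded}), the identical growth recursion $\bbE[\delta_{t+1}]\le(1+\ell c/t)\bbE[\delta_t]+2cL/(nt)$ unrolled via $\prod(1+\ell c/s)\le(T/t_0)^{\ell c}$, and the same optimization over $t_0$. The only cosmetic difference is that the paper phrases the Lipschitz step through $f_{\max}(\bw;\bz)$ and invokes Theorem~2.2 of \cite{hardt2016train}, whereas you bound $f(\bw,\btheta;\bz)$ directly for fixed $\btheta$ and invoke Theorem~\ref{Thm: Stability and Generalization}; both are equivalent here.
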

\begin{proof}
We defer the proof to the Appendix.
\end{proof}
Theorem \ref{Thm: Gen non-Convex non-Concave} also shows that the SGDA algorithm with vanishing stepsize values will have a bounded generalization risk of $O(T^{\frac{\ell c}{\ell c+1}}/n)$ over $T$ iterations. On the other hand, the stochastic GDmax algorithm does not enjoy a bounded algorithmic stablility degree in non-convex non-concave problems, since the optimal maximization value behaves non-smoothly in general. 

\section{Numerical Experiments}
Here, we numerically examine the theoretical results of the previous sections. We first focus on a Gaussian setting for analyzing  strongly-convex strongly-concave and convex concave minimax problems. Then, we empirically study generative adversarial networks (GANs) as non-convex non-concave minimax learning tasks. 
\subsection{Convex Concave Minimax Problems}
To analyze our generalization results for convex concave minimax settings, we considered an isotropic Gaussian data vector $\mathbf{Z}\sim\mathcal{N}(\mathbf{0},I_{d\times d})$ with zero mean and identity covariance. In our experiments, we chose $\bZ$'s dimension to be $d=50$. We drew $n=1000$ independent samples from the underlying Gaussian distribution to form a training dataset $S=(\bz_1,\ldots,\bz_n)$. 
For the $\mu$-strongly-convex strongly-concave scenario, we considered the following minimax objective:
\begin{equation}\label{Eq: S.Conv-S.Conc Experiments}
    f_1(\bw,\btheta;\bz)=\bw^{\top}(\bz-\btheta) + \frac{\mu}{2}\bigl( \Vert \bw \Vert^2_2 - \Vert \btheta \Vert^2_2 \bigr).
\end{equation}
In our experiments, we used $\mu=0.1$ and constrained the optimization variables to satisfy the norm bounds $
\Vert \bw\Vert_2, \Vert \btheta\Vert_2\le 100$ which we enforced by projection after every optimization step. Note that for the above minimax objective we have
\begin{equation}\label{Eq: Experiments Conv-Conc gen error}
\epsilon_{\gen}(\bw)= \bw^{\top}(\bbE[\bZ]-{\bbE}_S[\bZ]),  
\end{equation}
where $\bbE[\bZ]=\mathbf{0}$ is the underlying mean and ${\bbE}_S[\bZ]:=\frac{1}{n}\sum_{i=1}^n \bz_i$ is the empirical mean.

To optimize the empirical minimax risk, we applied stochastic GDA with stepsize parameters $\alpha_w=\alpha_{\theta}=0.02$ and stochastic PPM with parameter $\eta=0.02$ each for $T=20,000$ iterations. Figure~\ref{fig:scsc} shows the generalization risk values over the optimization achieved by the stochastic GDA (top) and PPM (bottom) algorithms. As shown in this figure, the absolute value of generalization risk remained bounded during the optimization for both the learning algorithms. In our experiments, we also observed a similar generalization behavior with full-batch GDA and PPM algorithms. We defer the results of those experiments to the supplementary document. Hence, our experimental results support Theorem \ref{Thm: Gen Strongly-Convex Strongly-Concave}'s generalization bounds.



\begin{figure}
\centering
\begin{subfigure}[t]{.48\textwidth}
  \centering
 \includegraphics[width=.9\linewidth]{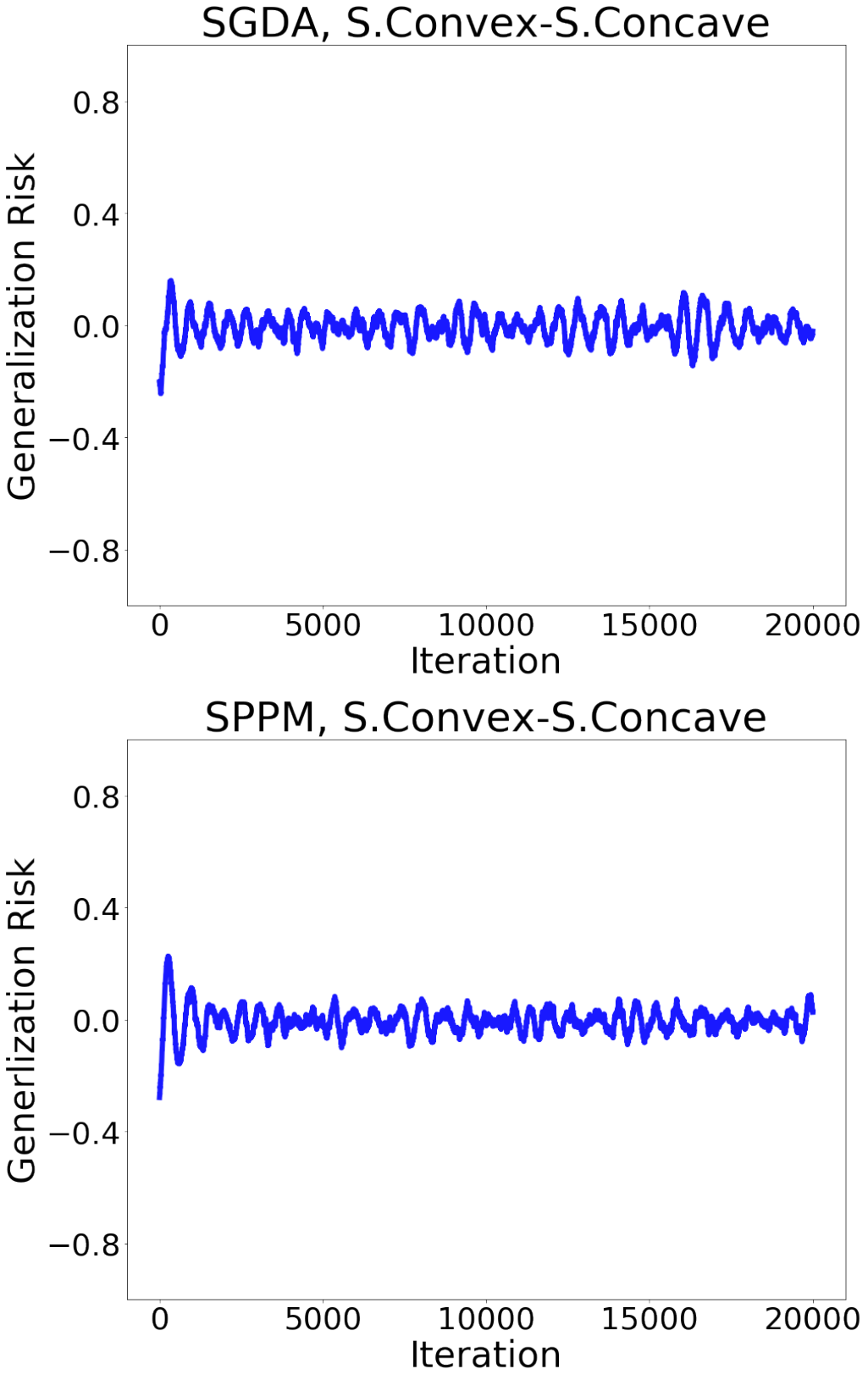}
    \caption{Generalization risk vs. iteration count in the strongly-convex strongly-concave setting optimized by (top) stochastic GDA and (bottom) stochastic PPM.}\label{fig:scsc}
\end{subfigure}\quad%
\begin{subfigure}[t]{.48\textwidth}
  \centering
  \includegraphics[width=.87\linewidth]{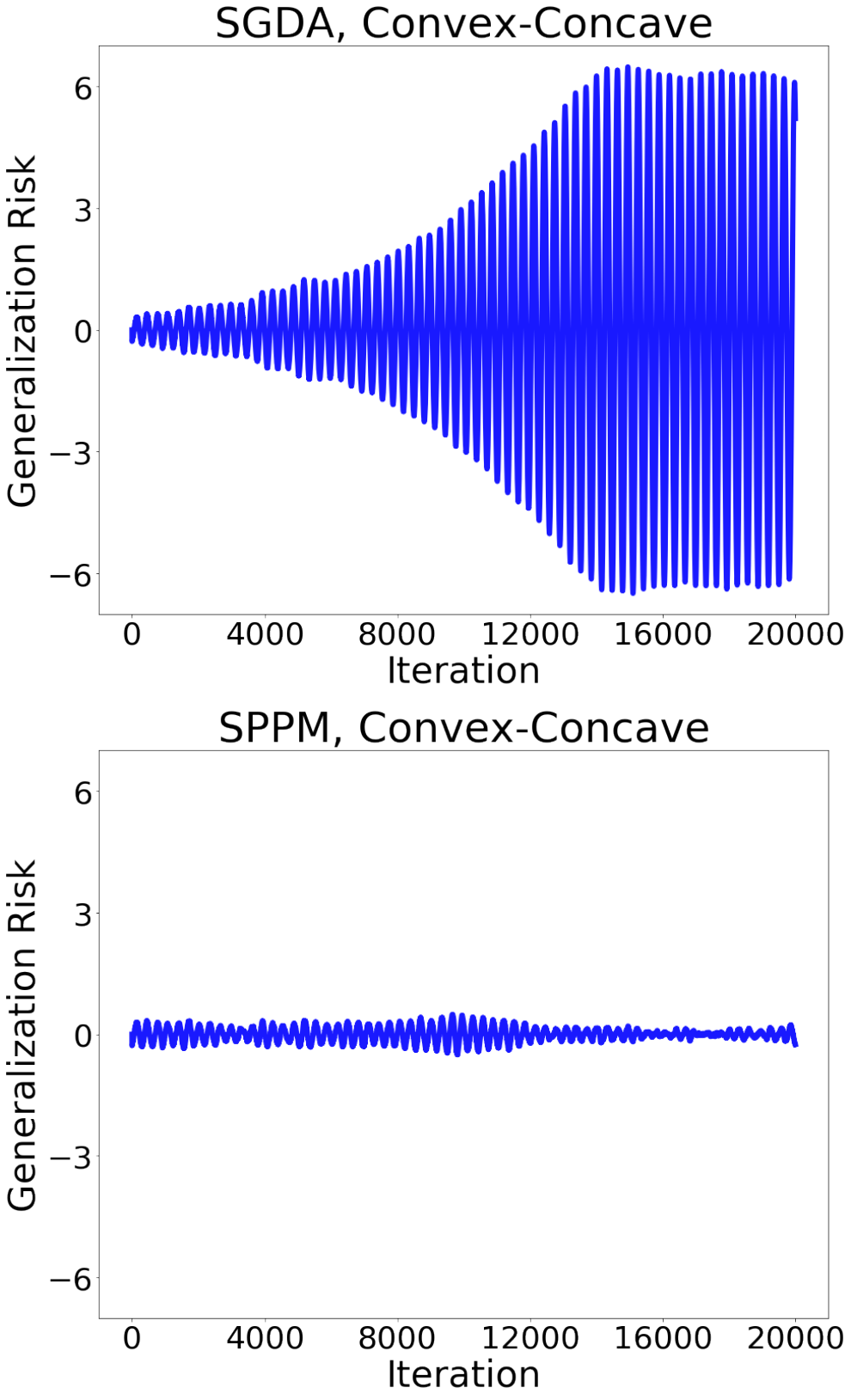}
    \caption{Generalization risk vs. iteration count in the convex concave bilinear setting optimized by (top) stochastic GDA and (bottom) stochastic PPM.}\label{fig:cc}
\end{subfigure}
\caption{Numerical results for convex concave minimax problems}
\label{fig:main_cc}
\end{figure}

Regarding convex concave minimax problems, as suggested by Remark \ref{Remark: convex concave constant stepsize} we considered the following bilinear minimax objective in our experiments:
\begin{equation}\label{Eq: Conv-Conc Experiments}
    f_2(\bw,\btheta;\bz)=\bw^{\top}(\bz-\btheta).
\end{equation}
We constrained the norm of optimization variables as $\Vert\bw\Vert_2,\Vert\btheta\Vert_2\le 100$ which we enforced through projection after every optimization iteration. Similar to the strongly-convex strongly-concave objective \myeqref{Eq: S.Conv-S.Conc Experiments}, for the above minimax objective we have the generalization risk in \myeqref{Eq: Experiments Conv-Conc gen error}  with $\bbE[\bZ]$ and ${\bbE}_S[\bZ]$ being the true and empirical mean vectors.

We optimized the minimax objective \myeqref{Eq: Conv-Conc Experiments} via stochastic and full-batch GDA and PPM algorithms. Figure~\ref{fig:cc} demonstrates the generalization risk evaluated at different iterations of applying stochastic GDA and PPM algorithms. As suggested by Remark \ref{Remark: convex concave constant stepsize}, the generalization risk of stochastic GDA grew exponentially over the first 15,000 iterations before the variables reached the boundary of their feasible sets and then the generalization risk oscillated with a nearly constant amplitude of $6.2$. On the other hand, we observed that the generalization risk of the stochastic PPM algorithm stayed bounded and below $0.5$ for all the 20,000 iterations (Figure~\ref{fig:cc}-bottom). Therefore, our numerical experiments also indicate that while in general convex concave problems the stochastic GDA learner can potentially suffer from a poor generalization performance, the PPM algorithm has a bounded generalization risk as shown by Theorem \ref{Thm: Gen Convex-Concave}.

\subsection{Non-convex Non-concave Problems}


\begin{figure}
\centering
\begin{subfigure}[t]{.48\textwidth}
  \centering
 \includegraphics[width=.9\linewidth]{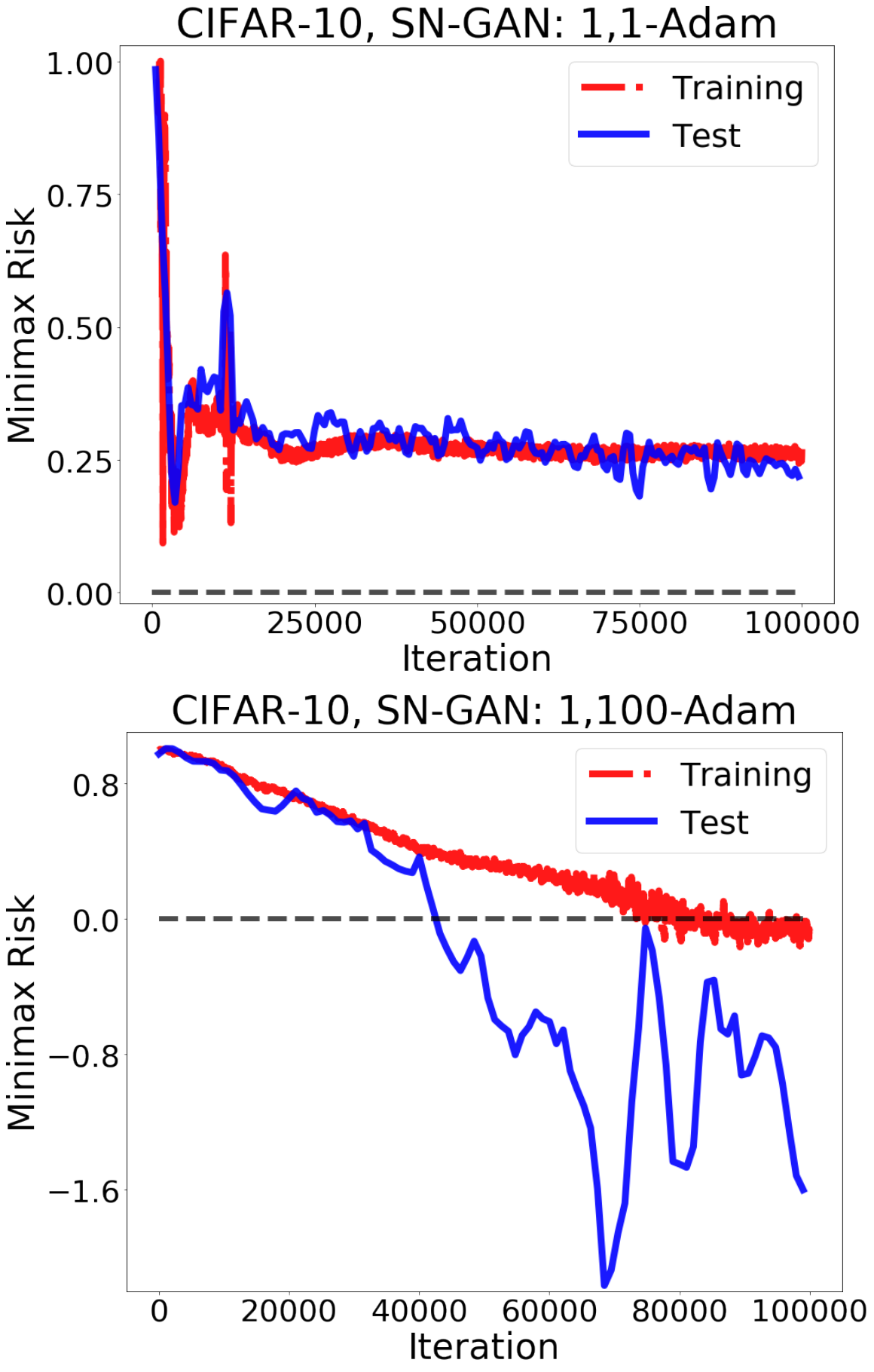}
    \caption{Minimax risk vs. iteration count in the non-convex non-concave SN-GAN problem on CIFAR-10 data optimized by (top) 1,1 Adam descent ascent and (bottom) 1,100 Adam descent ascent}\label{fig:cifar}
\end{subfigure}\quad%
\begin{subfigure}[t]{.48\textwidth}
  \centering
  \includegraphics[width=.87\linewidth]{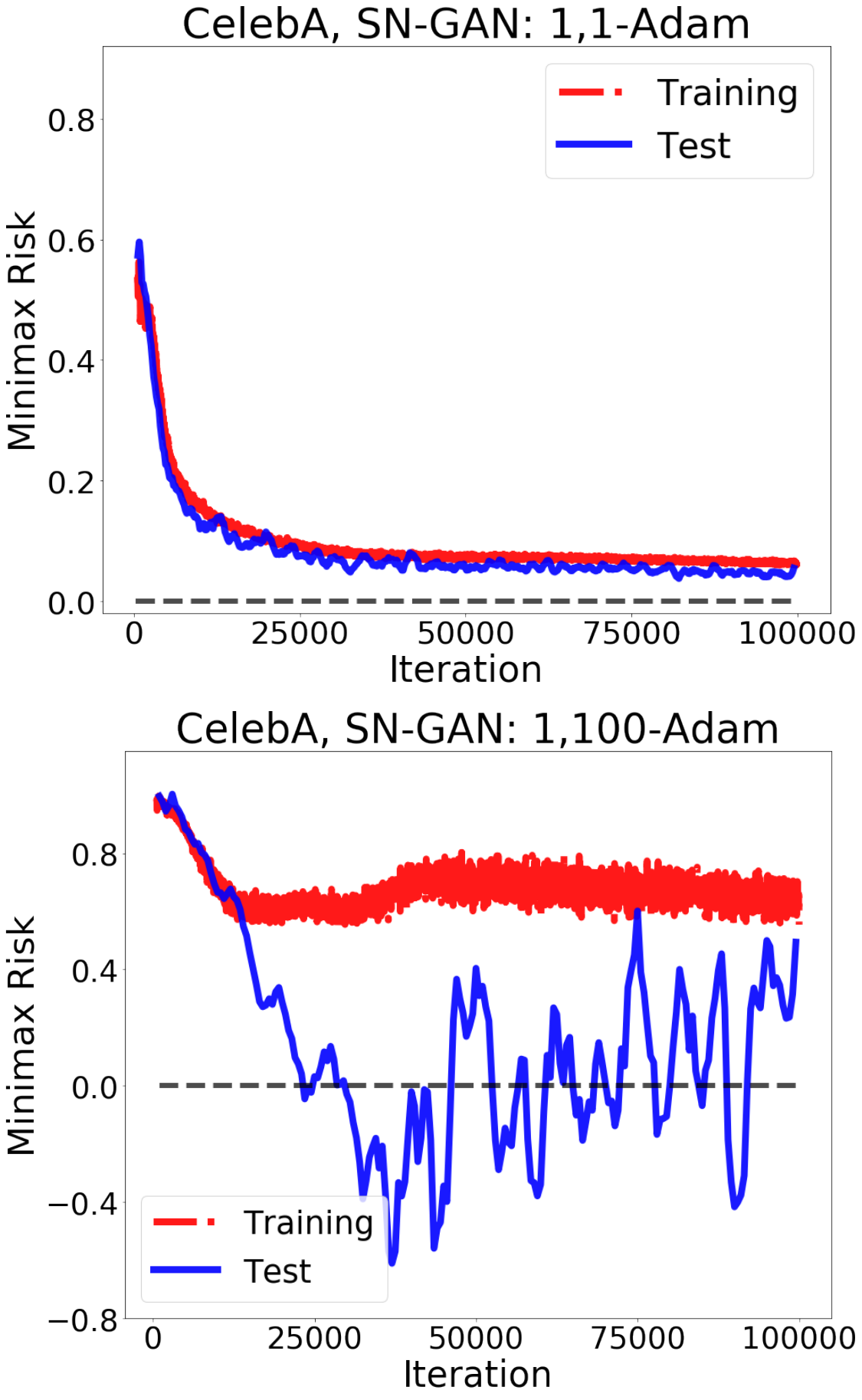}
    \caption{Minimax risk vs. iteration count in the non-convex non-concave SN-GAN problem on CelebA data optimized by (top) 1,1 Adam descent ascent and (bottom) 1,100 Adam descent ascent.}\label{fig:celeba}
\end{subfigure}
\caption{Numerical results for non-convex non-concave minimax problems}
\label{fig:ncnc}
\end{figure}

To numerically analyze generalization in general non-convex non-concave minimax problems, we experimented the performance of simultaneous and non-simultaneous optimization algorithms in training GANs. In our GAN experiments, we considered the standard architecture of DC-GANs \citep{radford2015unsupervised} with 4-layer convolutional neural net generator and discriminator functions. For the minimax objective, we used the formulation of vanilla GAN \citep{goodfellow2014generative} that is 
\begin{equation}
    f(\bw,\btheta;\bz) = \log(D_{\bw}(\bz)) + \bbE_{\boldsymbol{\nu}}\bigl[\log(1-D_{\bw}(G_{\btheta}(\boldsymbol{\nu})))\bigr].  
\end{equation}
For computing the above objective, we used Monte-Carlo simulation using $100$ fresh latent samples $\boldsymbol{\nu}_i\sim\mathcal{N}(\mathbf{0},I_{\tiny r=128})$ to approximate the expected value over generator's latent variable $\boldsymbol{\nu}$ at every optimization step. We followed all the experimental details from \cite{gulrajani2017improved}'s standard implementation of DC-GAN. Furthermore, we applied spectral normalization \citep{miyato2018spectral} to regularize the discriminator function and assist reaching a near optimal solution for discriminator via boundedly many iterations needed for non-simultaneous optimization methods. We trained the spectrally-normalized GAN (SN-GAN) problem over CIFAR-10 \citep{krizhevsky2009learning} and CelebA \citep{liu2018large} datasets. We divided the CIFAR-10 and CelebA datasets to 50,000, 160,000 training and 10,000, 40,000 test samples, respectively.

To optimize the minimax risk function, we used the standard Adam algorithm \citep{kingma2014adam} with batch-size $100$. For simultaneous optimization algorithms we applied 1,1 Adam descent ascent with the parameters $\operatorname{lr}=10^{-4},\, \beta_1=0.5,\, \beta_2=0.9$ for both minimization and maximization updates. To apply a non-simultaneous algorithm, we used 100 Adam maximization steps per minimization step and increased the maximization learning rate to  5$\times 10^{-4}$. We ran each GAN experiment for $T=$100,000 iterations.


Figure~\ref{fig:ncnc} shows the estimates of the empirical and true minimax risks in the CIFAR-10 and CelebA experiments, respectively. We used $2000$ randomly-selected samples from the training and test sets for every estimation task. As seen in Figure~\ref{fig:ncnc}'s plots, for the experiments applying simultaneous 1,1 Adam optimization the empirical minimax risk generalizes properly from training to test samples  (Figure~\ref{fig:ncnc}-top). In contrast, in both the experiments with non-simultaneous methods after 30,000 iterations the empirical minimax risk suffers from a considerable generalization gap from the true minimax risk (Figure~\ref{fig:ncnc}-bottom). The gap between the training and test minimax risks grew between iterations 30,000-60,000. The test minimax risk fluctuated over the subsequent iterations, which could be due to the insufficiency of 100 Adam ascent steps to follow the optimal discriminator solution at those iterations.  

The numerical results of our GAN experiments suggest that non-simultaneous algorithms which attempt to fully solve the maximization subproblem at every iteration can lead to large generalization errors. On the other hand, standard simultaneous algorithms used for training GANs enjoy a bounded generalization error which can help the training process find a model with nice generalization properties. We defer 
further experimental results to the supplementary document.

\bibliography{biblio}

\begin{thebibliography}{}

\bibitem[Allen-Zhu and Li, 2019]{allen2019can}
Allen-Zhu, Z. and Li, Y. (2019).
\newblock What can resnet learn efficiently, going beyond kernels?
\newblock In {\em Advances in Neural Information Processing Systems}, pages
  9017--9028.

\bibitem[Allen-Zhu et~al., 2019]{allen2019learning}
Allen-Zhu, Z., Li, Y., and Liang, Y. (2019).
\newblock Learning and generalization in overparameterized neural networks,
  going beyond two layers.
\newblock In {\em Advances in neural information processing systems}, pages
  6158--6169.

\bibitem[Arora et~al., 2019]{arora2019fine}
Arora, S., Du, S.~S., Hu, W., Li, Z., and Wang, R. (2019).
\newblock Fine-grained analysis of optimization and generalization for
  overparameterized two-layer neural networks.
\newblock {\em arXiv preprint arXiv:1901.08584}.

\bibitem[Arora et~al., 2017]{arora2017generalization}
Arora, S., Ge, R., Liang, Y., Ma, T., and Zhang, Y. (2017).
\newblock Generalization and equilibrium in generative adversarial nets (gans).
\newblock {\em arXiv preprint arXiv:1703.00573}.

\bibitem[Arora and Zhang, 2017]{arora2017gans}
Arora, S. and Zhang, Y. (2017).
\newblock Do gans actually learn the distribution? an empirical study.
\newblock {\em arXiv preprint arXiv:1706.08224}.

\bibitem[Attias et~al., 2019]{attias2019improved}
Attias, I., Kontorovich, A., and Mansour, Y. (2019).
\newblock Improved generalization bounds for robust learning.
\newblock In {\em Algorithmic Learning Theory}, pages 162--183.

\bibitem[Bai and Lee, 2019]{bai2019beyond}
Bai, Y. and Lee, J.~D. (2019).
\newblock Beyond linearization: On quadratic and higher-order approximation of
  wide neural networks.
\newblock {\em arXiv preprint arXiv:1910.01619}.

\bibitem[Bai et~al., 2018]{bai2018approximability}
Bai, Y., Ma, T., and Risteski, A. (2018).
\newblock Approximability of discriminators implies diversity in gans.
\newblock {\em arXiv preprint arXiv:1806.10586}.

\bibitem[Belkin et~al., 2019]{belkin2019reconciling}
Belkin, M., Hsu, D., Ma, S., and Mandal, S. (2019).
\newblock Reconciling modern machine-learning practice and the classical
  bias--variance trade-off.
\newblock {\em Proceedings of the National Academy of Sciences},
  116(32):15849--15854.

\bibitem[Bernhard and Rapaport, 1995]{bernhard1995theorem}
Bernhard, P. and Rapaport, A. (1995).
\newblock On a theorem of danskin with an application to a theorem of von
  neumann-sion.
\newblock {\em Nonlinear Analysis: Theory, Methods \& Applications},
  24(8):1163--1181.

\bibitem[Bietti and Mairal, 2019]{bietti2019inductive}
Bietti, A. and Mairal, J. (2019).
\newblock On the inductive bias of neural tangent kernels.
\newblock In {\em Advances in Neural Information Processing Systems}, pages
  12893--12904.

\bibitem[Bousquet and Elisseeff, 2002]{bousquet2002stability}
Bousquet, O. and Elisseeff, A. (2002).
\newblock Stability and generalization.
\newblock {\em Journal of machine learning research}, 2(Mar):499--526.

\bibitem[Bousquet et~al., 2020]{bousquet2020sharper}
Bousquet, O., Klochkov, Y., and Zhivotovskiy, N. (2020).
\newblock Sharper bounds for uniformly stable algorithms.
\newblock In {\em Conference on Learning Theory}, pages 610--626.

\bibitem[Cao and Gu, 2019]{cao2019generalization}
Cao, Y. and Gu, Q. (2019).
\newblock Generalization bounds of stochastic gradient descent for wide and
  deep neural networks.
\newblock In {\em Advances in Neural Information Processing Systems}, pages
  10836--10846.

\bibitem[Chatterjee, 2020]{chatterjee2020coherent}
Chatterjee, S. (2020).
\newblock Coherent gradients: An approach to understanding generalization in
  gradient descent-based optimization.
\newblock {\em arXiv preprint arXiv:2002.10657}.

\bibitem[Daskalakis et~al., 2017]{daskalakis2017training}
Daskalakis, C., Ilyas, A., Syrgkanis, V., and Zeng, H. (2017).
\newblock Training gans with optimism.
\newblock {\em arXiv preprint arXiv:1711.00141}.

\bibitem[Daskalakis and Panageas, 2018]{daskalakis2018limit}
Daskalakis, C. and Panageas, I. (2018).
\newblock The limit points of (optimistic) gradient descent in min-max
  optimization.
\newblock In {\em Advances in Neural Information Processing Systems}, pages
  9236--9246.

\bibitem[Dinh et~al., 2017]{dinh2017sharp}
Dinh, L., Pascanu, R., Bengio, S., and Bengio, Y. (2017).
\newblock Sharp minima can generalize for deep nets.
\newblock {\em arXiv preprint arXiv:1703.04933}.

\bibitem[Du and Hu, 2019]{du2019linear}
Du, S.~S. and Hu, W. (2019).
\newblock Linear convergence of the primal-dual gradient method for
  convex-concave saddle point problems without strong convexity.
\newblock In {\em The 22nd International Conference on Artificial Intelligence
  and Statistics}, pages 196--205. PMLR.

\bibitem[Farnia and Ozdaglar, 2020]{farnia2020gans}
Farnia, F. and Ozdaglar, A. (2020).
\newblock Gans may have no nash equilibria.
\newblock {\em arXiv preprint arXiv:2002.09124}.

\bibitem[Farnia et~al., 2018]{farnia2018generalizable}
Farnia, F., Zhang, J.~M., and Tse, D. (2018).
\newblock Generalizable adversarial training via spectral normalization.
\newblock {\em arXiv preprint arXiv:1811.07457}.

\bibitem[Feizi et~al., 2020]{feizi2020understanding}
Feizi, S., Farnia, F., Ginart, T., and Tse, D. (2020).
\newblock Understanding gans in the lqg setting: Formulation, generalization
  and stability.
\newblock {\em IEEE Journal on Selected Areas in Information Theory}.

\bibitem[Feldman and Vondrak, 2018]{feldman2018generalization}
Feldman, V. and Vondrak, J. (2018).
\newblock Generalization bounds for uniformly stable algorithms.
\newblock In {\em Advances in Neural Information Processing Systems}, pages
  9747--9757.

\bibitem[Feldman and Vondrak, 2019]{feldman2019high}
Feldman, V. and Vondrak, J. (2019).
\newblock High probability generalization bounds for uniformly stable
  algorithms with nearly optimal rate.
\newblock {\em arXiv preprint arXiv:1902.10710}.

\bibitem[Fiez et~al., 2019]{fiez2019convergence}
Fiez, T., Chasnov, B., and Ratliff, L.~J. (2019).
\newblock Convergence of learning dynamics in stackelberg games.
\newblock {\em arXiv preprint arXiv:1906.01217}.

\bibitem[Gidel et~al., 2018]{gidel2018variational}
Gidel, G., Berard, H., Vignoud, G., Vincent, P., and Lacoste-Julien, S. (2018).
\newblock A variational inequality perspective on generative adversarial
  networks.
\newblock {\em arXiv preprint arXiv:1802.10551}.

\bibitem[Goodfellow et~al., 2014]{goodfellow2014generative}
Goodfellow, I., Pouget-Abadie, J., Mirza, M., Xu, B., Warde-Farley, D., Ozair,
  S., Courville, A., and Bengio, Y. (2014).
\newblock Generative adversarial nets.
\newblock In {\em Advances in neural information processing systems}, pages
  2672--2680.

\bibitem[Gulrajani et~al., 2017]{gulrajani2017improved}
Gulrajani, I., Ahmed, F., Arjovsky, M., Dumoulin, V., and Courville, A.~C.
  (2017).
\newblock Improved training of wasserstein gans.
\newblock In {\em Advances in neural information processing systems}, pages
  5767--5777.

\bibitem[Hardt et~al., 2016]{hardt2016train}
Hardt, M., Recht, B., and Singer, Y. (2016).
\newblock Train faster, generalize better: Stability of stochastic gradient
  descent.
\newblock In {\em International Conference on Machine Learning}, pages
  1225--1234.

\bibitem[Heusel et~al., 2017]{heusel2017gans}
Heusel, M., Ramsauer, H., Unterthiner, T., Nessler, B., and Hochreiter, S.
  (2017).
\newblock Gans trained by a two time-scale update rule converge to a local nash
  equilibrium.
\newblock In {\em Advances in neural information processing systems}, pages
  6626--6637.

\bibitem[Hsieh et~al., 2019]{hsieh2019finding}
Hsieh, Y.-P., Liu, C., and Cevher, V. (2019).
\newblock Finding mixed nash equilibria of generative adversarial networks.
\newblock In {\em International Conference on Machine Learning}, pages
  2810--2819.

\bibitem[Ji and Telgarsky, 2019]{ji2019polylogarithmic}
Ji, Z. and Telgarsky, M. (2019).
\newblock Polylogarithmic width suffices for gradient descent to achieve
  arbitrarily small test error with shallow relu networks.
\newblock {\em arXiv preprint arXiv:1909.12292}.

\bibitem[Keskar et~al., 2016]{keskar2016large}
Keskar, N.~S., Mudigere, D., Nocedal, J., Smelyanskiy, M., and Tang, P. T.~P.
  (2016).
\newblock On large-batch training for deep learning: Generalization gap and
  sharp minima.
\newblock {\em arXiv preprint arXiv:1609.04836}.

\bibitem[Khim and Loh, 2018]{khim2018adversarial}
Khim, J. and Loh, P.-L. (2018).
\newblock Adversarial risk bounds via function transformation.
\newblock {\em arXiv preprint arXiv:1810.09519}.

\bibitem[Kingma and Ba, 2014]{kingma2014adam}
Kingma, D.~P. and Ba, J. (2014).
\newblock Adam: A method for stochastic optimization.
\newblock {\em arXiv preprint arXiv:1412.6980}.

\bibitem[Krizhevsky et~al., 2009]{krizhevsky2009learning}
Krizhevsky, A., Hinton, G., et~al. (2009).
\newblock Learning multiple layers of features from tiny images.

\bibitem[Li and Liang, 2018]{li2018learning}
Li, Y. and Liang, Y. (2018).
\newblock Learning overparameterized neural networks via stochastic gradient
  descent on structured data.
\newblock In {\em Advances in Neural Information Processing Systems}, pages
  8157--8166.

\bibitem[Liang and Stokes, 2019]{liang2019interaction}
Liang, T. and Stokes, J. (2019).
\newblock Interaction matters: A note on non-asymptotic local convergence of
  generative adversarial networks.
\newblock In {\em The 22nd International Conference on Artificial Intelligence
  and Statistics}, pages 907--915.

\bibitem[Lin et~al., 2019]{lin2019gradient}
Lin, T., Jin, C., and Jordan, M.~I. (2019).
\newblock On gradient descent ascent for nonconvex-concave minimax problems.
\newblock {\em arXiv preprint arXiv:1906.00331}.

\bibitem[Liu et~al., 2018]{liu2018large}
Liu, Z., Luo, P., Wang, X., and Tang, X. (2018).
\newblock Large-scale celebfaces attributes (celeba) dataset.
\newblock {\em Retrieved August}, 15:2018.

\bibitem[Lyu and Li, 2019]{lyu2019gradient}
Lyu, K. and Li, J. (2019).
\newblock Gradient descent maximizes the margin of homogeneous neural networks.
\newblock {\em arXiv preprint arXiv:1906.05890}.

\bibitem[Ma et~al., 2018]{ma2018implicit}
Ma, C., Wang, K., Chi, Y., and Chen, Y. (2018).
\newblock Implicit regularization in nonconvex statistical estimation: Gradient
  descent converges linearly for phase retrieval and matrix completion.
\newblock In {\em International Conference on Machine Learning}, pages
  3345--3354.

\bibitem[Madry et~al., 2017]{madry2017towards}
Madry, A., Makelov, A., Schmidt, L., Tsipras, D., and Vladu, A. (2017).
\newblock Towards deep learning models resistant to adversarial attacks.
\newblock {\em arXiv preprint arXiv:1706.06083}.

\bibitem[Mazumdar et~al., 2020]{mazumdar2020gradient}
Mazumdar, E., Ratliff, L.~J., and Sastry, S.~S. (2020).
\newblock On gradient-based learning in continuous games.
\newblock {\em SIAM Journal on Mathematics of Data Science}, 2(1):103--131.

\bibitem[Mazumdar et~al., 2019]{mazumdar2019finding}
Mazumdar, E.~V., Jordan, M.~I., and Sastry, S.~S. (2019).
\newblock On finding local nash equilibria (and only local nash equilibria) in
  zero-sum games.
\newblock {\em arXiv preprint arXiv:1901.00838}.

\bibitem[McAllester, 1999]{mcallester1999some}
McAllester, D.~A. (1999).
\newblock Some pac-bayesian theorems.
\newblock {\em Machine Learning}, 37(3):355--363.

\bibitem[Mei and Montanari, 2019]{mei2019generalization}
Mei, S. and Montanari, A. (2019).
\newblock The generalization error of random features regression: Precise
  asymptotics and double descent curve.
\newblock {\em arXiv preprint arXiv:1908.05355}.

\bibitem[Mescheder et~al., 2018]{mescheder2018training}
Mescheder, L., Geiger, A., and Nowozin, S. (2018).
\newblock Which training methods for gans do actually converge?
\newblock {\em arXiv preprint arXiv:1801.04406}.

\bibitem[Mescheder et~al., 2017]{mescheder2017numerics}
Mescheder, L., Nowozin, S., and Geiger, A. (2017).
\newblock The numerics of gans.
\newblock In {\em Advances in Neural Information Processing Systems}, pages
  1825--1835.

\bibitem[Miyato et~al., 2018]{miyato2018spectral}
Miyato, T., Kataoka, T., Koyama, M., and Yoshida, Y. (2018).
\newblock Spectral normalization for generative adversarial networks.
\newblock {\em arXiv preprint arXiv:1802.05957}.

\bibitem[Mokhtari et~al., 2019]{mokhtari2019convergence}
Mokhtari, A., Ozdaglar, A., and Pattathil, S. (2019).
\newblock Convergence rate of o(1/k) for optimistic gradient and extra-gradient
  methods in smooth convex-concave saddle point problems.
\newblock {\em arXiv preprint arXiv:1906.01115}.

\bibitem[Mokhtari et~al., 2020]{mokhtari2020unified}
Mokhtari, A., Ozdaglar, A., and Pattathil, S. (2020).
\newblock A unified analysis of extra-gradient and optimistic gradient methods
  for saddle point problems: Proximal point approach.
\newblock In {\em International Conference on Artificial Intelligence and
  Statistics}, pages 1497--1507. PMLR.

\bibitem[Nagarajan and Kolter, 2017]{nagarajan2017gradient}
Nagarajan, V. and Kolter, J.~Z. (2017).
\newblock Gradient descent gan optimization is locally stable.
\newblock In {\em Advances in neural information processing systems}, pages
  5585--5595.

\bibitem[Nagarajan and Kolter, 2019]{nagarajan2019uniform}
Nagarajan, V. and Kolter, J.~Z. (2019).
\newblock Uniform convergence may be unable to explain generalization in deep
  learning.
\newblock In {\em Advances in Neural Information Processing Systems}, pages
  11615--11626.

\bibitem[Nakkiran et~al., 2019]{nakkiran2019deep}
Nakkiran, P., Kaplun, G., Bansal, Y., Yang, T., Barak, B., and Sutskever, I.
  (2019).
\newblock Deep double descent: Where bigger models and more data hurt.
\newblock {\em arXiv preprint arXiv:1912.02292}.

\bibitem[Neyshabur et~al., 2017a]{neyshabur2017exploring}
Neyshabur, B., Bhojanapalli, S., McAllester, D., and Srebro, N. (2017a).
\newblock Exploring generalization in deep learning.
\newblock In {\em Advances in neural information processing systems}, pages
  5947--5956.

\bibitem[Neyshabur et~al., 2017b]{neyshabur2017pac}
Neyshabur, B., Bhojanapalli, S., and Srebro, N. (2017b).
\newblock A pac-bayesian approach to spectrally-normalized margin bounds for
  neural networks.
\newblock {\em arXiv preprint arXiv:1707.09564}.

\bibitem[Neyshabur et~al., 2014]{neyshabur2014search}
Neyshabur, B., Tomioka, R., and Srebro, N. (2014).
\newblock In search of the real inductive bias: On the role of implicit
  regularization in deep learning.
\newblock {\em arXiv preprint arXiv:1412.6614}.

\bibitem[Nouiehed et~al., 2019]{nouiehed2019solving}
Nouiehed, M., Sanjabi, M., Huang, T., Lee, J.~D., and Razaviyayn, M. (2019).
\newblock Solving a class of non-convex min-max games using iterative first
  order methods.
\newblock In {\em Advances in Neural Information Processing Systems}, pages
  14934--14942.

\bibitem[Ongie et~al., 2019]{ongie2019function}
Ongie, G., Willett, R., Soudry, D., and Srebro, N. (2019).
\newblock A function space view of bounded norm infinite width relu nets: The
  multivariate case.
\newblock {\em arXiv preprint arXiv:1910.01635}.

\bibitem[Radford et~al., 2015]{radford2015unsupervised}
Radford, A., Metz, L., and Chintala, S. (2015).
\newblock Unsupervised representation learning with deep convolutional
  generative adversarial networks.
\newblock {\em arXiv preprint arXiv:1511.06434}.

\bibitem[Rice et~al., 2020]{rice2020overfitting}
Rice, L., Wong, E., and Kolter, J.~Z. (2020).
\newblock Overfitting in adversarially robust deep learning.
\newblock {\em arXiv preprint arXiv:2002.11569}.

\bibitem[Rockafellar, 1976]{rockafellar1976monotone}
Rockafellar, R.~T. (1976).
\newblock Monotone operators and the proximal point algorithm.
\newblock {\em SIAM journal on control and optimization}, 14(5):877--898.

\bibitem[Sanjabi et~al., 2018]{sanjabi2018convergence}
Sanjabi, M., Ba, J., Razaviyayn, M., and Lee, J.~D. (2018).
\newblock On the convergence and robustness of training gans with regularized
  optimal transport.
\newblock In {\em Advances in Neural Information Processing Systems}, pages
  7091--7101.

\bibitem[Sch{\"a}fer and Anandkumar, 2019]{schafer2019competitive}
Sch{\"a}fer, F. and Anandkumar, A. (2019).
\newblock Competitive gradient descent.
\newblock In {\em Advances in Neural Information Processing Systems}, pages
  7625--7635.

\bibitem[Sch{\"a}fer et~al., 2019]{schafer2019implicit}
Sch{\"a}fer, F., Zheng, H., and Anandkumar, A. (2019).
\newblock Implicit competitive regularization in gans.
\newblock {\em arXiv preprint arXiv:1910.05852}.

\bibitem[Schmidt et~al., 2018]{schmidt2018adversarially}
Schmidt, L., Santurkar, S., Tsipras, D., Talwar, K., and Madry, A. (2018).
\newblock Adversarially robust generalization requires more data.
\newblock In {\em Advances in Neural Information Processing Systems}, pages
  5014--5026.

\bibitem[Shalev-Shwartz et~al., 2010]{shalev2010learnability}
Shalev-Shwartz, S., Shamir, O., Srebro, N., and Sridharan, K. (2010).
\newblock Learnability, stability and uniform convergence.
\newblock {\em The Journal of Machine Learning Research}, 11:2635--2670.

\bibitem[Thanh-Tung et~al., 2019]{thanh2019improving}
Thanh-Tung, H., Tran, T., and Venkatesh, S. (2019).
\newblock Improving generalization and stability of generative adversarial
  networks.
\newblock {\em arXiv preprint arXiv:1902.03984}.

\bibitem[Thekumparampil et~al., 2019]{thekumparampil2019efficient}
Thekumparampil, K.~K., Jain, P., Netrapalli, P., and Oh, S. (2019).
\newblock Efficient algorithms for smooth minimax optimization.
\newblock In {\em Advances in Neural Information Processing Systems}, pages
  12680--12691.

\bibitem[Wang et~al., 2019]{wang2019solving}
Wang, Y., Zhang, G., and Ba, J. (2019).
\newblock On solving minimax optimization locally: A follow-the-ridge approach.
\newblock {\em arXiv preprint arXiv:1910.07512}.

\bibitem[Wei et~al., 2019]{wei2019regularization}
Wei, C., Lee, J.~D., Liu, Q., and Ma, T. (2019).
\newblock Regularization matters: Generalization and optimization of neural
  nets vs their induced kernel.
\newblock In {\em Advances in Neural Information Processing Systems}, pages
  9712--9724.

\bibitem[Wei and Ma, 2019]{wei2019improved}
Wei, C. and Ma, T. (2019).
\newblock Improved sample complexities for deep networks and robust
  classification via an all-layer margin.
\newblock {\em arXiv preprint arXiv:1910.04284}.

\bibitem[Wu et~al., 2019]{wu2019generalization}
Wu, B., Zhao, S., Chen, C., Xu, H., Wang, L., Zhang, X., Sun, G., and Zhou, J.
  (2019).
\newblock Generalization in generative adversarial networks: A novel
  perspective from privacy protection.
\newblock In {\em Advances in Neural Information Processing Systems}, pages
  307--317.

\bibitem[Yin et~al., 2019]{yin2019rademacher}
Yin, D., Kannan, R., and Bartlett, P. (2019).
\newblock Rademacher complexity for adversarially robust generalization.
\newblock In {\em International Conference on Machine Learning}, pages
  7085--7094. PMLR.

\bibitem[Zhai et~al., 2019]{zhai2019adversarially}
Zhai, R., Cai, T., He, D., Dan, C., He, K., Hopcroft, J., and Wang, L. (2019).
\newblock Adversarially robust generalization just requires more unlabeled
  data.
\newblock {\em arXiv preprint arXiv:1906.00555}.

\bibitem[Zhang et~al., 2016]{zhang2016understanding}
Zhang, C., Bengio, S., Hardt, M., Recht, B., and Vinyals, O. (2016).
\newblock Understanding deep learning requires rethinking generalization.
\newblock {\em arXiv preprint arXiv:1611.03530}.

\bibitem[Zhang et~al., 2020]{zhang2020newton}
Zhang, G., Wu, K., Poupart, P., and Yu, Y. (2020).
\newblock Newton-type methods for minimax optimization.
\newblock {\em arXiv preprint arXiv:2006.14592}.

\bibitem[Zhang et~al., 2017]{zhang2017discrimination}
Zhang, P., Liu, Q., Zhou, D., Xu, T., and He, X. (2017).
\newblock On the discrimination-generalization tradeoff in gans.
\newblock {\em arXiv preprint arXiv:1711.02771}.

\end{thebibliography}
\begin{appendices}
\section{Additional Numerical Results}
\subsection{Convex Concave Minimax Settings}

Here, we provide the results of the numerical experiments discussed in the main text for full-batch GDA and PPM algorithms as well as stochastic and full-batch GDmax algorithms. Note that in these experiments we use the same minimax objective and hyperparameters mentioned in the main text. Figure~\ref{fig:gda} shows the generalization risk in our experiments for the GDA algorithm. As seen in Figure~\ref{fig:gda} (right), the results for full-batch and stochastic GDA algorithms in the bilinear convex concave case look similar, with the only exception that the generalization risk in the full-batch case reached a slightly higher amplitude of 7.8. On the other hand, in the strongly-convex strongly-concave case, full-batch GDA demonstrated a vanishing generalization risk, whereas stochastic GDA could not reach below an amplitude of 0.2.

Figure \ref{fig:ppm} shows the results of our experiments for full-batch PPM. Observe that the generalization risk in both cases decreases to reach smaller values than those for stochastic PPM. Finally, Figures \ref{fig:gdmax} and \ref{fig:sgdmax} include the results for ful-batch and stochastic GDmax algorithms. With the exception of the full-batch GDmax case for the bilinear objective (Figure~\ref{fig:gdmax}-right), in all the other cases the generalization risk did not grow during the optimization, which is comparable to our results in the GDA experiments.

\subsection{Non-convex Non-concave Minimax Settings}
Here, we provide the image samples generated by the trained GANs discussed in the main text. Figure~\ref{fig:cifar_pics} shows the CIFAR-10 samples generated by the simultaneous 1,1 Adam training (Figure~\ref{fig:cifar_pics}-left) and non-simultaneous 1,100-Adam optimization (Figure~\ref{fig:cifar_pics}-right). While we observed that the simultaneous training experiment generated qualitatively sharper samples, the non-simultaneous optimization did not lead to any significant training failures. However, as we discussed in the main text the generalization risk in the non-simultaneous training was significantly larger than that of simultaneous training.  Figure~\ref{fig:celeba_pics} shows the generated images in the CelebA experiments, which are qualitatively comparable between the two training algorithms. However,  as discussed in the text the trained discriminator had a harder task in classifying the training samples from the generated samples than in classifying the test samples from the generated samples, suggesting a potential overfitting of the training samples in the non-simultaneous training experiment.

\begin{figure}[t]
    \centering
    \includegraphics[width=.9\textwidth]{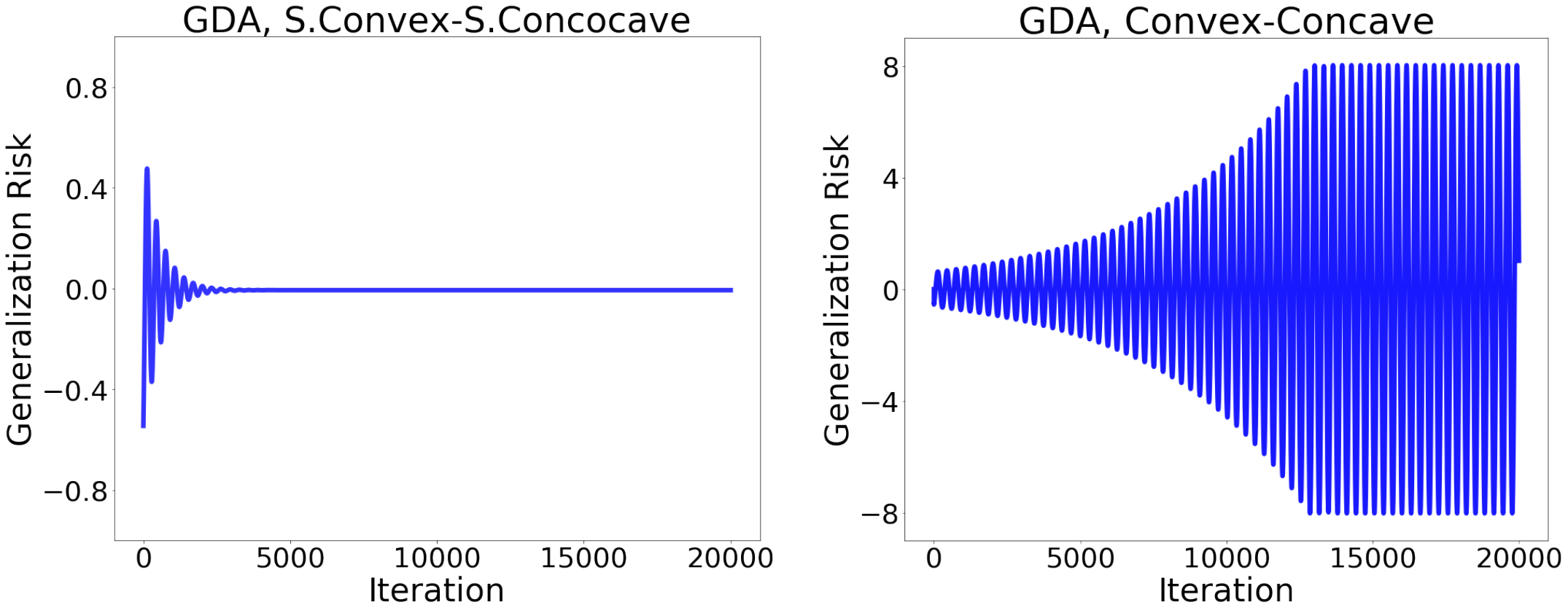}
    \caption{Generalization risk vs. iteration count of full-batch GDA optimization in the (Left) strongly-convex strongly-concave setting and (Right) bilinear  convex concave setting.}\label{fig:gda}
\end{figure}

\begin{figure}[t]
    \centering
    \includegraphics[width=.9\textwidth]{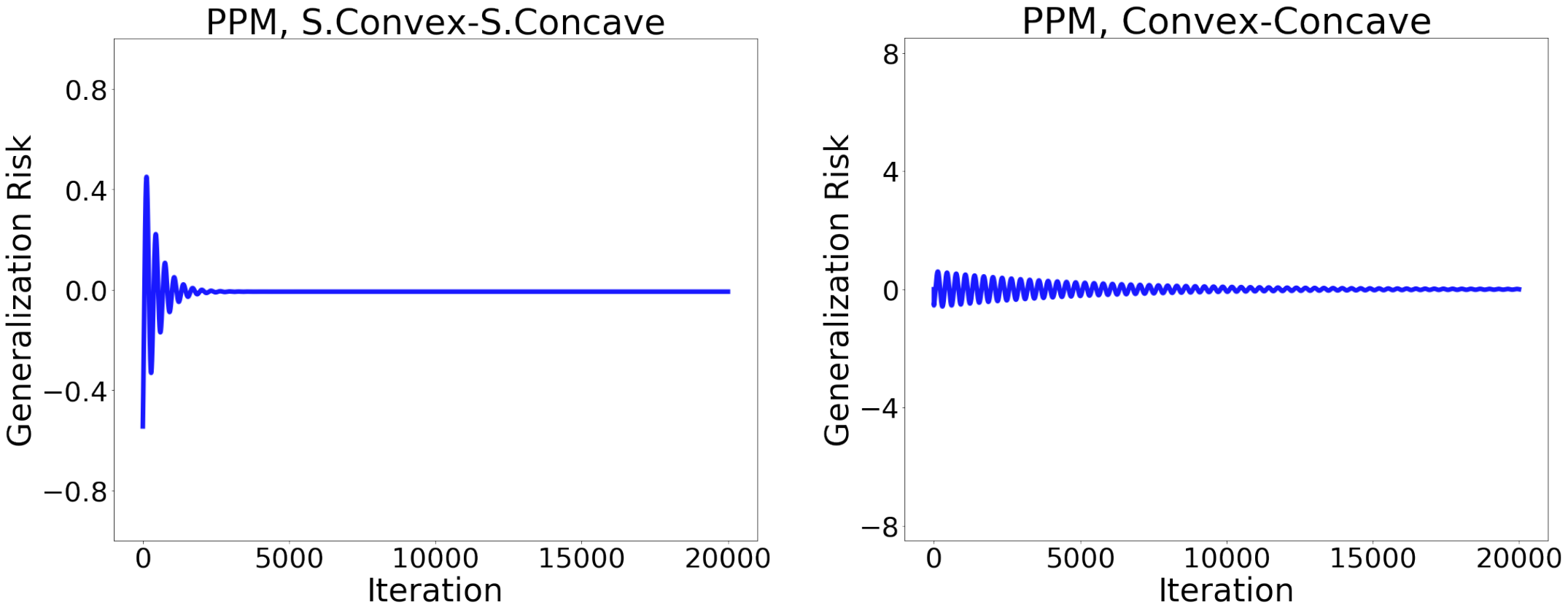}
    \caption{Generalization risk vs. iteration count of full-batch PPM optimization in the (Left) strongly-convex strongly-concave setting and (Right) bilinear  convex concave setting.}\label{fig:ppm}
\end{figure}

\begin{figure}[t]
    \centering
    \includegraphics[width=.9\textwidth]{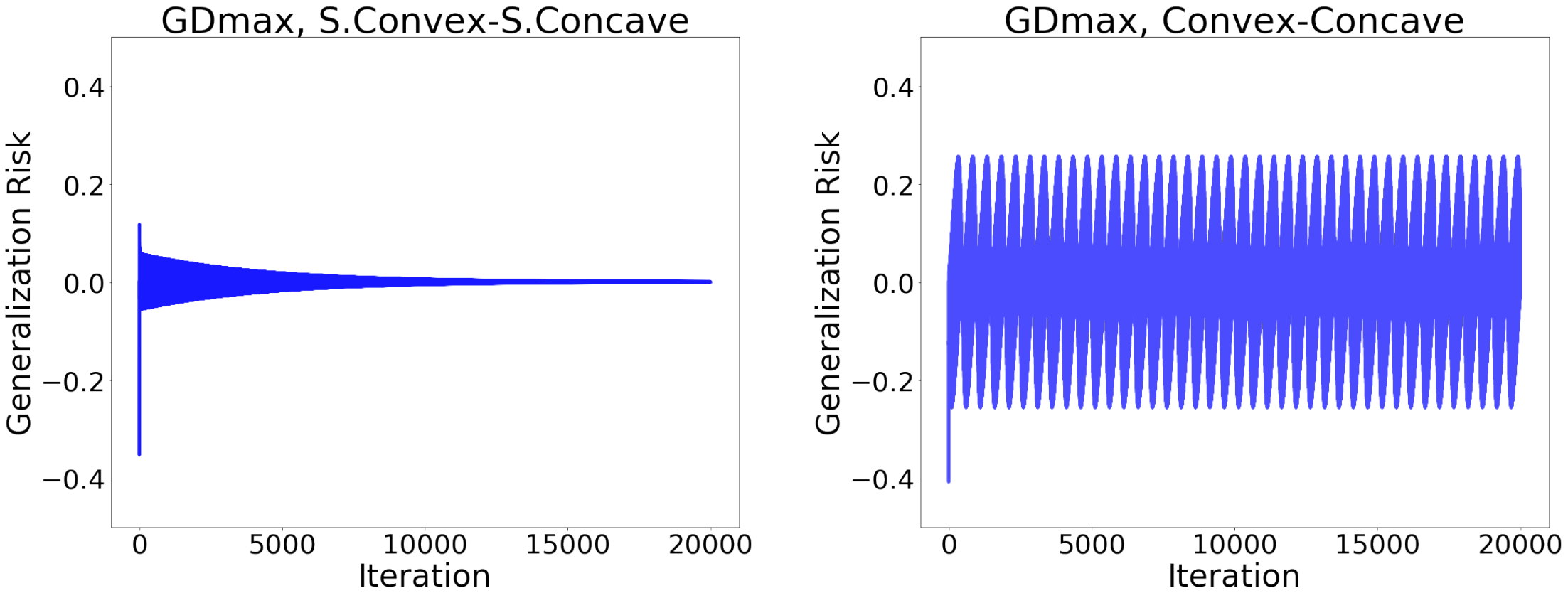}
    \caption{Generalization risk vs. iteration count of full-batch GDmax optimization in the (Left) strongly-convex strongly-concave setting and (Right) bilinear  convex concave setting.}\label{fig:gdmax}
\end{figure}

\begin{figure}[t]
    \centering
    \includegraphics[width=.9\textwidth]{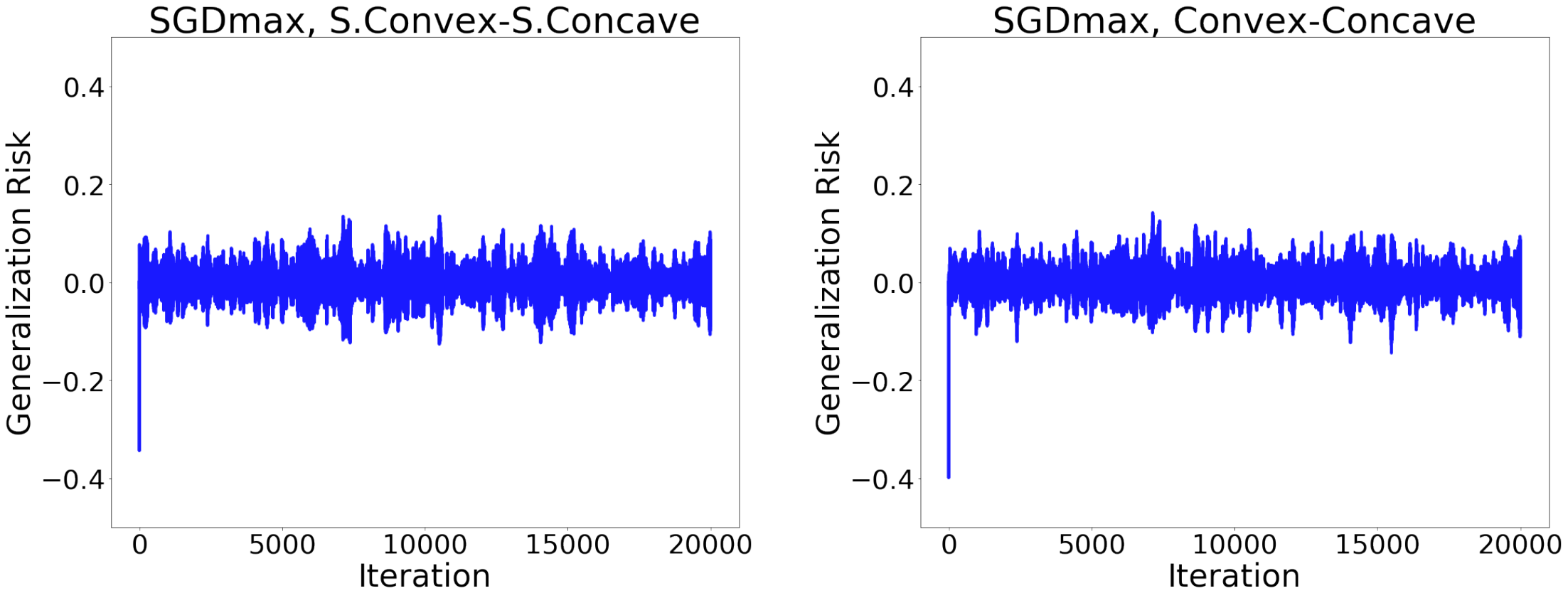}
    \caption{Generalization risk vs. iteration count of stochastic GDmax optimization in the (Left) strongly-convex strongly-concave setting and (Right) bilinear  convex concave setting.}\label{fig:sgdmax}
\end{figure}

\begin{figure}[t]
    \centering
    \includegraphics[width=.9\textwidth]{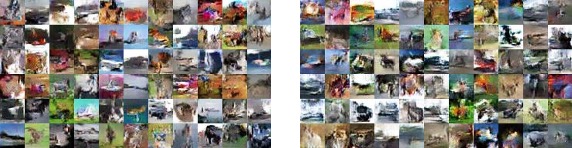}
    \caption{SN-GAN generated pictures in the CIFAR-10 experiments for (Left) simultaneous 1,1-Adam training (Right) non-simultaneous 1,100-Adam training.}\label{fig:cifar_pics}
\end{figure}

\begin{figure}[t]
    \centering
    \includegraphics[width=.9\textwidth]{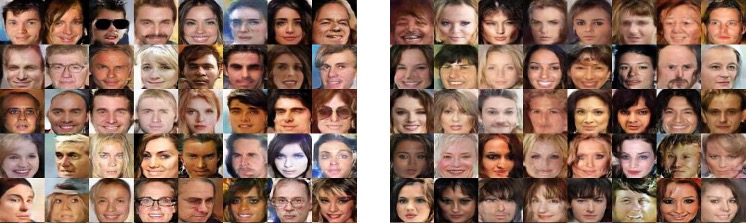}
    \caption{SN-GAN generated pictures in the CelebA-10 experiments for (Left) simultaneous 1,1-Adam training (Right) non-simultaneous 1,100-Adam training.}\label{fig:celeba_pics}
\end{figure}

\section{Proofs}

\subsection{The Expansivity Lemma for Minimax Problems}
We will apply the following lemma to analyze the stability of gradient-based methods. We call an update rule $G$ $\gamma$-expansive if for every $\bw,\bw'\in\mathcal{W},\btheta,\btheta'\in\Theta$ we have
\begin{equation}
    \Vert G(\bw,\btheta) - G(\bw',\btheta') \Vert_2 \le \gamma \sqrt{\Vert\bw-\bw' \Vert^2_2 + \Vert\btheta-\btheta' \Vert^2_2}.
\end{equation}
\begin{lemma}\label{Lemma: Expansive Minimax}
Consider the GDA and PPM updates for the following minimax problem 
\begin{equation}\label{Appendix: Eq: General Minimax Problem}
    \min_{\bw\in\mathcal{W}}\; \max_{\btheta\in \Theta}\; f(\bw,\btheta),
\end{equation}
where we assume objective $f(\bw,\btheta)$ satisfies Assumptions 1 and 2. Then,
\begin{enumerate}[wide,topsep=1pt,itemsep=0pt, labelwidth=!,labelindent=0pt]
    \item For a non-convex non-concave minimax problem, $G_{\gda}$ is $(1+\ell\max\{\alpha_w,\alpha_{\theta}\})$-expansive. Assuming $\eta<\frac{1}{\ell}$, $G_{\ppm}$ will be $1/(1-\ell\eta)$-expansive.
    \item For a convex concave minimax problem with $\alpha_w=\alpha_{\theta}$, $G_{\gda}$ is $\sqrt{1+\ell^2\alpha_w^2}$-expansive and $G_{\ppm}$ will be $1$-expansive.
    \item For a $\mu$-strongly-convex strongly-concave minimax problem, given that $\alpha_w=\alpha_{\theta}\le \frac{2\mu}{\ell^2}$, $G_{\gda}$ is $(1-\alpha_w\mu+\alpha_w^2\ell^2/2)$-expansive and $G_{\ppm}$ will be $1/(1+\mu\eta)$-expansive.
\end{enumerate}
\end{lemma}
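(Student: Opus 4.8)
The plan is to route both update maps through the saddle-gradient operator $F(\bw,\btheta):=\bigl(\nabla_\bw f(\bw,\btheta),\,-\nabla_\btheta f(\bw,\btheta)\bigr)$, regarded as a map of the stacked variable $\bu=(\bw,\btheta)$. Assumption~2 is exactly the statement that $F$ is $\ell$-Lipschitz, $\Vert F(\bu)-F(\bu')\Vert_2\le\ell\Vert\bu-\bu'\Vert_2$. I will use the standard facts that $F$ is monotone, $\langle F(\bu)-F(\bu'),\bu-\bu'\rangle\ge 0$, whenever $f$ is convex concave, and that in the $\mu$-strongly-convex strongly-concave case the decomposition $f=\widetilde f+\frac{\mu}{2}\bigl(\Vert\bw\Vert_2^2-\Vert\btheta\Vert_2^2\bigr)$ with $\widetilde f$ convex concave gives $F=\widetilde F+\mu\,\mathrm{Id}$ and hence $\langle F(\bu)-F(\bu'),\bu-\bu'\rangle\ge\mu\Vert\bu-\bu'\Vert_2^2$. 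When $\mathcal{W}\times\Theta$ is a proper convex subset, the GDA step ends with a Euclidean projection, which is nonexpansive, and the PPM step becomes the resolvent of $F$ together with the normal-cone operator $N_{\mathcal{W}\times\Theta}$, which is itself monotone; since adding a monotone operator preserves every inequality below, I will describe the unconstrained case first and record the constrained adjustments at the end.

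For GDA with common stepsize $\alpha_w=\alpha_\theta=\alpha$ the update is exactly $G_{\gda}(\bu)=\bu-\alpha F(\bu)$, so
\begin{equation*}
\Vert G_{\gda}(\bu)-G_{\gda}(\bu')\Vert_2^2=\Vert\bu-\bu'\Vert_2^2-2\alpha\,\langle F(\bu)-F(\bu'),\bu-\bu'\rangle+\alpha^2\Vert F(\bu)-F(\bu')\Vert_2^2 .
\end{equation*}
In the convex concave case the middle term is $\le 0$ and the last is $\le\alpha^2\ell^2\Vert\bu-\bu'\Vert_2^2$, giving the factor $\sqrt{1+\alpha^2\ell^2}$. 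In the $\mu$-strongly-convex strongly-concave case the middle term is $\le-2\alpha\mu\Vert\bu-\bu'\Vert_2^2$, giving $\sqrt{1-2\alpha\mu+\alpha^2\ell^2}$, which I then sharpen to the claimed $1-\alpha\mu+\alpha^2\ell^2/2$ using the elementary identity $(1-a+b)^2-(1-2a+2b)=(a-b)^2\ge 0$ with $a=\alpha\mu$ and $b=\alpha^2\ell^2/2$; the resulting factor lies in $(0,1]$ precisely because $\alpha\le 2\mu/\ell^2$. For the non-convex non-concave case, which permits $\alpha_w\ne\alpha_\theta$, I instead apply the triangle inequality directly, $\Vert G_{\gda}(\bu)-G_{\gda}(\bu')\Vert_2\le\Vert\bu-\bu'\Vert_2+\Vert(\alpha_w\Delta_w,\alpha_\theta\Delta_\theta)\Vert_2$ with $\Delta_w,\Delta_\theta$ the two block gradient differences, and bound $\Vert(\alpha_w\Delta_w,\alpha_\theta\Delta_\theta)\Vert_2\le\max\{\alpha_w,\alpha_\theta\}\sqrt{\Vert\Delta_w\Vert_2^2+\Vert\Delta_\theta\Vert_2^2}\le\ell\max\{\alpha_w,\alpha_\theta\}\,\Vert\bu-\bu'\Vert_2$ by Assumption~2, which gives the factor $1+\ell\max\{\alpha_w,\alpha_\theta\}$.

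For PPM, the first-order optimality conditions of the (uniquely solvable) proximal subproblem yield the resolvent identity $\bu=\bv+\eta F(\bv)$ with $\bv:=G_{\ppm}(\bu)$; subtracting the analogous identity for $\bu'$ and $\bv':=G_{\ppm}(\bu')$ gives $\bu-\bu'=(\bv-\bv')+\eta\bigl(F(\bv)-F(\bv')\bigr)$, and taking the inner product with $\bv-\bv'$ gives $\langle\bu-\bu',\bv-\bv'\rangle=\Vert\bv-\bv'\Vert_2^2+\eta\,\langle F(\bv)-F(\bv'),\bv-\bv'\rangle$. In the non-convex non-concave case Cauchy--Schwarz together with $\ell$-Lipschitzness bounds the last summand below by $-\eta\ell\Vert\bv-\bv'\Vert_2^2$, so $\langle\bu-\bu',\bv-\bv'\rangle\ge(1-\eta\ell)\Vert\bv-\bv'\Vert_2^2$, and a further Cauchy--Schwarz yields $\Vert\bv-\bv'\Vert_2\le\frac{1}{1-\eta\ell}\Vert\bu-\bu'\Vert_2$ once $\eta<1/\ell$. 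In the monotone and $\mu$-strongly-monotone cases the same identity gives $\langle\bu-\bu',\bv-\bv'\rangle\ge(1+\eta\mu)\Vert\bv-\bv'\Vert_2^2$ (with $\mu=0$ in the merely convex concave case), hence the factors $1$ and $1/(1+\eta\mu)$ respectively.

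Everything above is a routine expansion of squared norms; the only step that requires genuine care is the constrained setting. There I would verify that the PPM resolvent identity becomes the inclusion $\bu\in\bv+\eta\bigl(F(\bv)+N_{\mathcal{W}\times\Theta}(\bv)\bigr)$, so that each inner product with $\bv-\bv'$ picks up only a nonnegative normal-cone term and the estimates go through verbatim, and that post-composing $G_{\gda}$ with the projection onto $\mathcal{W}\times\Theta$ cannot increase any of the bounds because projection onto a convex set is nonexpansive. These are standard facts, but they are the single place at which the feasibility constraints enter, and I would state them explicitly before running the estimates.
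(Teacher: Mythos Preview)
Your proof is correct and, for the GDA parts, follows essentially the same route as the paper: expand $\Vert G_{\gda}(\bu)-G_{\gda}(\bu')\Vert_2^2$, drop the cross term via monotonicity (or strong monotonicity), and bound the quadratic term by $\ell$-Lipschitzness. Your elementary identity $(1-a+b)^2-(1-2a+2b)=(a-b)^2$ is equivalent to the paper's use of $\sqrt{1-t}\le 1-t/2$.

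Where you diverge slightly is in the PPM analysis. The paper handles the non-convex non-concave PPM by writing $v-v'=(u-u')-\eta(F(v)-F(v'))$ and applying the triangle inequality, and handles the strongly-convex strongly-concave PPM by the decomposition $f=\widetilde f+\tfrac{\mu}{2}(\Vert\bw\Vert^2-\Vert\btheta\Vert^2)$, rewriting the resolvent of $f$ as the resolvent of the convex-concave $\widetilde f$ applied to $\tfrac{1}{1+\mu\eta}\bu$, and then invoking the $1$-expansiveness already established. You instead take the inner product of the resolvent identity with $v-v'$ and read off the factors $1/(1-\eta\ell)$, $1$, and $1/(1+\mu\eta)$ directly from the sign of $\langle F(v)-F(v'),v-v'\rangle$. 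This unified argument is shorter and avoids the paper's auxiliary step of proving that $\widetilde f$ is $\sqrt{\ell^2-\mu^2}$-smooth (which the paper needs for the GDA strongly-convex case as well, whereas you bypass it via strong monotonicity). Your explicit treatment of the constrained case via the normal-cone inclusion and projection nonexpansiveness is also something the paper leaves implicit.
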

\begin{proof}
In Case 1 with non-convex non-concave minimax objective, $f$'s smoothness property implies that for every $(\bw,\btheta)$ and $(\bw',\btheta')$:
\begin{align*}
   \big\Vert G_{\gda}(\begin{bmatrix}
    \bw \\ 
    \btheta
    \end{bmatrix}) - G_{\gda}(\begin{bmatrix}
    \bw' \\ 
    \btheta'
    \end{bmatrix}) \big\Vert &= \big\Vert \begin{bmatrix}
    \bw - \bw' - \alpha_w(\nabla_{\bw}f(\bw,\btheta) - \nabla_{\bw}f(\bw',\btheta')) \\ 
    \btheta - \btheta' +  \alpha_{\theta}(\nabla_{\btheta}f(\bw,\btheta) - \nabla_{\btheta}f(\bw',\btheta'))
    \end{bmatrix}  \big\Vert \\
    &\le \big\Vert \begin{bmatrix}
    \bw - \bw'  \\ 
    \btheta - \btheta'
    \end{bmatrix}  \big\Vert + \big\Vert \begin{bmatrix}
    \alpha_w(\nabla_{\bw}f(\bw,\btheta) - \nabla_{\bw}f(\bw',\btheta')) \\ 
      \alpha_{\theta}(\nabla_{\btheta}f(\bw,\btheta) - \nabla_{\btheta}f(\bw',\btheta'))
    \end{bmatrix}  \big\Vert \\
    &\le (1+\ell\max\{\alpha_w,\alpha_{\theta}\})\big\Vert \begin{bmatrix}
    \bw   \\ 
    \btheta 
    \end{bmatrix} - \begin{bmatrix}
     \bw'  \\ 
     \btheta'
    \end{bmatrix}  \big\Vert,\numberthis
\end{align*}
which completes the proof for the GDA update. For the proximal operator, note that given $\eta\le \frac{1}{\ell}$ the proximal optimization reduces to optimizing a strongly-convex strongly-concave minimax problem with a unique saddle solution and therefore at $(\bw_{\ppm}, \btheta_{\ppm})=G_{\ppm}(\bw, \btheta)$ we have
\begin{equation}
    \bw_{\ppm} - \bw =  \eta \nabla_w f(\bw_{\ppm}, \btheta_{\ppm}), \quad \btheta - \btheta_{\ppm} =  \eta \nabla_{\theta} f(\bw_{\ppm}, \btheta_{\ppm}).
\end{equation}
As a result, we have
\begin{align*}
   &\big\Vert G_{\ppm}(\begin{bmatrix}
    \bw \\ 
    \btheta
    \end{bmatrix}) - G_{\ppm}(\begin{bmatrix}
    \bw' \\ 
    \btheta'
    \end{bmatrix}) \big\Vert  \\
    = &\big\Vert \begin{bmatrix}
    \bw - \bw' + \eta(\nabla_{\bw}f(G_{\ppm}(\bw,\btheta)) - \nabla_{\bw}f(G_{\ppm}(\bw',\btheta'))) \\ 
    \btheta - \btheta' -  \eta(\nabla_{\btheta}f(G_{\ppm}(\bw,\btheta)) - \nabla_{\btheta}f(G_{\ppm}(\bw',\btheta'))
    \end{bmatrix}  \big\Vert \\
    \le &\big\Vert \begin{bmatrix}
    \bw - \bw'  \\ 
    \btheta - \btheta'
    \end{bmatrix}  \big\Vert + \big\Vert \begin{bmatrix}
    \eta(\nabla_{\bw}f(G_{\ppm}(\bw,\btheta)) - \nabla_{\bw}f(G_{\ppm}(\bw',\btheta'))) \\ 
      \eta(\nabla_{\btheta}f(G_{\ppm}(\bw,\btheta)) - \nabla_{\btheta}f(G_{\ppm}(\bw',\btheta'))
    \end{bmatrix}  \big\Vert \\
    \le & \big\Vert \begin{bmatrix}
    \bw   \\ 
    \btheta 
    \end{bmatrix} - \begin{bmatrix}
     \bw'  \\ 
     \btheta'
    \end{bmatrix}  \big\Vert + \frac{\eta}{\ell} \big\Vert G_{\ppm}(\bw,\btheta) -G_{\ppm}(\bw',\btheta') \big\Vert.\numberthis 
\end{align*}
The final result of the above inequalities implies that
\begin{equation}
     (1-\frac{\eta}{\ell}) \big\Vert G_{\ppm}(\bw,\btheta) -G_{\ppm}(\bw',\btheta') \big\Vert \le \big\Vert \begin{bmatrix}
    \bw   \\ 
    \btheta 
    \end{bmatrix} - \begin{bmatrix}
     \bw'  \\ 
     \btheta'
    \end{bmatrix}  \big\Vert,
\end{equation}
which completes the proof for the case of non-convex non-concave case.

For convex-concave objectives,
the proof is mainly based on the monotonicity of convex concave objective's gradients \citep{rockafellar1976monotone}, implying that for every $\bw,\bw',\btheta,\btheta'$:
\begin{equation}
    \bigl( \begin{bmatrix}
    \bw   \\ 
    \btheta 
    \end{bmatrix} - \begin{bmatrix}
     \bw'  \\ 
     \btheta'
    \end{bmatrix} \bigr)^T \bigl( \begin{bmatrix}
    \nabla_{\bw} f(\bw,\btheta)   \\ 
    -\nabla_{\btheta} f(\bw,\btheta) 
    \end{bmatrix} - \begin{bmatrix}
     \nabla_{\bw} f(\bw',\btheta')   \\ 
    -\nabla_{\btheta} f(\bw',\btheta')
    \end{bmatrix}\bigr) \ge 0.
\end{equation}
As shown by \cite{rockafellar1976monotone}, the above property implies that the proximal operator for a convex-concave minimax objective will also be monotone and $1$-expansive for any positive choice of $\eta$. For the GDA update, note that due to the monotonicity property
\begin{align*}
    &\big\Vert G_{\gda}(\begin{bmatrix}
    \bw \\ 
    \btheta
    \end{bmatrix}) - G_{\gda}(\begin{bmatrix}
    \bw' \\ 
    \btheta'
    \end{bmatrix}) \big\Vert^2_2 \\
    =\, &\big\Vert \begin{bmatrix}
    \bw-\bw' \\ 
    \btheta-\btheta'
    \end{bmatrix} \big\Vert_2^2 - 2 \alpha_w \begin{bmatrix}
    \bw -\bw'  \\ 
    \btheta - \btheta'
    \end{bmatrix}^T \begin{bmatrix}
    \nabla_{\bw} f(\bw,\btheta) - \nabla_{\bw} f(\bw',\btheta')   \\ 
    -\nabla_{\btheta} f(\bw,\btheta) + \nabla_{\btheta} f(\bw',\btheta')
    \end{bmatrix} \\
    &\quad +\alpha_w^2 \big\Vert \begin{bmatrix}
    \nabla_{\bw} f(\bw,\btheta)-\nabla_{\bw} f(\bw',\btheta') \\ 
     \nabla_{\btheta} f(\bw,\btheta)-\nabla_{\btheta} f(\bw',\btheta')
    \end{bmatrix} \big\Vert_2^2 \\
    \le\, &(1+\alpha^2_w\ell^2)\big\Vert \begin{bmatrix}
    \bw-\bw' \\ 
    \btheta-\btheta'
    \end{bmatrix} \big\Vert_2^2,\numberthis
\end{align*}
which results in the following inequality and completes the proof for the convex-concave case:
\begin{align*}
    \big\Vert G_{\gda}(\begin{bmatrix}
    \bw \\ 
    \btheta
    \end{bmatrix}) - G_{\gda}(\begin{bmatrix}
    \bw' \\ 
    \btheta'
    \end{bmatrix}) \big\Vert_2 
    \le\, \sqrt{1+\alpha^2_w\ell^2}\,\big\Vert \begin{bmatrix}
    \bw-\bw' \\ 
    \btheta-\btheta'
    \end{bmatrix} \big\Vert_2.\numberthis
\end{align*}
Finally, for the strongly-convex strongly-concave case, note that $\tilde{f}(\bw,\btheta)=f(\bw,\btheta) +\frac{\mu}{2}(\Vert\btheta \Vert^2- \Vert\bw \Vert^2)$ will be convex-concave and hence the proximal update $(\bw_{\ppm}, \btheta_{\ppm})=G_{\ppm}(\bw, \btheta)$ will satisfy
\begin{align*}
    \frac{1}{1+\mu\eta}\bw &= \bw_{\ppm} + \frac{\eta}{1+\mu\eta}\nabla_{\bw} \tilde{f}(\bw_{\ppm},\btheta_{\ppm}), \\
    \frac{1}{1+\mu\eta}\btheta &= \btheta_{\ppm} - \frac{\eta}{1+\mu\eta}\nabla_{\btheta} \tilde{f}(\bw_{\ppm},\btheta_{\ppm}),\numberthis
\end{align*}
where the right-hand side follows from the proximal update for $\tilde{f}$ with stepsize ${\eta}/(1+\mu\eta)$ and hence $1$-expansive. Therefore, the proximal update for $f$ will be $1/(1+\mu\eta)$-expansive. Furthemore, for GDA udpates note that
\begin{align*}
    &\big\Vert G_{\gda}(\begin{bmatrix}
    \bw \\ 
    \btheta
    \end{bmatrix}) - G_{\gda}(\begin{bmatrix}
    \bw' \\ 
    \btheta'
    \end{bmatrix}) \big\Vert^2_2 \\
    =\, &(1-\mu\alpha_w)^2\big\Vert \begin{bmatrix}
    \bw-\bw' \\ 
    \btheta-\btheta'
    \end{bmatrix} \big\Vert_2^2 - 2 (1-\mu\alpha_w)\alpha_w \begin{bmatrix}
    \bw -\bw'  \\ 
    \btheta - \btheta'
    \end{bmatrix}^T \begin{bmatrix}
    \nabla_{\bw} \tilde{f}(\bw,\btheta) - \nabla_{\bw} \tilde{f}(\bw',\btheta')   \\ 
    -\nabla_{\btheta} \tilde{f}(\bw,\btheta) + \nabla_{\btheta} \tilde{f}(\bw',\btheta')
    \end{bmatrix} \\
    &\quad +\alpha_w^2 \big\Vert \begin{bmatrix}
    \nabla_{\bw}  \tilde{f}(\bw,\btheta)-\nabla_{\bw}  \tilde{f}(\bw',\btheta') \\ 
     \nabla_{\btheta}  \tilde{f}(\bw,\btheta)-\nabla_{\btheta}  \tilde{f}(\bw',\btheta')
    \end{bmatrix} \big\Vert_2^2 \\
    \le\, &((1-\mu\alpha_w)^2+\alpha^2_w(\ell^2-\mu^2))\big\Vert \begin{bmatrix}
    \bw-\bw' \\ 
    \btheta-\btheta'
    \end{bmatrix} \big\Vert_2^2 \\
    \le\, &(1-2\mu\alpha_w+\alpha^2_w\ell^2)\big\Vert \begin{bmatrix}
    \bw-\bw' \\ 
    \btheta-\btheta'
    \end{bmatrix} \big\Vert_2^2.\numberthis
\end{align*}
Note that the above result finishes the proof because $\sqrt{1-t}\le 1-t/2$ holds for every $t\le 1$, which is based on the lemma's assumption $\alpha_w\le 2\mu/\ell^2$. Also, the last inequality in the above holds since $\tilde{f}$ will be $\sqrt{\ell^2-\mu^2}$-smooth. This is because $f$ is assumed to be $\ell$-smooth, implying that for every $\bw,\bw',\btheta,\btheta'$ we have
\begin{align*}
    \ell^2\big\Vert\begin{bmatrix}
    \bw-\bw' \\ 
    \btheta-\btheta'
    \end{bmatrix} \big\Vert_2^2 &\ge \big\Vert \begin{bmatrix}
    \nabla_{\bw}  f(\bw,\btheta)-\nabla_{\bw}  f(\bw',\btheta') \\ 
     \nabla_{\btheta}  f(\bw,\btheta)-\nabla_{\btheta}  f(\bw',\btheta')
    \end{bmatrix} \big\Vert_2^2 \\
    &= \mu^2\big\Vert\begin{bmatrix}
    \bw-\bw' \\ 
    \btheta-\btheta'
    \end{bmatrix}\big\Vert^2_2 + 2\mu\begin{bmatrix}
    \bw-\bw' \\ 
    \btheta'-\btheta
    \end{bmatrix}^T\begin{bmatrix}
    \nabla_{\bw}  \tilde{f}(\bw,\btheta)-\nabla_{\bw}  \tilde{f}(\bw',\btheta') \\ 
     \nabla_{\btheta}  \tilde{f}(\bw,\btheta)-\nabla_{\btheta}  \tilde{f}(\bw',\btheta')
    \end{bmatrix} \\
     & \quad  + \big\Vert \begin{bmatrix}
    \nabla_{\bw}  \tilde{f}(\bw,\btheta)-\nabla_{\bw}  \tilde{f}(\bw',\btheta') \\ 
     \nabla_{\btheta}  \tilde{f}(\bw,\btheta)-\nabla_{\btheta}  \tilde{f}(\bw',\btheta') 
     \end{bmatrix} \big\Vert^2_2\\
     &\ge \mu^2\big\Vert\begin{bmatrix}
    \bw-\bw' \\ 
    \btheta-\btheta'
    \end{bmatrix} \big\Vert^2_2 + \big\Vert \begin{bmatrix}
    \nabla_{\bw}  \tilde{f}(\bw,\btheta)-\nabla_{\bw}  \tilde{f}(\bw',\btheta') \\ 
     \nabla_{\btheta}  \tilde{f}(\bw,\btheta)-\nabla_{\btheta}  \tilde{f}(\bw',\btheta') 
     \end{bmatrix} \big\Vert^2_2,\numberthis
\end{align*}
where the inequality uses the monotonicity of the gradient operator. The final inequality shows that $\tilde{f}$ will be $\sqrt{\ell^2-\mu^2}$-smooth and hence finishes the proof.
\end{proof}

\subsection{Proof of Theorem 1
}
\begin{thm*}
Assume minimax learner $A$ is $\epsilon$-uniformly stable in minimization. Then, $A$'s expected generalization risk is bounded as $\epsilon_{gen}(A)\le \epsilon$.
\end{thm*}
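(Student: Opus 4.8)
The plan is to adapt the classical ``uniform stability implies generalization'' argument of \cite{bousquet2002stability,hardt2016train} to the worst-case risk. The only genuinely new feature is that $\epsilon_{\gen}(A)$ compares a maximum over $\btheta$ of the population risk with a maximum over $\btheta$ of the empirical risk rather than a single risk pair, so the first task is to reduce it to a one-player generalization gap.

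First I would substitute the max player's optimal response into the learned model. For each minimization output $\bw$ fix a measurable selection $\phi(\bw)\in\arg\max_{\btheta\in\Theta}R(\bw,\btheta)$. Since $R(\bw)=R(\bw,\phi(\bw))$ exactly, while $R_S(\bw)=\max_{\btheta}R_S(\bw,\btheta)\ge R_S(\bw,\phi(\bw))$, we obtain the deterministic inequality
\begin{equation*}
R(A_w(S))-R_S(A_w(S))\ \le\ R\bigl(A_w(S),\phi(A_w(S))\bigr)-R_S\bigl(A_w(S),\phi(A_w(S))\bigr),
\end{equation*}
whose right-hand side, with the induced loss $\ell(\bw;\bz):=f(\bw,\phi(\bw);\bz)$, is precisely the ordinary train/test gap $\mathbb{E}_{\bZ}[\ell(A_w(S);\bZ)]-\tfrac1n\sum_i\ell(A_w(S);\bz_i)$ of the single-output learner $A_w$ under $\ell$. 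Taking $\mathbb{E}_{S,A}$, it then suffices to show that this quantity is at most $\epsilon$.

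Next I would run the standard ghost-sample symmetrization. Let $S'=(\bz'_1,\dots,\bz'_n)$ be drawn i.i.d.\ from $P_Z$ independently of $(S,A)$, and let $S^{(i)}$ denote $S$ with $\bz_i$ replaced by $\bz'_i$. Replacing the test point $\bZ$ by the fresh $\bz'_i$ and using that exchanging $\bz_i\leftrightarrow\bz'_i$ is measure preserving on $(S,S')$ (it sends $S\mapsto S^{(i)}$), the population part rewrites as $\tfrac1n\sum_i\mathbb{E}_{S,S',A}[f(A_w(S^{(i)}),\phi(A_w(S^{(i)}));\bz_i)]$ and the empirical part as $\tfrac1n\sum_i\mathbb{E}_{S,S',A}[f(A_w(S),\phi(A_w(S));\bz_i)]$, so the gap equals $\tfrac1n\sum_i\mathbb{E}_{S,S',A}\bigl[f(A_w(S^{(i)}),\phi(A_w(S^{(i)}));\bz_i)-f(A_w(S),\phi(A_w(S));\bz_i)\bigr]$. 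Since $S$ and $S^{(i)}$ differ in only the $i$-th sample, I would invoke $\epsilon$-uniform stability in minimization — which by definition holds for \emph{every} $\btheta\in\Theta$ — with $\btheta$ taken to be the best response $\phi(A_w(S^{(i)}))$ of $A_w(S^{(i)})$, exploiting that this $\btheta$ does not depend on the resampled coordinate $\bz_i$, so one may condition on the algorithm's randomness and on $S$ with its $i$-th entry removed before applying the stability bound and conclude $\epsilon_{\gen}(A)\le\epsilon$.

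The main obstacle is precisely this last step: the two neighboring datasets $S$ and $S^{(i)}$ generally induce \emph{different} worst-case maximizers, $\phi(A_w(S))\neq\phi(A_w(S^{(i)}))$, so the symmetrized gap is not literally a fixed-$\btheta$ stability increment; the residual discrepancy coming from comparing $f$ at $\phi(A_w(S))$ versus $\phi(A_w(S^{(i)}))$ must be handled by a second symmetrization in $\bz_i\leftrightarrow\bz'_i$ together with the appropriate conditioning before the expectation over the algorithm's internal randomness is taken. Showing that this best-response substitution does not destroy stability is the heart of the argument, and it is exactly why the stability notion has to be imposed uniformly over the whole feasible set $\Theta$ rather than at a single $\btheta$; everything else is the routine symmetrization bookkeeping of the single-learner case.
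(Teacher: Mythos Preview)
Your initial reduction via the population best response $\phi(\bw)\in\arg\max_{\btheta}R(\bw,\btheta)$ is a correct upper bound, and the symmetrization is standard. The gap is exactly where you flag it, and your proposed repair does not close it. After symmetrization you must control
\[
\mathbb{E}_{S,S',A}\bigl[f(A_w(S^{(i)}),\phi(A_w(S^{(i)}));\bz_i)-f(A_w(S),\phi(A_w(S));\bz_i)\bigr],
\]
which involves \emph{two different} $\btheta$'s. Splitting off the stability increment at $\btheta=\phi(A_w(S^{(i)}))$ leaves the residual
\[
\mathbb{E}\bigl[f(A_w(S),\phi(A_w(S^{(i)}));\bz_i)-f(A_w(S),\phi(A_w(S));\bz_i)\bigr],
\]
and this has no controllable sign: $\phi(A_w(S))$ maximizes the \emph{population} risk $R(A_w(S),\cdot)$, not $f(A_w(S),\cdot;\bz_i)$, so you cannot argue the difference is $\le 0$ pointwise, and you cannot integrate out $\bz_i$ either because $A_w(S)$ and $\phi(A_w(S))$ both depend on $\bz_i$. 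A ``second symmetrization'' in $\bz_i\leftrightarrow\bz'_i$ just exchanges the roles of $S$ and $S^{(i)}$ and reproduces the same obstruction. Separately, conditioning on the algorithm's internal randomness before applying stability is illegitimate: the definition only gives you a bound on $\mathbb{E}_A[\cdot]$, so once you fix a realization of $A$ to freeze $\phi(A_w(S^{(i)}))$, the stability inequality is no longer available.

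The paper avoids the mismatch by never introducing a population best response. It works with the \emph{per-sample} maximum $f_{\max}(\bw;\bz):=\max_{\btheta}f(\bw,\btheta;\bz)$ and uses the elementary inequality
\[
\max_{\btheta}f(\bw_1,\btheta;\bz)-\max_{\btheta}f(\bw_2,\btheta;\bz)\ \le\ \max_{\btheta}\bigl\{f(\bw_1,\btheta;\bz)-f(\bw_2,\btheta;\bz)\bigr\},
\]
so that after symmetrization the two terms being compared carry the \emph{same} $\btheta$, and uniform stability in minimization (which is quantified over all $\btheta$) applies directly. That one-line inequality is the missing idea; once you have it, no second symmetrization or conditioning gymnastics is needed.
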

\begin{proof}
Here, we provide a proof based on standard techniques in stability-based generalization theory \citep{bousquet2002stability}.
To show this theorem, consider two independent datasets $S=(z_1,\ldots,z_n)$ and $S'=(z'_1,\ldots,z'_n)$. Using $S^{(i)}=(z_1,\ldots,z_{i-1},z'_i,z_{i+1},\ldots,z_n)$ to denote the dataset with the $i$th sample replaced with $z'_i$, we will have
\begin{align*}
    \bbE_S\bbE_A[R_S(A_w(S))] &= \bbE_S\bbE_A\biggl[\frac{1}{n}\sum_{i=1}^n \max_{\btheta\in\Theta} f(A_w(S),\btheta;z_i) \biggr] \\
    &= \bbE_S\bbE_{S'}\bbE_A\biggl[\frac{1}{n}\sum_{i=1}^n \max_{\btheta\in\Theta} f(A_w(S^{(i)}),\btheta;z'_i) \biggr] \\
    &= \bbE_S\bbE_{S'}\bbE_A\biggl[\frac{1}{n}\sum_{i=1}^n \max_{\btheta\in\Theta} f(A_w(S),\btheta;z'_i) \biggr] + \zeta \\
    &= \bbE_S\bbE_A[R(A_w(S))] + \zeta.\numberthis
\end{align*}
In the above, $\zeta$ is defined as
\begin{equation}
    \zeta := \bbE_S\bbE_{S'}\bbE_A\biggl[\frac{1}{n}\sum_{i=1}^n \bigl[ \max_{\btheta\in\Theta} f(A_w(S^{(i)}),\btheta;z'_i) -  \max_{\btheta'\in\Theta} f(A_w(S),\btheta';z'_i) \bigr] \biggr].
\end{equation}
Note that due to the uniform stability assumption for every data point $z$ and datasets $S,S'$ with only one different sample we have
\begin{align*}
    \max_{\btheta\in\Theta} f(A_w(S),\btheta;z) -  \max_{\btheta'\in\Theta} f(A_w(S'),\btheta';z) \le  \max_{\btheta\in\Theta}\bigl\{ f(A_w(S),\btheta;z)- f(A_w(S'),\btheta;z)\bigr\} \le \epsilon.\numberthis
\end{align*}
Therefore, replacing the order of $S,S'$ in the above inequality we obtain
\begin{align*}
    \bigl\vert \max_{\btheta\in\Theta} f(A_w(S),\btheta;z) -  \max_{\btheta'\in\Theta} f(A_w(S'),\btheta';z) \bigr\vert \le  \epsilon.\numberthis
\end{align*}
As a result, we conclude that $\vert \zeta \vert\le \epsilon$ which shows that
\begin{equation}
    \big\vert \bbE_S\bbE_A[R_S(A_w(S))] -  \bbE_S\bbE_A[R(A_w(S))] \big\vert \le \epsilon.
\end{equation}
The proof is hence complete.
\end{proof}

\subsection{Proof of Theorem 2
}
Note that in the following discussion we define PPmax as a proximal point method which fully optimizes the maximization variable at every iteration with the following update rule:
\begin{equation}
    G_{\ppmax}(\begin{bmatrix}
    \bw \\ 
    \btheta
    \end{bmatrix}) :=  \underset{\widetilde{\bw}\in\mathcal{W}}{\arg\!\min}\:\underset{\widetilde{\btheta}\in\Theta}{\arg\!\max}\: \bigl\{ f(\widetilde{\bw},\widetilde{\btheta})\nonumber  +\frac{1}{2\eta_{w}}\Vert \widetilde{\bw} - \bw \Vert^2_2\bigr\}.
\end{equation}
\begin{thm*}
Let minimax learning objective $f(\cdot,\cdot;\bz)$ be $\mu$-strongly convex strongly-concave and satisfy Assumption 2 for every $\bz$. Assume that Assumption 1 holds for convex-concave $\widetilde{f}(\bw,\btheta;\bz):=f(\bw,\btheta;\bz)+\frac{\mu}{2}(\Vert \btheta\Vert^2_2-\Vert \bw\Vert^2_2)$ and every $\bz$. 
Then,
\begin{enumerate}[wide,topsep=1pt,itemsep=0pt, labelwidth=!,labelindent=0pt]
    \item Full-batch and Stochastic GDA and GDmax with constant stepsize $\alpha_w=\alpha_{\theta}\le \frac{2\mu}{\ell^2}$ for $T$ iterations will satisfy
    \begin{equation}
        \epsilon_{\gen}(\gda),\epsilon_{\gen}(\sgda)\le \frac{2LL_w}{(\mu-\frac{\alpha_w\ell^2}{2})n},\quad \epsilon_{\gen}(\gdmax),\epsilon_{\gen}(\sgdmax)\le \frac{2L^2_w}{\mu n}.
    \end{equation}
    \item Full-batch and stochastic PPM and PPmax with constant parameter $\eta$ for $T$ iterations will satisfy
    \begin{equation}
        \epsilon_{\gen}(\ppm),\epsilon_{\gen}(\sppm)\le \frac{2LL_w}{\mu n}, \quad \epsilon_{\gen}(\ppmax),\epsilon_{\gen}(\sppmax)\le \frac{2L^2_w}{\mu n}.
    \end{equation}
\end{enumerate}
\end{thm*}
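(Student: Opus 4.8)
The plan is to reduce every case to a uniform‑stability estimate and then invoke Theorem~\ref{Thm: Stability and Generalization}: if I can show that for two datasets $S,S'$ differing in a single sample the algorithm's outputs satisfy $\Vert \bw^{(T)}_S - \bw^{(T)}_{S'}\Vert_2 \le \delta$, then by the $L_w$‑Lipschitzness of $f$ in $\bw$ the algorithm is $L_w\delta$‑uniformly stable in minimization, hence $\epsilon_{\gen}\le L_w\delta$. So the entire argument amounts to bounding the distance between two coupled trajectories run on $S$ and $S'$.

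First I would set up the coupling. Let $\bv^{(t)}=(\bw^{(t)},\btheta^{(t)})$ and $\bv'^{(t)}$ be the iterates on $S$ and $S'$, started identically so $\delta_0=0$, and let $\delta_t=\Vert \bv^{(t)}-\bv'^{(t)}\Vert_2$ (for GDmax/PPmax I track only the $\bw$‑block). For GDA and PPM I write $\bv^{(t+1)}=G_S(\bv^{(t)})$ with $G_S$ the update operator of the empirical objective $R_S$; since $R_S$ inherits $\mu$‑strong convexity–concavity, $\ell$‑smoothness and the Assumption~1 bounds on $\widetilde{R_S}$ from its summands, Lemma~\ref{Lemma: Expansive Minimax}(3) gives $\Vert G_S(\bv)-G_S(\bv')\Vert_2\le \gamma\Vert \bv-\bv'\Vert_2$ with $\gamma_{\gda}=1-\alpha_w\mu+\alpha_w^2\ell^2/2$ and $\gamma_{\ppm}=1/(1+\mu\eta)$, both strictly below $1$. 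Splitting $\delta_{t+1}\le \Vert G_S(\bv^{(t)})-G_S(\bv'^{(t)})\Vert_2+\Vert G_S(\bv'^{(t)})-G_{S'}(\bv'^{(t)})\Vert_2\le \gamma\delta_t+p_t$, where $p_t$ is the discrepancy caused by the one changed sample. In the full‑batch case $R_S$ and $R_{S'}$ differ only by $\tfrac1n\bigl(f(\cdot;z_{i_0})-f(\cdot;z'_{i_0})\bigr)$, so for GDA $p_t\le \tfrac{\alpha_w}{n}\Vert \nabla f(\bv'^{(t)};z_{i_0})-\nabla f(\bv'^{(t)};z'_{i_0})\Vert_2\le \tfrac{2\alpha_w L}{n}$ by joint $L$‑Lipschitzness, and for PPM comparing the saddle points of the two $(\mu+1/\eta)$‑strongly‑convex–strongly‑concave regularized objectives (which differ by a $\tfrac{2L}{n}$‑Lipschitz perturbation) gives $p_t\le \tfrac{2L\eta}{n(1+\mu\eta)}$. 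In the stochastic case, at each iteration the sampled index misses $i_0$ with probability $1-\tfrac1n$ (so $p_t=0$ and the contraction is intact) and equals $i_0$ with probability $\tfrac1n$, where the single‑sample version of the same estimate gives $p_t\le 2\alpha_w L$ (resp.\ $\le 2L\eta/(1+\mu\eta)$); taking expectations reproduces exactly the full‑batch recursion $\bbE[\delta_{t+1}]\le \gamma\,\bbE[\delta_t]+\text{const}/n$.

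Then I would solve the recursion: with $\delta_0=0$ one gets $\delta_T\le p/(1-\gamma)$, and the key arithmetic is that $1-\gamma_{\gda}=\alpha_w(\mu-\alpha_w\ell^2/2)$ and $1-\gamma_{\ppm}=\mu\eta/(1+\mu\eta)$, so the stepsize (resp.\ parameter) cancels against the factor $\alpha_w$ (resp.\ $\eta$) hidden in $p$, leaving a $T$‑independent bound of order $1/n$: $\delta_T\le \tfrac{2L}{(\mu-\alpha_w\ell^2/2)n}$ for GDA and $\tfrac{2L}{\mu n}$ for PPM. Multiplying by $L_w$ and applying Theorem~\ref{Thm: Stability and Generalization} yields the two stated bounds. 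For GDmax and PPmax the same scheme is applied to the outer variable only, descending (resp.\ taking the proximal step of) the maximal empirical risk $R_S(\bw):=\max_{\btheta}R_S(\bw,\btheta)$: this function is $\mu$‑strongly convex (a supremum of $\mu$‑strongly‑convex functions) and, using strong concavity of the inner problem, smooth, so the GD step is $(1-\alpha_w\mu)$‑expansive and the proximal step $1/(1+\mu\eta)$‑expansive, while the per‑iteration perturbation is controlled by the $L_w$‑Lipschitzness of the maximal risk, giving $p_t=O(L_w/n)$ and hence the bound $2L_w^2/(\mu n)$.

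The hard part will be the GDmax/PPmax per‑iteration term, i.e.\ comparing the outer gradient (or proximal map) of $R_S(\cdot)$ with that of $R_{S'}(\cdot)$ after a one‑sample change: because the maximum does not distribute over the empirical sum, I cannot write the $S'$ outer gradient as the $S$ outer gradient plus an explicit $\tfrac1n$ term the way I can for GDA, so instead I must quantify how far the inner maximizer moves — bounding $\Vert \btheta^*_S(\bw)-\btheta^*_{S'}(\bw)\Vert_2$ via $\mu$‑strong concavity and the $\tfrac{2L}{n}$‑Lipschitz gap between the two inner objectives — and propagate this through the outer gradient via $\ell$‑smoothness of $f$. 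This difficulty essentially disappears in the stochastic case, where on the "bad" step the two updates simply use the two different single‑sample maximal functions $\max_\btheta f(\cdot,\btheta;z_{i_0})$ and $\max_\btheta f(\cdot,\btheta;z'_{i_0})$, each $L_w$‑Lipschitz, so the perturbation is at most $2\alpha_w L_w$ outright. I would also need to verify that the stated stepsize condition $\alpha_w\le 2\mu/\ell^2$ is strong enough to keep the GDmax outer gradient step contractive given the smoothness constant $\ell(1+\ell/\mu)$ of the maximal risk, tightening the condition if necessary.
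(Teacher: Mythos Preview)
Your proposal is correct and follows essentially the same approach as the paper: contract via Lemma~\ref{Lemma: Expansive Minimax}, bound the per-step perturbation by $O(L/n)$ (or $O(L_w/n)$ for the max variants), sum the resulting geometric series, and invoke Theorem~\ref{Thm: Stability and Generalization}. For GDmax/PPmax the paper sidesteps the full-batch perturbation issue you flag by directly citing the strongly-convex stability result of \cite{hardt2016train}, treating the maximized per-sample loss $\widetilde{f}_{\max}(\bw;\bz)+\tfrac{\mu}{2}\Vert\bw\Vert_2^2$ as the $\mu$-strongly-convex, $L_w$-Lipschitz objective to which SGD/GD is applied.
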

\begin{proof}
We start by proving the following lemmas.
\begin{lemma}[Growth Lemma]\label{Lemma: Growth}
Consider two sequences of updates $G_1,\ldots,G_T$ and $G'_1,\ldots.G'_t$ with the same starting point $\bw_0=\bw'_0,\btheta_0=\btheta'_0$. We define $\delta_t:= \sqrt{\Vert \bw_t-\bw'_t \Vert^2+\Vert \btheta_t-\btheta'_t\Vert^2}$. Then, if $G_T,G'_T$ is $\xi$-expansive we have $\delta_{t+1}\le \xi \delta_t$ for identical $G_t=G'_t$, and in general we have
\begin{equation}
    \delta_{t+1}\le \min\{\xi,1\}\delta_t + \sup_{\bw,\btheta}\{ \Vert [\bw,\btheta] - G_t([\bw,\btheta]) \Vert \} + \sup_{\bw,\btheta}\{ \Vert [\bw,\btheta] - G'_t([\bw,\btheta]) \Vert \}.
\end{equation}
Furthermore, for any constant $r$ we have 
\begin{equation}
    \delta_{t+1}\le \xi\delta_t + \sup_{\bw,\btheta}\{ \Vert r[\bw,\btheta] - G_t([\bw,\btheta]) \Vert \} + \sup_{\bw,\btheta}\{ \Vert r[\bw,\btheta] - G'_t([\bw,\btheta]) \Vert \}.
\end{equation}
Finally, if $G_t=G+\tilde{G}_t$ and $G'_t= G + \tilde{G}'_t$ for $\xi_0$-expansive $G$ and $\xi_1$-expansive $\tilde{G}_t$ and $\tilde{G}'_t$, then for any constant $r$ we have
\begin{equation}
    \delta_{t+1}\le (\xi_0+\xi_1)\delta_t + \sup_{\bw,\btheta}\{ \Vert r[\bw,\btheta] - \tilde{G}_t([\bw,\btheta]) \Vert \} + \sup_{\bw,\btheta}\{ \Vert r[\bw,\btheta] - \tilde{G}'_t([\bw,\btheta]) \Vert \}.
\end{equation}
\end{lemma}
\begin{proof}
The first part of the theorem is a direct consequence of the definition of $\xi$-expansive operators. For the second part, note that
\begin{align*}
    \delta_{t+1} &= \Vert G_t([\bw_t,\btheta_t]) - G'_t([\bw'_t,\btheta'_t])  \Vert \\
    &= \Vert G_t([\bw_t,\btheta_t]) - [\bw_t,\btheta_t] + [\bw_t,\btheta_t] -[\bw'_t,\btheta'_t] +[\bw'_t,\btheta'_t]  - G'_t([\bw'_t,\btheta'_t])  \Vert \\
    &\le \Vert G_t([\bw_t,\btheta_t]) - [\bw_t,\btheta_t]\Vert + \Vert  [\bw_t,\btheta_t] -[\bw'_t,\btheta'_t] \Vert  + \Vert [\bw'_t,\btheta'_t]  - G'_t([\bw'_t,\btheta'_t]) \Vert \\
    &\le \delta_t + \sup_{\bw,\btheta}\{ \Vert [\bw,\btheta] - G_t([\bw,\btheta]) \Vert \} + \sup_{\bw,\btheta}\{ \Vert [\bw,\btheta] - G'_t([\bw,\btheta]) \Vert \}.\numberthis
\end{align*}
In addition, we can bound $\delta_{t+1}$ as
\begin{align*}
    \delta_{t+1} &= \Vert G_t([\bw_t,\btheta_t]) - G'_t([\bw'_t,\btheta'_t])  \Vert \\
    &= \Vert G_t([\bw_t,\btheta_t]) - G_t([\bw'_t,\btheta'_t])  + G_t([\bw'_t,\btheta'_t])  - G'_t([\bw'_t,\btheta'_t])  \Vert  \\
    &\le \Vert G_t([\bw_t,\btheta_t]) - G_t([\bw'_t,\btheta'_t])\Vert + \Vert G_t([\bw'_t,\btheta'_t])  - G'_t([\bw'_t,\btheta'_t])  \Vert \\
    &\le \xi \delta_t + \Vert G_t([\bw'_t,\btheta'_t])  - r[\bw'_t,\btheta'_t]  \Vert + \Vert  r[\bw'_t,\btheta'_t]  - G'_t([\bw'_t,\btheta'_t]) \Vert \\
    &\le \xi\delta_t + \sup_{\bw,\btheta}\{ \Vert r[\bw,\btheta] - G_t([\bw,\btheta]) \Vert \} + \sup_{\bw,\btheta}\{ \Vert r[\bw,\btheta] - G'_t([\bw,\btheta]) \Vert \}.\numberthis
\end{align*}
The above result for general constant $r$ and also combined with the previous result with $r=1$ finishes the proof of the first two parts. For the final segment of the lemma, note that 
\begin{align*}
    \delta_{t+1} &= \Vert G_t([\bw_t,\btheta_t]) - G'_t([\bw'_t,\btheta'_t])  \Vert \\
    &= \Vert G([\bw_t,\btheta_t]) + \tilde{G}_t([\bw_t,\btheta_t]) - G([\bw'_t,\btheta'_t]) - \tilde{G}'_t([\bw'_t,\btheta'_t])  \Vert \\
    &\le \Vert G([\bw_t,\btheta_t])- G([\bw'_t,\btheta'_t])\Vert + \Vert \tilde{G}_t([\bw_t,\btheta_t])  - \tilde{G}'_t([\bw'_t,\btheta'_t])  \Vert
    \\
    &\le \xi_0\delta_t + \Vert \tilde{G}_t([\bw_t,\btheta_t])  - \tilde{G}'_t([\bw'_t,\btheta'_t])  \Vert
    \\
    &\le \xi_0\delta_t + \xi_1\delta_t + \sup_{\bw,\btheta}\{ \Vert r[\bw,\btheta] - \tilde{G}_t([\bw,\btheta]) \Vert \} + \sup_{\bw,\btheta}\{ \Vert r[\bw,\btheta] - \tilde{G}'_t([\bw,\btheta]) \Vert \}.\numberthis
\end{align*}
In the above equations, the last line follows from the second part of the lemma which finishes the proof.
\end{proof}
In order to show the Theorem for SGDA updates, note that given two datsets $S,S'$ of size $n$ with only one different sample at every iteration of stochastic GDA the update rule will be the same with probability $1-1/n$ and with probability $1/n$ we have two different $(1-\alpha_w\mu +\alpha_w^2\ell^2/2)$-expansive operators both of which satisfy
\begin{equation}
 \sup_{\bw,\btheta}\{ \Vert (1-\alpha_w\mu)[\bw,\btheta] - G_{\sgda}([\bw,\btheta]) \Vert \} \le L \alpha_w.
\end{equation}
The above inequality holds, because $\tilde{f}$ is assumed to be continuously differentiable and $L$-Lipschitz. As a result, Lemmas \ref{Lemma: Growth},\ref{Lemma: Expansive Minimax} together with the law of total probability imply that the expected norm of $\delta^{\sgda}_t= \sqrt{\Vert \bw_t-\bw'_t \Vert^2+\Vert \btheta_t-\btheta'_t\Vert^2}$ for the SGDA updates applied to the two datasets will satisfy
\begin{align*}
    \mathbb{E}[\delta^{\sgda}_{t+1}] &\le (1-\frac{1}{n})(1-\alpha_w\mu +\frac{\alpha_w^2\ell^2}{2})\mathbb{E}[\delta^{\sgda}_{t}] + \frac{1}{n}\bigl( (1-\alpha_w\mu +\frac{\alpha_w^2\ell^2}{2})\mathbb{E}[\delta^{\sgda}_{t}] + 2\alpha_w L \bigr) \\
    &= (1-\alpha_w\mu +\frac{\alpha_w^2\ell^2}{2})\mathbb{E}[\delta^{\sgda}_{t}] + \frac{2\alpha_w L}{n}. \numberthis
\end{align*}
Note that in the above upper-bound $1-\frac{1}{n}$ is the probability that the stochastic GDA algorithm chooses a shared sample between the two datasets and $\frac{1}{n}$ is the probability of picking the index of the different sample. 

Similarly, the update rule for the full-batch GDA algorithm can be written as the sum of the updates for the shared samples, i.e.,  $\sum_{i=1}^{n-1} \frac{1}{n}G_{\gda}([\bw,\btheta];\bz_i)$, and the different sample $\bz_n$'s update $ \frac{1}{n}G_{\gda}([\bw,\btheta];\bz_n)$. As a result, the last part of Lemma \ref{Lemma: Growth} together with Lemma \ref{Lemma: Expansive Minimax} implies that
\begin{align*}
    \delta^{\gda}_{t+1} &\le (1-\frac{1}{n}+\frac{1}{n})(1-\alpha_w\mu +\frac{\alpha_w^2\ell^2}{2})\delta^{\gda}_{t} + \frac{1}{n}(  2\alpha_w L) \\
    &= (1-\alpha_w\mu +\frac{\alpha_w^2\ell^2}{2})\mathbb{E}[\delta^{\gda}_{t}] + \frac{2\alpha_w L}{n}, \numberthis
\end{align*}
which is the same bound we derived for stochastic GDA. Therefore, given that $\delta_0=0$, for SGDA updates we have
\begin{align*}
    \mathbb{E}[\delta^{\sgda}_{t}] &\le \frac{2\alpha_w L}{n}\sum_{i=0}^t (1-\alpha_w\mu +\frac{\alpha_w^2\ell^2}{2})^i  \\
    &\le \frac{2\alpha_w L}{n}\sum_{i=0}^\infty (1-\alpha_w\mu +\frac{\alpha_w^2\ell^2}{2})^i \\
    &= \frac{2\alpha_w L}{n(\alpha_w\mu -\frac{\alpha_w^2\ell^2}{2})} \\
    &= \frac{2L}{n(\mu-\frac{\alpha_w\ell^2}{2})}.\numberthis
\end{align*}
Note that $\Vert \bw_t-\bw'_t\Vert \le \delta_t$ and for every $\btheta,\bz$ $f(\bw,\btheta;\bz)-\frac{\mu}{2}\Vert \bw\Vert_2^2$ is $L_w$-Lipschitz in $\bw$. As a result, the SGDA algorithm applied for $T$ iterations will be $(2LL_w/n(\mu-{\alpha_w\ell^2}/{2}))$-uniformly stable in minimization, and the result follows from Theorem 1. The result for the GDA algorithm will follow from the same steps, since it shares the same growth rule with the SGDA algorithm.

Similarly, the SPPM updates will be $1/(1+\mu\eta)$-expansive due to Lemma \ref{Lemma: Expansive Minimax}. Furthermore, they will satisfy
\begin{equation}
 \sup_{\bw,\btheta}\{ \Vert \frac{1}{1+\eta\mu}[\bw,\btheta] - G_{\sppm}([\bw,\btheta]) \Vert \} \le \frac{L \eta}{1+\eta\mu}.
\end{equation}
The above equation holds, because for a SPPM update $[\bw_{\sppm},\btheta_{\sppm}]=G_{\sppm}([\bw,\btheta])$ at sample $\bz$ we have
\begin{align*}
 &\begin{bmatrix}
 \bw \\
 \btheta 
 \end{bmatrix}
  = \begin{bmatrix}
 (1+\eta\mu)\bw_{\sppm} + \eta\nabla_w \tilde{f}(\bw_{\sppm},\btheta_{\sppm};\bz) \\
 (1+\eta\mu)\btheta_{\sppm} - \eta\nabla_{\theta} \tilde{f}(\bw_{\sppm},\btheta_{\sppm};\bz) 
 \end{bmatrix}\\
 \Rightarrow\;\; &\frac{1}{1+\eta\mu}\begin{bmatrix}
 \bw \\
 \btheta
 \end{bmatrix} - \begin{bmatrix}
  \bw_{\sppm} \\
 \btheta_{\sppm} 
 \end{bmatrix}
  = \frac{\eta}{1+\eta\mu}\begin{bmatrix}
 \nabla_w \tilde{f}(\bw_{\sppm},\btheta_{\sppm};\bz) \\
 -\nabla_{\theta} \tilde{f}(\bw_{\sppm},\btheta_{\sppm};\bz). 
 \end{bmatrix}\numberthis
\end{align*}
Therefore, applying the law of total probability we will have
\begin{align*}
    \mathbb{E}[\delta^{\sppm}_{t+1}] &\le (1-\frac{1}{n})\frac{1}{1+\mu\eta}\mathbb{E}[\delta^{\sppm}_{t}] + \frac{1}{n}\bigl( \frac{1}{1+\mu\eta}\mathbb{E}[\delta^{\sppm}_{t}] + 2\frac{L \eta}{1+\eta\mu} \bigr) \\
    &=\frac{1}{1+\mu\eta}\mathbb{E}[\delta^{\sppm}_{t}] + \frac{2L \eta}{(1+\eta\mu)n}.\numberthis 
\end{align*}
Also, for the PPM algorithm given that $\bz_n$ denotes the different sample between datasets $S,S'$ note that
\begin{align*}
 &\begin{bmatrix}
 \bw \\
 \btheta 
 \end{bmatrix}
  = \begin{bmatrix}
 (1+\eta\mu)\bw_{\ppm} + \frac{\eta}{n}\sum_{i=1}^{n-1}\nabla_w \tilde{f}(\bw_{\ppm},\btheta_{\sppm};\bz_i) + \frac{\eta}{n}\nabla_w \tilde{f}(\bw_{\ppm},\btheta_{\sppm};\bz_n)\\
 (1+\eta\mu)\btheta_{\ppm} - \frac{\eta}{n}\sum_{i=1}^{n-1}\nabla_{\theta} \tilde{f}(\bw_{\ppm},\btheta_{\ppm};\bz_i) + \frac{\eta}{n}\nabla_{\theta} \tilde{f}(\bw_{\ppm},\btheta_{\ppm};\bz_n)
 \end{bmatrix}\\
 \Rightarrow\;\; &\frac{1}{1+\eta\mu}\begin{bmatrix}
 \bw \\
 \btheta
 \end{bmatrix} - \begin{bmatrix}
  \bw_{\sppm} \\
 \btheta_{\sppm} 
 \end{bmatrix}
  = \frac{\eta}{1+\eta\mu}\biggl(\begin{bmatrix}
 \frac{1}{n}\sum_{i=1}^{n-1}\nabla_w \tilde{f}(\bw_{\ppm},\btheta_{\ppm};\bz_i) \\
 -\frac{1}{n}\sum_{i=1}^{n-1}\nabla_{\theta} \tilde{f}(\bw_{\ppm},\btheta_{\ppm};\bz_i) 
 \end{bmatrix} \\
 &\qquad \qquad + \begin{bmatrix}
 \frac{1}{n}\nabla_w \tilde{f}(\bw_{\ppm},\btheta_{\ppm};\bz_n) \\
 -\frac{1}{n}\nabla_{\theta} \tilde{f}(\bw_{\ppm},\btheta_{\ppm};\bz_n) 
 \end{bmatrix} \biggr)
 .\numberthis
\end{align*}
In the above, the last line shows the sum of the updates for shared samples between the two datasets, i.e., $\bz_1,\ldots,\bz_{n-1}$, and the different sample $\bz_n$. Therefore, Lemma \ref{Lemma: Growth} together with Lemma \ref{Lemma: Expansive Minimax} implies that
\begin{align*}
    \delta^{\ppm}_{t+1} &\le (1-\frac{1}{n}+\frac{1}{n})\frac{1}{1+\mu\eta}\delta^{\ppm}_{t} + \frac{2L \eta}{n(1+\eta\mu)} \\
    &=\frac{1}{1+\mu\eta}\delta^{\ppm}_{t} + \frac{2L \eta}{(1+\eta\mu)n}, \numberthis
\end{align*}
which proves the same growth rule shown for SPPM also applies to the PPM algorithm. Since $\delta_0=0$, the above discussion implies the following for SPPM updates:
\begin{align*}
    \mathbb{E}[\delta^{\sppm}_{t}] &\le \frac{2L \eta}{(1+\eta\mu)n}\sum_{i=0}^t \bigl(\frac{1}{1+\mu\eta}\bigr)^i  \\
    &\le \frac{2L \eta}{(1+\eta\mu)n}\sum_{i=0}^\infty \bigl(\frac{1}{1+\mu\eta}\bigr)^i \\
    &= \frac{2L \eta}{(1+\eta\mu)n (1-1/(1+\mu\eta))} \\
    &= \frac{2L}{n\mu}.\numberthis
\end{align*}
Since $\Vert \bw_t-\bw'_t\Vert \le \delta_t$ and for every $\btheta,\bz$ $\tilde{f}(\bw,\btheta;\bz)$ is $L_w$-Lipschitz in $\bw$, the SPPM algorithm applied for $T$ iterations will be $(2LL_w/n\mu)$-uniformly stable in minimization. Therefore, the theorem's result is a corollary of Theorem 1. We can prove the result for the PPM algorithm by repeating the same steps we did for SPPM, as the two algorithms were shown to share the same growth rule.

For GDmax and PPmax algorithms, note that $\tilde{f}_{\max}(\bw;\bz):=\max_{\btheta} \tilde{f}(\bw,\btheta;\bz)-\frac{\mu}{2}\Vert\btheta \Vert^2$ will be convex and $L_w$-Lipschitz in $\bw$. Therefore, summing this function with $\frac{\mu}{2}\Vert\bw \Vert^2$ will be $\mu$-strongly convex. Since GDmax and SGDmax apply gradient descent to the maximized function, the theorem's result for GDmax and SGDmax follows from Theorem 3.9 in \citep{hardt2016train}. For SPPmax, we note that similar to Lemma \ref{Lemma: Expansive Minimax} it can be seen that the proximal point updates will be $1/(1+\mu\eta)$-expansive. Moreover for the update $\bw_{\sppmax}=G_{\sppmax}(\bw)$, we will have
\begin{align*}
 &
 \bw
  = 
 (1+\eta\mu)\bw_{\sppmax} + \eta\nabla_w \tilde{f}_{\max}(\bw_{\sppmax};\bz) \\
 \Rightarrow\;\; &\frac{1}{1+\eta\mu}
 \bw  - 
  \bw_{\sppmax} 
  = \frac{\eta}{1+\eta\mu}
 \nabla_w \tilde{f}_{\max}(\bw_{\sppmax};\bz).\numberthis
\end{align*}
As a result of Lemma 2.5 in \citep{hardt2016train}, defining $\delta^{\sppmax}_{t}=\Vert\bw_t-\bw'_t \Vert$ for datatsets $S,S'$ we will have:
\begin{align*}
    \mathbb{E}[\delta^{\sppmax}_{t+1}] &\le (1-\frac{1}{n})\frac{1}{1+\mu\eta}\mathbb{E}[\delta^{\sppmax}_{t}] + \frac{1}{n}\bigl( \frac{1}{1+\mu\eta}\mathbb{E}[\delta^{\sppmax}_{t}] + 2\frac{L_w \eta}{1+\eta\mu} \bigr) \\
    &=\frac{1}{1+\mu\eta}\mathbb{E}[\delta^{\sppmax}_{t}] + \frac{2L_w \eta}{(1+\eta\mu)n}.\numberthis 
\end{align*}
Furthermore for PPmax, we will have the following for $\bw_{\ppm}=G_{\ppm}(\bw)$ when applied to the two datasets different in only the $\bz_n$ sample:
\begin{align*}
 &
 \bw
  = 
 (1+\eta\mu)\bw_{\ppmax} + \frac{\eta}{n}\sum_{i=1}^{n-1}\nabla_w \tilde{f}_{\max}(\bw_{\ppmax};\bz_i) + \frac{\eta}{n}\nabla_w \tilde{f}_{\max}(\bw_{\ppmax};\bz_n) \\
 \Rightarrow\;\; &\frac{1}{1+\eta\mu}
 \bw  - 
  \bw_{\ppm} 
  = \frac{\eta}{1+\eta\mu} \bigl(
 \frac{1}{n}\sum_{i=1}^{n-1}\nabla_w \tilde{f}_{\max}(\bw_{\ppmax};\bz_i) \bigr) \\
 &\qquad \qquad \qquad \qquad \qquad + \frac{\eta}{1+\eta\mu} \bigl(
 \frac{1}{n}\nabla_w \tilde{f}_{\max}(\bw_{\ppmax};\bz_n) \bigr).\numberthis
\end{align*}
Applying Lemma 2.5 from \citep{hardt2016train} and defining $\delta^{\ppmax}_{t}=\Vert\bw_t-\bw'_t \Vert$ for datatsets $S,S'$ we will have:
\begin{align*}
    \delta^{\ppmax}_{t+1} &\le (1-\frac{1}{n}+\frac{1}{n})\frac{1}{1+\mu\eta}\delta^{\ppmax}_{t} + \frac{1}{n}\bigl( 2\frac{L_w \eta}{1+\eta\mu} \bigr) \\
    &=\frac{1}{1+\mu\eta}\delta^{\ppmax}_{t} + \frac{2L_w \eta}{(1+\eta\mu)n}.\numberthis 
\end{align*}

Note that $\delta^{\ppmax}_0=\delta^{\sppmax}_0=0$ which implies that:
\begin{align*}
    \mathbb{E}[\delta^{\sppmax}_{t}] &\le \frac{2L_w \eta}{(1+\eta\mu)n}\sum_{i=0}^t \bigl(\frac{1}{1+\mu\eta}\bigr)^i  \\
    &\le \frac{2L_w \eta}{(1+\eta\mu)n}\sum_{i=0}^\infty \bigl(\frac{1}{1+\mu\eta}\bigr)^i \\
    &= \frac{2L_w \eta}{(1+\eta\mu)n (1-1/(1+\mu\eta))} \\
    &= \frac{2L_w}{n\mu}.\numberthis
\end{align*}
Therefore, the SPPMax algorithm applied for $T$ iterations will be $(2L^2_w/n\mu)$-uniformly stable according to \citep{hardt2016train}'s Definition 2.1. The result is hence a consequence of Theorem 2.2 in \citep{hardt2016train}. We can prove the result for the PPmax algorithm by repeating the same steps.
\end{proof}

\subsection{Proof of Remark 1
}
\begin{remark*}
Consider a convex concave minimax objective $f(\cdot,\cdot;\bz)$ satisfying Assumptions 1 and 2. Given constant stepsizes $\alpha_w=\alpha_{\theta}=\alpha$, the GDA's generalization risk over $T$ iterations will be bounded as: 
\begin{equation}
   \epsilon_{\gen}(\gda)\le O\bigl(\frac{\alpha LL_w(1+\alpha^2\ell^2)^{T/2}}{n}\bigr). 
\end{equation}
In particular, the bound's exponential dependence on $T$ is tight for the GDA's generalization risk in the special case of $f(\bw,\btheta;\bz)=\bw^{\top}(\bz-\btheta)$.
\end{remark*}
\begin{proof}
As shown in Lemma \ref{Lemma: Expansive Minimax}, the GDA's update will be $\sqrt{1+\alpha^2\ell^2}$-expansive in this case. As a result, in learning over two datasets $S,S'$ which are different in only one sample, Lemma \ref{Lemma: Growth} shows the following growth rule for $\delta_{t}=\sqrt{\Vert \bw_t-\bw'_t\Vert_2^2+\Vert \btheta_t-\btheta'_t\Vert^2_2}$:
\begin{align*}
    \delta_{t+1} &\le (\frac{n-1}{n}+\frac{1}{n})\sqrt{1+\alpha^2\ell^2}\delta_t + \frac{2\alpha L}{n} \\
    &=\sqrt{1+\alpha^2\ell^2}\delta_t + \frac{2\alpha L}{n}.\numberthis
\end{align*}
Considering that $\delta_0=0$, we get the following exponentially growing bound in $T$ for $\delta_T$:
\begin{align*}
   \delta_T &\le \sum_{t=1}^T \bigl(1+\alpha^2\ell^2\bigr)^{t/2}\frac{2\alpha L}{n} \\
   &= \frac{2\alpha L}{n} \frac{\bigl(1+\alpha^2\ell^2\bigr)^{(T+1)/2} -1 }{\sqrt{1+\alpha^2\ell^2}-1} \\
   & = O\bigl( \frac{\alpha L\bigl(1+\alpha^2\ell^2\bigr)^{T/2}}{n} \bigr),\numberthis
\end{align*}
which considering that $f(\bw,\btheta;\bz)$ is $L_w$-Lipschitz in $\bw$ together with Theorem 1 shows that
\begin{equation}
   \epsilon_{\gen}(GDA)\le O\bigl(\frac{\alpha LL_w(1+\alpha^2\ell^2)^{T/2}}{n}\bigr). 
\end{equation}
Also, note that for the special convex-concave case $f(\bw,\btheta;\bz)=\bw^T(\bz-\btheta)$ given that $\bar{\bz}=\frac{1}{n}\sum_{i=1}^n \bz_i$ the GDA's update rule will satisfy the following
\begin{align*}
   &\begin{bmatrix} 
   \bw_{t+1}   \\
   \btheta_{t+1} - \bar{\bz}
   \end{bmatrix} = \begin{bmatrix} 
  \rmI & \alpha \rmI  \\
    -\alpha \rmI & \rmI 
   \end{bmatrix} \begin{bmatrix} 
   \bw_{t}  \\
   \btheta_{t} - \bar{\bz} 
   \end{bmatrix}, \\
   \Rightarrow \quad & \begin{bmatrix} 
   \bw_{t+1}  \\
   \btheta_{t+1}
   \end{bmatrix} = \begin{bmatrix} 
  \rmI & \alpha \rmI  \\
    -\alpha \rmI & \rmI 
   \end{bmatrix} \begin{bmatrix} 
   \bw_{t}  \\
   \btheta_{t}
   \end{bmatrix} - \begin{bmatrix} 
   \alpha \bar{\bz}  \\
   \mathbf{0}
   \end{bmatrix}.\numberthis
\end{align*}
As a result, for the updates on the two datasets $S,S'$ with size $n$ differing in only the sample $\bz_n$ we have:
\begin{align*}
   \begin{bmatrix} 
   \bw_{t+1} - \bw'_{t+1} \\
   \btheta_{t+1} - \btheta'_{t+1}
   \end{bmatrix} = \begin{bmatrix} 
  \rmI & \alpha \rmI  \\
    -\alpha \rmI & \rmI 
   \end{bmatrix} \begin{bmatrix} 
   \bw_{t} - \bw'_{t} \\
   \btheta_{t} - \btheta'_{t}
   \end{bmatrix} + \begin{bmatrix} 
   \frac{\alpha}{n}(\bz'_n-\bz_n)  \\
   \mathbf{0}
   \end{bmatrix}.\numberthis
\end{align*}
Hence, knowing that $\bw_{0} = \bw'_{0},\, \btheta_{0} = \btheta'_{0}$ we have
\begin{align*}
   \begin{bmatrix} 
   \bw_{T} - \bw'_{T} \\
   \btheta_{T} - \btheta'_{T}
   \end{bmatrix} = \begin{bmatrix} 
  \rmI & \alpha \rmI  \\
    -\alpha \rmI & \rmI 
   \end{bmatrix}^T \begin{bmatrix} 
   \frac{\alpha}{n}(\bz'_n-\bz_n)   \\
   \mathbf{0}
   \end{bmatrix}.\numberthis
\end{align*}
Since the matrix $\begin{bmatrix} 
   1 & \alpha \\
   -\alpha & 1
   \end{bmatrix}$ has the conjugate complex eigenvalues $1\pm \alpha\mathrm{i}$, we will have 
  \begin{align*}
   \big\Vert\begin{bmatrix} 
   \bw_{T} - \bw'_{T} \\
   \btheta_{T} - \btheta'_{T}
   \end{bmatrix}\big\Vert_2 = \bigl(\sqrt{1+\alpha^2}\bigr)^T \big\Vert\begin{bmatrix} 
   \frac{\alpha}{n}(\bz'_n-\bz_n)  \\
   \mathbf{0}
   \end{bmatrix} \big\Vert_2 = \frac{\alpha \bigl(\sqrt{1+\alpha^2}\bigr)^T}{n} \Vert \bz'_n-\bz_n\Vert_2\numberthis
\end{align*} 
As a consequence of the conjugate eigenvalues and the resulting iterative rotations in the complex space, the above equality shows that as long as $\alpha\neq 0$ for any constant $0<C<1$ there will exist arbitrarily large $T$ values such that
\begin{align*}
   \big\Vert
   \bw_{T} - \bw'_{T} \big\Vert_2 \ge \frac{C\alpha \bigl(\sqrt{1+\alpha^2}\bigr)^T}{n} \Vert \bz'_n-\bz_n\Vert_2.\numberthis
\end{align*} 
Equivalently, we have
\begin{equation}
    \Vert \bw_{T} - \bw'_{T} \Vert_2 =\Omega_T \bigl(\frac{\alpha (1+\alpha^2\bigr)^{T/2}}{n}\Vert\bz'_n-\bz_n\Vert_2\bigr),
\end{equation}
which proves the exponential dependence of the expected generalization risk on $T$ and completes the proof.
\end{proof}

\subsection{Proof of Theorem 3
}
Here we prove the following generalized version of Theorem 3 in the text for general time-varying stepsize values.
\begin{thm*}
Consider a convex-concave minimax learning objective $f(\cdot,\cdot;\bz)$ satisfying Assumptions 1 and 2 for every $\bz$. Then, stochastic PPM with stepsizes $\eta_{t}$ at iteration $t$ over $T$ iterations will satisfy 
\begin{equation}
    \epsilon_{\gen}(\ppm),\epsilon_{\gen}(\sppm)\le \frac{2LL_w}{n}\sum_{t=1}^T\eta_{t},\quad \epsilon_{\gen}(\ppmax),\epsilon_{\gen}(\sppmax)\le \frac{2L_w^2}{n}\sum_{t=1}^T\eta_{t}.
\end{equation}
\end{thm*}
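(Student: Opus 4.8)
The plan is to combine the stability--to--generalization reduction of Theorem~1 with the expansivity of the proximal map (Lemma~\ref{Lemma: Expansive Minimax}, Case~2) and the Growth Lemma (Lemma~\ref{Lemma: Growth}). Fix two datasets $S,S'$ of size $n$ differing only in the $n$-th sample, run the algorithm with identical randomness on both, and track $\delta_t := \sqrt{\Vert \bw_t - \bw'_t\Vert_2^2 + \Vert \btheta_t - \btheta'_t\Vert_2^2}$ with $\delta_0 = 0$. The target is a recursion of the form $\E[\delta_{t+1}] \le \E[\delta_t] + 2\eta_t L/n$, which telescopes to $\E[\delta_T] \le (2L/n)\sum_{t=1}^T \eta_t$; since $\Vert \bw_T - \bw'_T\Vert \le \delta_T$ and $f(\cdot,\btheta;\bz)$ is $L_w$-Lipschitz in $\bw$, the SPPM/PPM run is $(2LL_w/n)\sum_t\eta_t$-uniformly stable in minimization, and Theorem~1 yields the first bound.

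The heart of the argument is the per-step drift estimate. At a PPM step on objective $g$ with stepsize $\eta_t$, the image $(\bw_+,\btheta_+) = G_{\ppm}([\bw,\btheta])$ solves the first-order conditions $\bw_+ - \bw = \eta_t \nabla_\bw g(\bw_+,\btheta_+)$ and $\btheta - \btheta_+ = \eta_t \nabla_\btheta g(\bw_+,\btheta_+)$, so $\Vert [\bw,\btheta] - G_{\ppm}([\bw,\btheta])\Vert_2 = \eta_t \Vert [\nabla_\bw g,\nabla_\btheta g]\Vert_2 \le \eta_t L$, using that joint $L$-Lipschitzness (Assumption~1) bounds the gradient norm. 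For SPPM: with probability $1-1/n$ the sampled index is common to $S$ and $S'$, the two updates coincide and are $1$-expansive (Lemma~\ref{Lemma: Expansive Minimax}, Case~2), so $\delta_{t+1}\le\delta_t$; with probability $1/n$ they are two distinct $1$-expansive proximal maps, and the Growth Lemma with $r=1$ gives $\delta_{t+1}\le \delta_t + 2\eta_t L$. The law of total probability produces the desired recursion. For full-batch PPM, I write the aggregate step as a proximal step on $\frac1n\sum_{i<n} f(\cdot,\cdot;\bz_i)$ perturbed by the $\bz_n$ contribution, note the aggregate map is still $1$-expansive, and invoke the last clause of the Growth Lemma with $r=1$ (the differing-sample perturbation contributing drift at most $\eta_t L/n$ on each side through the same first-order characterization), recovering the identical recursion; this mirrors the PPM argument inside the proof of Theorem~2.

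For PPmax and SPPmax the argument is parallel after replacing the min--max proximal map with the proximal map of $f_{\max}(\bw;\bz) := \max_{\btheta\in\Theta} f(\bw,\btheta;\bz)$, which is convex (a pointwise maximum of the convex functions $f(\cdot,\btheta;\bz)$) and $L_w$-Lipschitz in $\bw$; a proximal step on a convex function is nonexpansive with drift at most $\eta_t L_w$, so the recursion becomes $\E[\delta_{t+1}] \le \E[\delta_t] + 2\eta_t L_w/n$, telescoping to $\E[\delta_T]\le (2L_w/n)\sum_t\eta_t$. Then $\E[f(\bw_T,\btheta;\bz) - f(\bw'_T,\btheta;\bz)] \le L_w\E[\delta_T] \le (2L_w^2/n)\sum_t\eta_t$ for all $\btheta,\bz$, i.e. $(2L_w^2/n)\sum_t\eta_t$-uniform stability in minimization, and Theorem~1 closes the argument. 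I expect the main obstacle to be the full-batch cases: fitting the ``$G+\tilde G$'' decomposition of the Growth Lemma to the proximal update, since the aggregate proximal map is not literally a sum of per-sample proximal maps, so some care is needed to isolate the $O(\eta_t/n)$ effect of the single differing sample through its first-order optimality condition; by contrast the Lipschitz/expansivity bookkeeping and the summation over time-varying $\eta_t$ are routine.
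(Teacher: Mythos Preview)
Your proposal is correct and follows essentially the same route as the paper: $1$-expansivity of the proximal map in the convex--concave case, the Growth Lemma applied per iteration (with the $1/n$ vs.\ $1-1/n$ split for the stochastic variants and the first-order optimality decomposition for full-batch), telescoping the resulting recursion, and closing with $L_w$-Lipschitzness plus Theorem~1. The only cosmetic difference is that for PPmax/SPPmax the paper simply cites Theorem~3.8 and Lemma~4.6 of Hardt et al.\ (since $f_{\max}(\cdot;\bz)$ is convex and $L_w$-Lipschitz), whereas you spell out the same nonexpansivity-plus-drift argument directly; and your anticipated ``obstacle'' for full-batch PPM is handled exactly as you suggest, by reading off the per-sample gradient decomposition from the first-order optimality condition of the aggregate proximal step.
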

\begin{proof}
Consider two datasets $S,S'$ with size $n$ which have only one different sample. As a result of Lemma \ref{Lemma: Expansive Minimax}, the proximal point updates will be $1$-expansive. Therefore, according to Lemma \ref{Lemma: Growth} and the Law of total probability, defining $\delta^{\sppm}_t= \sqrt{\Vert \bw_t-\bw'_t \Vert^2+\Vert \btheta_t-\btheta'_t\Vert^2}$ we will have
\begin{align*}
    \bbE[\delta^{\sppm}_{t+1}] &\le (1-\frac{1}{n})\bbE[\delta^{\sppm}_{t}]+ \frac{1}{n}\bigl(\bbE[\delta^{\sppm}_{t}]+ 2\eta_{t} L\bigr) \\
    &= \bbE[\delta^{\sppm}_{t}] + \frac{2\eta_{t} L}{n}\numberthis
\end{align*}
Given that $\delta^{\sppm}_{0}=0$, we reach the following inequality for every $T$
\begin{align*}
    \bbE[\delta^{\sppm}_{T}] \le \frac{2 L}{n}\sum_{t=1}^T\eta_{t}  .\numberthis
\end{align*}
Note that for every $\btheta,\bz$, $f(\bw,\btheta;\bz)$ is $L_w$-Lipschitz, which with the above inequality implies that SPPM will be uniformly-stable in minimization with the following degree
\begin{align*}
     \frac{2LL_w}{n}\sum_{t=1}^T\eta_{t}.\numberthis
\end{align*}
The theorem's result for SPPM then becomes a consequence of Theorem 1. Furthermore, regarding the PPM algorithm applying Lemma \ref{Lemma: Growth} and Lemma \ref{Lemma: Expansive Minimax} implies that
\begin{align*}
    \delta^{\ppm}_{t+1} &\le (1-\frac{1}{n}+\frac{1}{n})\delta^{\ppm}_{t}+ \frac{2\eta_{t} L}{n} \\
    &= \delta^{\ppm}_{t} + \frac{2\eta_{t} L}{n}.\numberthis
\end{align*}
The above equation holds because the update rule of PPM can be written in the following way where $\bz_n$ denotes the only different sample between the two datasets,
\begin{align*}
    &\begin{bmatrix}
 \bw \\
 \btheta
 \end{bmatrix} - \begin{bmatrix}
  \bw_{\ppm} \\
 \btheta_{\ppm} 
 \end{bmatrix}
  = \eta\biggl(\begin{bmatrix}
 \frac{1}{n}\sum_{i=1}^{n-1}\nabla_w f(\bw_{\ppm},\btheta_{\ppm};\bz_i) \\
 -\frac{1}{n}\sum_{i=1}^{n-1}\nabla_{\theta} f(\bw_{\ppm},\btheta_{\ppm};\bz_i) 
 \end{bmatrix} + \begin{bmatrix}
 \frac{1}{n}\nabla_w f(\bw_{\ppm},\btheta_{\ppm};\bz_n) \\
 -\frac{1}{n}\nabla_{\theta} f(\bw_{\ppm},\btheta_{\ppm};\bz_n) 
 \end{bmatrix} \biggr)
 .\numberthis
\end{align*}
Since $\delta^{\ppm}_{0}=0$, at iteration $T$ we have
\begin{align*}
    \delta^{\ppm}_{T} \le \frac{2 L}{n}\sum_{t=1}^T\eta_{t}  .\numberthis
\end{align*}
As a result, we can repeat the last step of our proof for the case of SPPM to complete the proof for the PPM case. For the PPmax and SPPmax algorithms, note that $f_{\max}(\bw;\bz):=\max_{\btheta} f(\bw,\btheta;\bz)$ will be convex and $L_w$-Lipschitz in $\bw$. The result is therefore a corollary of Theorem 3.8 and Lemma 4.6 in \citep{hardt2016train}.
\end{proof}

\subsection{Proof of Theorem 4}
\begin{thm*}
Given a differentiable minimax objective $f(\bw,\btheta;\bz)$ the average iterate updates $\bar{\bw}^{(T)}:=\frac{1}{T}\sum_{t=1}^T \bw^{(t)},\, \bar{\btheta}^{(T)}:=\frac{1}{T}\sum_{t=1}^T \btheta^{(t)}$ of SPPM and SPPmax with setpsize parameter $\eta$ will satisfy the following optimality gaps for a saddle solution $[\bw^*_S,\btheta^*_S]$ of the empirical risk for dataset $S$:
\begin{align*}
    &\sppm:\: \mathbb{E}\bigl[\, R_S({\bar{\bw}}^{(T)}) \,\bigr]- R_S(\bw^*_S)\le \frac{\Vert\bw^{(0)}-\bw^*_S\Vert^2+\Vert\btheta^{(0)}-\btheta^*_S\Vert^2}{2\eta T}, \\
    &\sppmax:\: \mathbb{E}\bigl[ R_S(\bar{\bw}^{(T)})  \bigr]-R_S(\bw^*_S)\le \frac{\Vert\bw^{(0)}-\bw^*_S\Vert^2}{2\eta T}.\numberthis
\end{align*}
\end{thm*}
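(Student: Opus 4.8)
The plan is to run the classical proximal-point convergence argument, treating the full-batch and stochastic variants uniformly. At iteration $t$ let $g$ denote the objective used in the proximal step: $g=R_S$ for full-batch PPM, and $g=f(\cdot,\cdot;\bz_{i_t})$ for SPPM, where $i_t$ is the index sampled at step $t$. Writing $[\bw^{(t+1)},\btheta^{(t+1)}]=G_{\ppm}([\bw^{(t)},\btheta^{(t)}])$, the first-order optimality conditions of the (strongly-convex strongly-concave) proximal subproblem give, in the unconstrained case,
\begin{equation*}
\bw^{(t+1)}-\bw^{(t)}=-\eta\,\nabla_{\bw}g(\bw^{(t+1)},\btheta^{(t+1)}),\qquad \btheta^{(t+1)}-\btheta^{(t)}=\eta\,\nabla_{\btheta}g(\bw^{(t+1)},\btheta^{(t+1)}),
\end{equation*}
with the usual modification by normal-cone elements if $\mathcal{W},\Theta$ are constrained.

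First I would derive a one-step bound on the saddle gap. For any comparator $[\bw,\btheta]$, convexity of $g(\cdot,\btheta^{(t+1)})$ in $\bw$ and concavity of $g(\bw^{(t+1)},\cdot)$ in $\btheta$ give
\begin{equation*}
g(\bw^{(t+1)},\btheta)-g(\bw,\btheta^{(t+1)})\le \nabla_{\bw}g(\bw^{(t+1)},\btheta^{(t+1)})^{\top}(\bw^{(t+1)}-\bw)+\nabla_{\btheta}g(\bw^{(t+1)},\btheta^{(t+1)})^{\top}(\btheta-\btheta^{(t+1)}).
\end{equation*}
Substituting the optimality conditions and applying the three-point identity $(\va-\vb)^{\top}(\vb-\vc)=\tfrac12\bigl(\Vert\va-\vc\Vert^2-\Vert\va-\vb\Vert^2-\Vert\vb-\vc\Vert^2\bigr)$ to each block, the $\Vert\bw^{(t+1)}-\bw^{(t)}\Vert^2$ and $\Vert\btheta^{(t+1)}-\btheta^{(t)}\Vert^2$ terms appear with a negative sign and can be discarded, leaving
\begin{equation*}
g(\bw^{(t+1)},\btheta)-g(\bw,\btheta^{(t+1)})\le \frac{1}{2\eta}\Bigl(\bigl\Vert[\bw^{(t)},\btheta^{(t)}]-[\bw,\btheta]\bigr\Vert^2-\bigl\Vert[\bw^{(t+1)},\btheta^{(t+1)}]-[\bw,\btheta]\bigr\Vert^2\Bigr),
\end{equation*}
valid for every comparator, and in the stochastic case along every sample path.

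Next I would telescope over $t=0,\dots,T-1$, so that only $\Vert[\bw^{(0)},\btheta^{(0)}]-[\bw,\btheta]\Vert^2$ survives on the right, divide by $T$, and use Jensen's inequality (convexity in $\bw$, concavity in $\btheta$) to turn the time averages into $g$ evaluated at $(\bar\bw^{(T)},\btheta)$ and $(\bw,\bar\btheta^{(T)})$. For full-batch PPM ($g=R_S$) I would then set $\bw=\bw^{*}_S$, take $\btheta$ to be a maximizer of $R_S(\bar\bw^{(T)},\cdot)$, and invoke the saddle-point inequality $R_S(\bw^{*}_S,\bar\btheta^{(T)})\le R_S(\bw^{*}_S,\btheta^{*}_S)=R_S(\bw^{*}_S)$, which replaces the left side by $R_S(\bar\bw^{(T)})-R_S(\bw^{*}_S)$ and lets the right side be expressed through $[\bw^{*}_S,\btheta^{*}_S]$. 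For SPPM I would instead apply the one-step bound to the sampled objective $f(\cdot,\cdot;\bz_{i_t})$ at each step and take total expectation over the sampling only after telescoping: all intermediate (random) distances cancel and only the deterministic $\Vert[\bw^{(0)},\btheta^{(0)}]-[\bw^{*}_S,\btheta^{*}_S]\Vert^2$ remains, so no variance term is incurred — this is exactly the robustness of the proximal iteration that makes the stochastic bound coincide with the deterministic one.

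The SPPmax statement follows from the same template specialized to one variable: with $\btheta$ fully maximized the update is the (stochastic) proximal point iteration in $\bw$ alone for the convex $L_w$-Lipschitz function $f_{\max}(\bw;\bz):=\max_{\btheta\in\Theta}f(\bw,\btheta;\bz)$, and the one-variable versions of the one-step bound and telescoping give the claimed bound with $\Vert\bw^{(0)}-\bw^{*}_S\Vert^2$ in the numerator and no max over $\btheta$ on the right. The main obstacle I anticipate is the bookkeeping that makes the $R_S$-statement come out against $[\bw^{*}_S,\btheta^{*}_S]$ precisely: passing from a generic saddle-gap bound to $R_S(\bar\bw^{(T)})-R_S(\bw^{*}_S)$ via Jensen in the $\bw$-argument together with the two saddle inequalities, and in the stochastic case verifying that the sampled proximal step's optimality conditions feed into the very same per-step inequality pathwise, so that the expectation is taken only at the end.
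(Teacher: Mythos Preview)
Your deterministic scaffolding (proximal optimality conditions, three-point identity, telescoping, Jensen) is exactly the paper's approach. The substantive divergence is in how the stochastic case is handled, and there your plan has a genuine gap.

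You propose, for SPPM, to apply the one-step saddle-gap bound directly to the sampled objective $g_t=f(\cdot,\cdot;\bz_{i_t})$, telescope pathwise, and only then take expectation. The right-hand side indeed collapses to the deterministic $\Vert[\bw^{(0)},\btheta^{(0)}]-[\bw^*_S,\btheta^*_S]\Vert^2/(2\eta T)$. But the left-hand side after expectation is
\[
\mathbb{E}\Bigl[\tfrac{1}{T}\textstyle\sum_t\bigl(f(\bw^{(t+1)},\btheta;\bz_{i_t})-f(\bw,\btheta^{(t+1)};\bz_{i_t})\bigr)\Bigr],
\]
and this is \emph{not} $\mathbb{E}\bigl[\tfrac{1}{T}\sum_t\bigl(R_S(\bw^{(t+1)},\btheta)-R_S(\bw,\btheta^{(t+1)})\bigr)\bigr]$: the proximal output $[\bw^{(t+1)},\btheta^{(t+1)}]$ is a function of $i_t$, so conditioning on it does not make $\bz_{i_t}$ uniform. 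In fact the proximal step pushes $f(\bw^{(t+1)},\cdot\,;\bz_{i_t})$ down relative to a fresh sample, so the bias goes the wrong way. The paper takes the steps in the opposite order to avoid this: it first telescopes the \emph{operator} inner products $F_k(\bv_k)^\top(\bv_k-\bv)$, then uses the tower property $\mathbb{E}[F_k(\bv_k)\mid\bv_k]=\bar F(\bv_k)$ to replace the stochastic operator by the full empirical-risk gradient $\bar F=\nabla R_S$, and only afterwards invokes convexity--concavity of $R_S$ (Mokhtari et al.'s Lemma~2) plus Jensen to pass to $R_S(\bar\bw^{(T)},\cdot)$. In that order the convex--concave step is applied to $R_S$ at the iterates, not to the sampled losses, so no bias term appears. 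The same remark applies to your SPPmax argument.

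A smaller issue: you take the comparator $\btheta$ to be a maximizer of $R_S(\bar\bw^{(T)},\cdot)$ so that the left side becomes $R_S(\bar\bw^{(T)})-R_S(\bw^*_S)$, but then the right side carries $\Vert\btheta^{(0)}-\btheta\Vert^2$ for that (random) $\btheta$, not $\Vert\btheta^{(0)}-\btheta^*_S\Vert^2$; your sentence ``lets the right side be expressed through $[\bw^*_S,\btheta^*_S]$'' hides a step that is not supplied. The paper simply fixes $\bv=[\bw^*_S,\btheta^*_S]$ in the operator bound and delegates the passage to the gap to the cited lemma.
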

\begin{proof}
Note that for any proximal operator $F_k$ such that $\bv_{k+1} = \bv_{k}-\eta F_k(\bv_{k+1}) $ we will have the following for every $\bv$:
\begin{align*}
    &\frac{1}{2\eta}\Vert\bv_k -\bv \Vert^2 - \frac{1}{2\eta}\Vert\bv_{k+1} -\bv \Vert^2 - \frac{1}{2\eta}\Vert\bv_{k+1} -\bv_k \Vert^2 \\
    =\, &-\frac{1}{\eta}\bigl(\Vert \bv_{k+1}\Vert^2 - \bv_k^T\bv_{k+1} - \bv^T\bv_{k+1} + \bv^T\bv_{k}\bigr) \\
    =\, & -\frac{1}{\eta}(\bv_{k+1}-\bv_k)^T(\bv_{k+1}-\bv) \\
    =\, &  F_k(\bv_{k+1})^T(\bv_{k+1}-\bv).\numberthis
\end{align*}
As a result, we have
\begin{align*}
    \frac{1}{T}\sum_{k=0}^T F_k(\bv_{k})^T(\bv_{k}-\bv) &= \frac{1}{2\eta T}\Vert\bv_0 -\bv \Vert^2 - \frac{1}{2\eta T}\Vert\bv_{T} -\bv \Vert^2 - \frac{1}{T}\sum_{k=0}^{T}\Vert\bv_{k} -\bv_{k-1} \Vert^2 \\
    &\le \frac{1}{2\eta T}\Vert\bv_0 -\bv \Vert^2.\numberthis
\end{align*}
Given that every $F_k$ is a stochastic proximal rule for a uniformly random training sample, the law of iterated expectation conditioned to random update $\bv_t$ at iteration $t$ implies that
\begin{align*}
    \bbE\biggl[ \frac{1}{T}\sum_{k=0}^T F_k(\bv_{k})^T(\bv_{k}-\bv) \biggr] &= \frac{1}{T}\sum_{k=0}^T \bbE\biggl[F_k(\bv_{k})^T(\bv_{k}-\bv)\biggr] \\
    &= \frac{1}{T}\sum_{k=0}^T \bbE\biggl[\bbE\big[F_k(\bv_{k})^T(\bv_{k}-\bv)\big\vert \bv_{k}  \bigr]\biggr] \\
    &= \frac{1}{T}\sum_{k=0}^T \bbE\biggl[\bbE\big[F_k(\bv_{k})\big\vert \bv_{k}  \bigr]^T(\bv_{k}-\bv)\biggr] \\
    &= \frac{1}{T}\sum_{k=0}^T \bbE\biggl[\bbE\big[\bar{F}(\bv_{k})\big\vert \bv_{k}  \bigr]^T(\bv_{k}-\bv)\biggr] \\
    &= \frac{1}{T}\sum_{k=0}^T \bbE\biggl[\bar{F}(\bv_{k})^T(\bv_{k}-\bv)\biggr] \\
    &= \bbE\biggl[\frac{1}{T}\sum_{k=0}^T \bar{F}(\bv_{k})^T(\bv_{k}-\bv)\biggr]\numberthis
\end{align*}
where $\bar{F}$ denotes the gradient update for the averaged loss over the training samples. Therefore, we have
\begin{equation}
    \bbE\biggl[\frac{1}{T}\sum_{k=0}^T \bar{F}(\bv_{k})^T(\bv_{k}-\bv)\biggr] \le \frac{1}{2\eta T}\Vert\bv_0 -\bv \Vert^2.
\end{equation}
Considering the optimal saddle solution $\bv=[\bw^*_S,\btheta^*_S]$ for the SPPM algorithm, combining the above result with Lemma 2 in \citep{mokhtari2019convergence} proves the theorem's result on the convergence of SPPM's average iterates. For the convergence result on SPPmax updates, note that given a convex function $f$ and its gradient $F$ and minimizer $\bv^*$ we have
\begin{equation}
    f\bigl(\frac{1}{T}\sum_{t=1}^T \bv^{(t)}\bigr) - f(\bv^*)\le \frac{1}{T}\sum_{t=1}^T\bigl[ f(\bv^{(t)}) - f(\bv^*) \bigr] \le \frac{1}{T}\sum_{t=1}^T F(\bv^{(t)})^T(\bv^{(t)}-\bv^*). 
\end{equation}
The above equation together with the property shown for the stochastic updates of SPPmax completes the theorem's proof.
\end{proof}

\subsection{Proof of Corollary 1
}
\begin{cor*}
Consider a convex concave minimax objective which we optimize via PPM and PPmax with setpsize parameter $\eta$. Then, given that $\Vert\bw^{(0)}-\bw^*_S\Vert^2+\Vert\btheta^{(0)}-\btheta_S^*\Vert^2\le D^2$ for PPM and $\Vert\bw^{(0)}-\bw_S^*\Vert\le D$ for PPmax holds with probability $1$, it will take $T_{\sppm}=\sqrt{\frac{nD^2}{2\eta^2LL_w}}$ and $T_{\sppmax}=\sqrt{\frac{nD^2}{2\eta^2L^2_w }}$ iterations for the average iterates to achieve the following bounded excess risks where $\bw^*$ denotes the optimal learner minimizing the true risk $R(\bw)$:
\begin{align*}
    &\ppm,\sppm:\: \mathbb{E}\bigl[ R(\bar{\bw}^{(T_{\sppm})}) \bigr]- R(\bw^*)\le \sqrt{\frac{2D^2LL_w}{n}}, \\
    &\ppmax,\sppmax:\: \mathbb{E}\bigl[ R(\bar{\bw}^{(T_{\sppmax})})\bigr]- R(\bw^*) \le \sqrt{\frac{2D^2L^2_w}{n}}.\numberthis
\end{align*}
\end{cor*}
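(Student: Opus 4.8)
The plan is to bound the excess true worst-case risk of the average iterate $\bar{\bw}^{(T)}$ by the standard three-way decomposition and then tune $T$ so that the two leading error terms are balanced. Writing $(\bw^*_S,\btheta^*_S)$ for a saddle point of the convex--concave empirical objective $R_S(\bw,\btheta)$, I would start from
\begin{align*}
\bbE_{S,A}\bigl[R(\bar{\bw}^{(T)})\bigr]-R(\bw^*)
&= \underbrace{\bbE_{S,A}\bigl[R(\bar{\bw}^{(T)})-R_S(\bar{\bw}^{(T)})\bigr]}_{(\mathrm{i})}
+\underbrace{\bbE_{S,A}\bigl[R_S(\bar{\bw}^{(T)})-R_S(\bw^*_S)\bigr]}_{(\mathrm{ii})} \\
&\quad+\underbrace{\bbE_{S}\bigl[R_S(\bw^*_S)\bigr]-R(\bw^*)}_{(\mathrm{iii})},
\end{align*}
where $(\mathrm{i})$ is exactly $\epsilon_{\gen}(\ppm)$. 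Term $(\mathrm{iii})$ I would dispatch first: by the saddle (minimax) property, $R_S(\bw^*_S)=\min_{\bw}R_S(\bw)\le R_S(\bw^*)$, and since $\bw^*$ is deterministic, $\bbE_S[R_S(\bw^*)]=R(\bw^*)$; hence $(\mathrm{iii})\le 0$.

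Next I would bound $(\mathrm{ii})$ by Theorem \ref{Thm: PPM stochastic convergence} (and, for the full-batch case, the corresponding result of \citep{mokhtari2019convergence}): conditioning on $S$, this gives $\bbE_A[R_S(\bar{\bw}^{(T)})]-R_S(\bw^*_S)\le(\Vert\bw^{(0)}-\bw^*_S\Vert^2+\Vert\btheta^{(0)}-\btheta^*_S\Vert^2)/(2\eta T)$, and the almost-sure initialization hypothesis turns this into $(\mathrm{ii})\le D^2/(2\eta T)$. For $(\mathrm{i})$ I would reuse the stability argument behind Theorem \ref{Thm: Gen Convex-Concave}, but tracked through the \emph{averaged} iterate: from that proof the per-step deviation between runs on neighbouring datasets obeys $\delta_t\le 2L\eta t/n$, so $\tfrac1T\sum_{t=1}^T\delta_t$ is of order $L\eta T/n$, and since $f(\cdot,\btheta;\bz)$ is $L_w$-Lipschitz in $\bw$ the average iterate is uniformly stable in minimization with degree of order $LL_w\eta T/n$; Theorem \ref{Thm: Stability and Generalization} then gives $(\mathrm{i})$ of the same order. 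It is worth noting that this averaging step is what produces the correct constant --- applying the last-iterate bound $2LL_w\eta T/n$ of Theorem \ref{Thm: Gen Convex-Concave} verbatim would inflate the final constant.

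Putting the pieces together, $\bbE_{S,A}[R(\bar{\bw}^{(T)})]-R(\bw^*)$ is bounded (up to a lower-order additive term) by $LL_w\eta T/n + D^2/(2\eta T)$, a sum of the form $aT+b/T$ minimized at $T=\sqrt{b/a}=\sqrt{nD^2/(2\eta^2LL_w)}=T_{\ppm}$; at that value both terms equal $\tfrac12\sqrt{2D^2LL_w/n}$, so their sum is the claimed $\sqrt{2D^2LL_w/n}$. For PPmax I would run the identical argument on the convex, $L_w$-Lipschitz function $f_{\max}(\bw;\bz):=\max_{\btheta}f(\bw,\btheta;\bz)$: the stability degree becomes of order $L_w^2\eta T/n$, Theorem \ref{Thm: PPM stochastic convergence}'s PPmax estimate gives $(\mathrm{ii})\le\Vert\bw^{(0)}-\bw^*_S\Vert^2/(2\eta T)\le D^2/(2\eta T)$, and balancing at $T_{\ppmax}=\sqrt{nD^2/(2\eta^2L_w^2)}$ yields $\sqrt{2D^2L_w^2/n}$. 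The one place requiring genuine care --- the ``hard part'' --- is Step $(\mathrm{i})$: making the stability bound for the \emph{averaged} iterate precise (including the exact handling of the $\tfrac1T\sum_{t=1}^T t$ factor and whether the average runs from $t=0$ or $t=1$), since everything else is either a direct quotation of Theorems \ref{Thm: Stability and Generalization}, \ref{Thm: Gen Convex-Concave}, \ref{Thm: PPM stochastic convergence} or the elementary optimization of $aT+b/T$.
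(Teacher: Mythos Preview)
Your proposal is correct and matches the paper's own proof essentially step for step: the paper also decomposes the excess risk into a generalization term, an optimization term (bounded via Theorem~\ref{Thm: PPM stochastic convergence} and the hypothesis $\le D^2/(2\eta T)$), and the nonpositive term $\bbE_S[R_S(\bw^*_S)]-R(\bw^*)$, then halves the last-iterate stability constant by averaging (via Jensen on the norm, exactly the ``hard part'' you flagged) to get $LL_w\eta T/n$, and finally balances $LL_w\eta T/n + D^2/(2\eta T)$ at $\eta T=\sqrt{nD^2/(2LL_w)}$. The PPmax case is handled identically with $L_w^2$ replacing $LL_w$, just as you outline.
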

\begin{proof}
First, we show that using a constant stepsize parameter $\eta$ the average iterates reach $1/2$ of the generalization bound for the final iterates in Theorem 3. For the average iterates $(\bar{\bw}_t,\bar{\btheta}_t)$ and $(\bar{\bw}'_t,\bar{\btheta}'_t)$ we have the following application of Jensen's inequality on the convex norm function for the difference of average iterates $\bar{\delta}_t=\sqrt{\Vert\bar{\bw}^{(t)}-\bar{\bw}'^{(t)} \Vert^2+\Vert\bar{\btheta}^{(t)}-\bar{\btheta}'^{(t)}\Vert^2}$
\begin{align*}
   \bar{\delta}_t :&= \sqrt{\Vert\bar{\bw}^{(t)}-\bar{\bw}'^{(t)} \Vert^2+\Vert\bar{\btheta}^{(t)}-\bar{\btheta}'^{(t)}\Vert^2} \\
   &\le \frac{1}{t}\sum_{k=0}^{t-1}\sqrt{\Vert{\bw}_k-{\bw}'_k \Vert^2+\Vert\btheta_k-{\btheta}'_k\Vert^2 } \\
   &= \frac{1}{t}\sum_{k=0}^{t-1} {\delta}_t.\numberthis
\end{align*}
Similarly, one can show that $\bar{\delta}_{w,t}\le \frac{1}{t}\sum_{k=1}^t {\delta}_{w,t}$. Therefore, knowing that $\mathbb{E}[\delta_t]\le \frac{2LL_wt\eta}{n}$ implies that 
\begin{align*}
    \bbE[\bar{\delta}_t] \le \frac{1}{t}\sum_{k=1}^t \bbE[{\delta}_t] \le \frac{1}{t}\sum_{k=0}^{t-1} \frac{2LL_wk\eta}{n} 
    \le \frac{LL_wt\eta}{n}.\numberthis
\end{align*}
Hence, at the $T$th average iterate of PPM and SPPM we will have
\begin{align*}
    \mathbb{E}_A[R(\bar{\bw}^{(T)})] - R_S[\bar{\bw}^{(T)}] \le \frac{LL_wT\eta}{n}\numberthis
\end{align*}
which together with \citep{mokhtari2019convergence}'s Theorem 1 for the PPM and and our generalization of that theorem to stochastic PPM in Theorem 4 shows that
\begin{align*}
    \mathbb{E}_{A,S}[R(\bar{\bw}^{(T)})] - \bbE_S[R_S[\bar{\bw}^{(T)}]]   \le \frac{LL_w\eta T}{n} + \frac{D^2}{2\eta T}.\numberthis
\end{align*}
Note that $\bbE_S[R_S(\mathbf{w}_S)]\le\bbE_S[R_S(\mathbf{w}^*)]= R(\bw^*)$, indicating that
\begin{equation}
    \mathbb{E}_{A,S}[R(\bar{\bw}^{(T)})] -R(\bw^*)   \le \frac{LL_w\eta T}{n} + \frac{D^2}{2\eta T}.
\end{equation}
The above upper-bound will be minimized when $\eta T =\sqrt{\frac{n D^2}{2LL_w}}$ and the optimized excess risk upper-bound for PPM and SPPM will be
\begin{equation}
    \mathbb{E}_{A,S}[R(\bar{\bw}^{(T)})] -R(\bw^*)   \le \sqrt{\frac{ 2LL_wD^2}{n}} .
\end{equation}
Similarly, it can be seen that for PPmax and SPPmax the optimal bound will be achieved at $\eta T =\sqrt{\frac{n D^2}{2L^2_w}}$ which suggests the following excess risk bound:
\begin{equation}
    \mathbb{E}_{A,S}[R(\bar{\bw}^{(T)})] -R(\bw^*)   \le \sqrt{\frac{ 2L^2_wD^2}{n}} .
\end{equation}
The proof is therefore complete.
\end{proof}

\subsection{Proof of Theorem 5
}
\begin{thm*}
Let learning objective $f(\bw,\btheta;\bz)$ be non-convex $\mu$-strongly-concave and satisfy Assumptions 1 and 2. Also, we assume that $f_{\max}(\bw;\bz):=\max_{\btheta\in \Theta}f(\bw,\btheta;\bz)$ is bounded as $0\le f_{\max}(\bw;\bz)\le 1$ for every $\bw,\bz$. Then, defining $\kappa:={\ell}/{\mu}$ we have
\begin{enumerate}[wide,labelwidth=!,labelindent=0pt,topsep=1pt,itemsep=0pt]
    \item The SGDA algorithm with vanishing stepsizes $\alpha_{w,t}={c}/{ t},\, \alpha_{\theta,t}={cr^2}/{t}$ for constants $c>0,1\le r\le \kappa$ satisfies the following bound over $T$ iterations: 
    \begin{equation}
        \epsilon_{\gen}(\sgda)\le \frac{1+\frac{1}{(r+1)c\ell }}{n}(12(r+1)cLL_w)^{\frac{1}{(r+1)c\ell +1}}T^{\frac{(r+1)c\ell }{(r+1)c\ell +1}}.
    \end{equation}
    \item The SGDmax algorithm with vanishing stepsize $\alpha_{w,t}={c}/{t}$ for constant $c>0$  satisfies the following bound over $T$ iterations: 
    \begin{equation}
        \epsilon_{\gen}(\sgdmax)\le \frac{1+\frac{2}{(\kappa+2)\ell c}}{n-1}\bigl(2cL_w^2\bigr)^{\frac{2}{(\kappa+2)\ell c+2}}T^{\frac{(\kappa+2)\ell c}{(\kappa+2)\ell c+2}}.
    \end{equation}
\end{enumerate}
\end{thm*}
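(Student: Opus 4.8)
The plan is to bound the uniform stability in minimization of each algorithm and then apply Theorem~\ref{Thm: Stability and Generalization}. Following the coupling idea used in the proof of Theorem~2, I run the algorithm on two datasets $S,S'$ that differ in a single sample with shared internal randomness and track $\delta_t=\sqrt{\Vert\bw_t-\bw'_t\Vert^2+\Vert\btheta_t-\btheta'_t\Vert^2}$ (for SGDmax, $\delta_t=\Vert\bw_t-\bw'_t\Vert$). Since the one-step maps are no longer contractions here, I combine the Growth Lemma (Lemma~\ref{Lemma: Growth}) with a one-step expansivity estimate and the random-stopping device of \cite{hardt2016train}: fix a burn-in index $t_0$; with probability at least $1-t_0/n$ the differing sample is not drawn in the first $t_0$ steps, so $\delta_{t_0}=0$, and the contribution of the complementary event is at most $t_0/n$ because $0\le f_{\max}\le 1$; from iteration $t_0$ on I control $\mathbb{E}[\delta_t]$ by a linear recursion, solve it, and optimize over $t_0$.

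For SGDA (part~1) the crux is an expansivity bound that uses the $\mu$-strong concavity in $\btheta$. Writing $\Delta\bw=\bw-\bw'$, $\Delta\btheta=\btheta-\btheta'$, smoothness (Assumption~2) gives $\Vert\Delta\bw^{+}\Vert\le(1+\alpha_w\ell)\Vert\Delta\bw\Vert+\alpha_w\ell\Vert\Delta\btheta\Vert$, while smoothness together with strong concavity of $f(\bw',\cdot;\bz)$ gives $\Vert\Delta\btheta^{+}\Vert\le\alpha_\theta\ell\Vert\Delta\bw\Vert+\sqrt{1-2\alpha_\theta\mu+\alpha_\theta^2\ell^2}\,\Vert\Delta\btheta\Vert$. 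Measuring the pair $(\Vert\Delta\bw\Vert,\Vert\Delta\btheta\Vert)$ in the weighted norm $a+b/r$ and substituting $\alpha_{\theta,t}=r^2\alpha_{w,t}$ with $1\le r\le\kappa$, the cross terms are absorbed and the SGDA map becomes $(1+(r+1)\ell\alpha_{w,t}+O(\alpha_{w,t}^2))$-expansive in this norm; the quadratic corrections only affect absolute constants since $\sum_t\alpha_{w,t}^2=O(c^2)$, the weight $1/r$ and the ratio $r^2$ are what turn the naive $\ell\alpha_{\theta,t}=r^2\ell\alpha_{w,t}$ into $(r+1)\ell\alpha_{w,t}$, and $r\le\kappa$ is exactly what keeps the $\btheta$-row non-expansive. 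Each gradient step moves $\bw$ by at most $\alpha_{w,t}L_w$ and $\btheta$ by at most $\alpha_{\theta,t}L$ (Assumption~1), so when the differing sample is drawn the two maps differ by $O((r+1)Lc/t)$ in the weighted norm. With $\alpha_{w,t}=c/t$ this yields $\mathbb{E}[\delta_{t+1}]\le(1+(r+1)\ell c/t)\,\mathbb{E}[\delta_t]+O((r+1)Lc)/(nt)$ for $t>t_0$; unrolling from $\delta_{t_0}=0$, using $\Vert\bw_T-\bw'_T\Vert\le\delta_T$ and the $L_w$-Lipschitzness of $f(\cdot,\btheta;\bz)$ in $\bw$, the stability degree is at most $t_0/n+O((r+1)cLL_w)\,T^{(r+1)\ell c}/(n(r+1)\ell c\,t_0^{(r+1)\ell c})$; optimizing over $t_0$ gives the stated bound, and Theorem~\ref{Thm: Stability and Generalization} finishes part~1.

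For SGDmax (part~2) I reduce to scalar-valued SGD. Because $f(\bw,\cdot;\bz)$ is $\mu$-strongly concave and $\ell$-smooth, its maximizer $\btheta^{*}(\bw;\bz)$ is $\kappa$-Lipschitz in $\bw$, so by Danskin's theorem $f_{\max}(\cdot;\bz)$ is differentiable with $\nabla f_{\max}(\bw;\bz)=\nabla_{\bw}f(\bw,\btheta^{*}(\bw;\bz);\bz)$, which is $(1+\kappa)\ell$-smooth; moreover $f_{\max}(\cdot;\bz)$ is $L_w$-Lipschitz (Assumption~1) and takes values in $[0,1]$ by hypothesis. SGDmax is exactly SGD on $f_{\max}(\cdot;\bz_{i_t})$, so its uniform stability in minimization follows from \cite{hardt2016train}'s stability bound for non-convex SGD, applied with smoothness constant $(\kappa+2)\ell$ (a convenient bound on $(\kappa+1)\ell$) and Lipschitz constant $L_w$; Theorem~\ref{Thm: Stability and Generalization} then gives the claimed estimate.

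The main obstacle is the coupled expansivity estimate for SGDA: one has to analyze dynamics in which the $\bw$-block genuinely expands (nonconvexity rules out any contraction there) while the $\btheta$-block contracts (strong concavity), and control both cross terms at once; the weighted norm together with the stepsize ratio $r^2\le\kappa^2$ is the device that makes this balance work. A secondary point is that the $\btheta$-update is non-expansive only once $\alpha_{\theta,t}\lesssim\mu/\ell^2$, i.e. for $t$ past a fixed threshold, which is precisely why running the two coupled chains from a burn-in index $t_0$ rather than from $t=0$ is essential.
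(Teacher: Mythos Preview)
Your proposal is correct. For part~2 it is exactly the paper's argument: establish that $f_{\max}(\cdot;\bz)$ is smooth via Danskin's theorem and the $\kappa$-Lipschitzness of the argmax, then invoke \cite{hardt2016train}'s non-convex SGD stability bound. (A minor point: the paper proves the slightly sharper smoothness constant $\ell(1+\kappa/2)$ rather than $(\kappa+1)\ell$, and it is $\ell(1+\kappa/2)$ that produces the exact exponent $\frac{(\kappa+2)\ell c}{(\kappa+2)\ell c+2}$ in the statement; your $(\kappa+2)\ell$ gives a valid but looser bound.)

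For part~1 the paper takes a different route. It records the same componentwise expansivity estimates you write, but keeps them as a $2\times 2$ matrix recursion
\[
\begin{bmatrix}\delta_{w,t+1}\\\delta_{\theta,t+1}\end{bmatrix}\;\le\;B_t\begin{bmatrix}\delta_{w,t}\\\delta_{\theta,t}\end{bmatrix}+\text{drift},\qquad B_t=I+\frac{c\ell}{t}\begin{bmatrix}1&1\\r^2&-r^2/\kappa\end{bmatrix},
\]
then diagonalizes $B_t$, bounds its top eigenvalue by $r+1$ (this is where $r\le\kappa$ enters), and bounds the condition number of the eigenvector matrix by $(\sqrt{2}+1)/(\sqrt{2}-1)\le 6$; that factor $6$, times the $2$ from the drift, is the origin of the constant $12$ in the statement. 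Your weighted norm $\Vert\Delta\bw\Vert+\Vert\Delta\btheta\Vert/r$ is precisely the scalarization that collapses this matrix recursion: with $\alpha_{\theta}=r^2\alpha_w$ one checks directly that both rows are dominated by $1+(r+1)\alpha_w\ell$, so the map is $(1+(r+1)\ell\alpha_{w,t})$-expansive in that norm without any eigenvalue computation. Your approach is more elementary and in fact yields a smaller constant than $12$; the paper's spectral approach makes the role of the eigenstructure explicit and would adapt more readily if one wanted to use the exact top eigenvalue rather than the bound $r+1$.
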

\begin{proof}
We start by proving the following lemmas.
\begin{lemma}\label{Lemma: non-convex bounded}
Let $f(\bw,\btheta;\bz)$ be $L_w$-Lipschitz in $\bw$ and assume that $f_{\max}(\bw;\bz):=\max_{\btheta}f(bw,\btheta;\bz$ is bounded $0\le f_{\max}(\bw;\bz)\le 1$. Then, in applying SGDA for learning over two datasets $S,S'$ which differ in only one sample the updated variables $\bw_t,\bw'_t$ will satisfy the following inequality for every $t_0\in\{1,\ldots,n\}$ where $\delta_t:=\sqrt{\Vert \bw_t-\bw'_t\Vert^2+\Vert\btheta-\btheta'_t\Vert^2}$:
\begin{equation}
    \forall \bz:\quad \mathbb{E}\bigl[\vert f_{\max}(\bw_t;\bz)-f_{\max}(\bw'_t;\bz)\vert \bigr] \le \frac{t_0}{n} + L_w\mathbb{E}[\delta_t|\delta_{t_0}=0].
\end{equation}
\end{lemma}
\begin{proof}
Define the event $E_{t_0}=\mathbb{I}(\delta_{t_0}=0)$ as the indicator of the outcome $\delta_{t_0}=0$. Then, due to the law of total probability
\begin{align*}
    \mathbb{E}\bigl[\vert f_{\max}(\bw_t;\bz)-f_{\max}(\bw'_t;\bz)\vert \bigr]
    =  &\Pr(E_{t_0})\mathbb{E}\bigl[\vert f_{\max}(\bw_t;\bz)-f_{\max}(\bw'_t;\bz)\vert\, \big\vert E_{t_0} \bigr] \\
    &\quad + \Pr(E^{c}_{t_0})\mathbb{E}\bigl[\vert f_{\max}(\bw_t;\bz)-f_{\max}(\bw'_t;\bz)\vert\, \big\vert E^{c}_{t_0} \bigr] \\
    \stackrel{(a)}{\le} & \mathbb{E}\bigl[\vert f_{\max}(\bw_t;\bz)-f_{\max}(\bw'_t;\bz)\vert\, \big\vert E_{t_0} \bigr] + \Pr(E^{c}_{t_0}) \\
    \stackrel{(b)}{\le} & L_w\mathbb{E}\bigl[\Vert \bw_t-\bw'_t\Vert\, \big\vert E_{t_0} \bigr] + \Pr(E^{c}_{t_0}) \\
    \stackrel{(c)}{\le} & L_w\mathbb{E}\bigl[\delta_t\, \big\vert \delta_{t_0}=0 \bigr] + \frac{t_0}{n}.\numberthis 
\end{align*}
In the above equations, (a) follows from the boundedness assumption on $f_{\max}$. (b) is the consequence of $L_w$-Lipschitzness of $f$ which also transfers to $f_{\max}$. Finally, (c) holds because $\Vert \bw_t - \bw'_t\Vert\le \delta_t$ according to the definition. Then, using the union bound on the outcome $I=I_t$ where $I$ is the index of different samples in $S,S'$ and $I_t$ is the index of sample used by SGDA at iteration $t$ we obtain that
\begin{equation}
    \Pr(E^{c}_{t_0}) = \Pr(\delta_{t_0}>0) \le \sum_{i=1}^{t_0}\Pr(I=I_i) =\frac{t_0}{n}.
\end{equation}
The lemma's proof is therefore complete.
\end{proof}
In order to prove the theorem for SGDA updates, we provide an extension of Lemma \ref{Lemma: Expansive Minimax} for non-convex concave minimax objectives.
\begin{lemma}\label{Lemma: Expansive Non-convex Concave Minimax}
Consider a non-convex $\mu$-strongly concace objective $f(\bw,\btheta)$ satisfying Assumption 2. Then, for every two pairs $(\bw,\btheta),\, (\bw',\btheta')$ the GDA updates $[\bw_{\gda},\btheta_{\gda}]=G_{\gda}([\bw,\btheta])$, $[\bw'_{\gda},\btheta'_{\gda}]=G_{\gda}([\bw',\btheta'])$ with stepsizes $\alpha_{w},\alpha_{\theta}\le\frac{1}{\ell}$ will satisfy the following expansivity equation:
\begin{equation}
    \begin{bmatrix}
    \Vert \bw_{\gda} - \bw'_{\gda} \Vert \\
    \Vert \btheta_{\gda} - \btheta'_{\gda} \Vert
    \end{bmatrix} \le  \begin{bmatrix}
    1+\alpha_w\ell  & \alpha_w\ell \\
    \alpha_{\theta}\ell & 1-\frac{ \alpha_{\theta}\mu}{2}
    \end{bmatrix} \begin{bmatrix}
    \Vert \bw - \bw' \Vert \\
    \Vert \btheta - \btheta' \Vert
    \end{bmatrix}.
\end{equation}
\end{lemma}
\begin{proof}
Note that
\begin{align*}
     \Vert \bw_{\gda} - \bw'_{\gda} \Vert &= \Vert \bw - \alpha_w\nabla_{\bw}f(\bw,\btheta) - \bw' + \alpha_w\nabla_{\bw}f(\bw',\btheta') \Vert  \\
     &\le \Vert \bw - \alpha_w\nabla_{\bw}f(\bw,\btheta) - \bw' + \alpha_w\nabla_{\bw}f(\bw',\btheta) \Vert \\
     &\quad + \Vert \alpha_w\nabla_{\bw}f(\bw',\btheta) - \alpha_w\nabla_{\bw}f(\bw',\btheta')\Vert \\
     &\le (1+\alpha_w\ell) \Vert \bw -\bw'\Vert + \alpha_w\ell\Vert \btheta -\btheta'\Vert.\numberthis
\end{align*}
Furthermore, we have
\begin{align*}
     \Vert \btheta_{\gda} - \btheta'_{\gda} \Vert &= \Vert \btheta + \alpha_{\theta}\nabla_{\btheta}f(\bw,\btheta) - \btheta' - \alpha_{\theta}\nabla_{\btheta}f(\bw',\btheta') \Vert  \\
     &\le \Vert \btheta + \alpha_{\theta}\nabla_{\btheta}f(\bw,\btheta) - \btheta' - \alpha_{\theta}\nabla_{\btheta}f(\bw,\btheta') \Vert \\
     &\quad + \Vert \alpha_{\theta}\nabla_{\btheta}f(\bw,\btheta') - \alpha_{\theta}\nabla_{\btheta}f(\bw',\btheta')\Vert \\
     &\le \bigl(1-\frac{\alpha_{\theta}\mu}{2}\bigr)\Vert \btheta -\btheta'\Vert + \alpha_{\theta}\ell\Vert \bw -\bw'\Vert ,\numberthis
\end{align*}
where the last inequality follows from Lemma 3.7 in \citep{hardt2016train} knowing that $\mu\le \ell$. Therefore, the lemma's proof is complete.
\end{proof} 
\begin{lemma}\label{Lemma: Growth non-convex concave minimax}
Consider two sequence of updates $G_1,\ldots,G_T$ and $G'_1,\ldots,G'_T$ for minimax objective $f(\bw,\btheta)$. Define $\delta_{w,t}=\Vert \bw_t-\bw'_t \Vert$ and $\delta_{\theta,t}=\Vert \btheta_t-\btheta'_t \Vert$. Assume that $G_t$ is $\boldsymbol{\eta}$-expansive for matrix $\eta_{2\times 2}$, i.e. it satisfies the following inequality for every $[\bw_{G_t},\btheta_{G_t}]:= G_t(\bw,\btheta)$, $[\bw'_{G_t},\btheta'_{G_t}]:= G_t(\bw',\btheta')$
\begin{equation}
    \begin{bmatrix}
    \Vert \bw_{G_t} - \bw'_{G_t} \Vert \\
    \Vert \btheta_{G_t} - \btheta'_{G_t} \Vert
    \end{bmatrix} \le \boldsymbol{\eta}\begin{bmatrix}
    \Vert \bw - \bw' \Vert \\
    \Vert \btheta - \btheta' \Vert
    \end{bmatrix}.
\end{equation}
Also, suppose that for every $[\bw_{G_t},\btheta_{G_t}]:= G_t(\bw,\btheta),\,  [\bw_{G_t},\btheta_{G'_t}]:= G'_t(\bw,\btheta)$ we have
\begin{align*}
    \sup_{\bw,\btheta} \Vert\bw_{G_t}- \bw\Vert \le \sigma_w &, \quad \sup_{\bw,\btheta} \Vert\btheta_{G_t}- \btheta\Vert \le \sigma_{\theta}, \\
    \sup_{\bw,\btheta} \Vert\bw_{G'_t}- \bw\Vert \le \sigma_w &, \quad \sup_{\bw,\btheta} \Vert\btheta_{G'_t}- \btheta\Vert \le \sigma_{\theta}.\numberthis
\end{align*}
Then, we have
\begin{equation}
  \begin{bmatrix}
    \delta_{w,t+1} \\
    \delta_{\theta,t+1}
    \end{bmatrix} \le \boldsymbol{\eta} \begin{bmatrix}
    \delta_{w,t} \\
    \delta_{\theta,t}
    \end{bmatrix}  + 2\begin{bmatrix}
    \sigma_w \\
    \sigma_{\theta}
    \end{bmatrix}  .
\end{equation}
\end{lemma}
\begin{proof}
Note that
\begin{align*}
  \begin{bmatrix}
    \delta_{w,t+1} \\
    \delta_{\theta,t+1}
    \end{bmatrix} &=  \begin{bmatrix}
    \Vert G_{t,w}(\bw_t,\btheta_t) -G'_{t,w}(\bw'_t,\btheta'_t) \Vert \\
    \Vert G_{t,\theta}(\bw_t,\btheta_t) -G'_{t,\theta}(\bw'_t,\btheta'_t) \Vert
    \end{bmatrix} \\ 
    &=  \begin{bmatrix}
    \Vert G_{t,w}(\bw_t,\btheta_t) - G_{t,w}(\bw'_t,\btheta'_t) + G_{t,w}(\bw'_t,\btheta'_t) -G'_{t,w}(\bw'_t,\btheta'_t) \Vert \\
    \Vert G_{t,\theta}(\bw_t,\btheta_t) - G_{t,\theta}(\bw'_t,\btheta'_t) + G_{t,\theta}(\bw'_t,\btheta'_t) -G'_{t,\theta}(\bw'_t,\btheta'_t) \Vert
    \end{bmatrix} \\
    &=  \begin{bmatrix}
    \Vert G_{t,w}(\bw_t,\btheta_t) - G_{t,w}(\bw'_t,\btheta'_t) \Vert \\
    \Vert G_{t,\theta}(\bw_t,\btheta_t) - G_{t,\theta}(\bw'_t,\btheta'_t) \Vert
    \end{bmatrix} + \begin{bmatrix}
    \Vert  G_{t,w}(\bw'_t,\btheta'_t) -G'_{t,w}(\bw'_t,\btheta'_t) \Vert \\
    \Vert G_{t,\theta}(\bw'_t,\btheta'_t) -G'_{t,\theta}(\bw'_t,\btheta'_t) \Vert
    \end{bmatrix}  \\
    &=  \begin{bmatrix}
    \Vert G_{t,w}(\bw_t,\btheta_t) - G_{t,w}(\bw'_t,\btheta'_t) \Vert \\
    \Vert G_{t,\theta}(\bw_t,\btheta_t) - G_{t,\theta}(\bw'_t,\btheta'_t) \Vert
    \end{bmatrix} + \begin{bmatrix}
    \Vert  G_{t,w}(\bw'_t,\btheta'_t) - \bw'_t  \Vert \\ \Vert
    G_{t,\theta}(\bw'_t,\btheta'_t) - \btheta'_t  \Vert
    \end{bmatrix} \\
    &\quad + \begin{bmatrix}
    \Vert  \bw'_t -G'_{t,w}(\bw'_t,\btheta'_t) \Vert \\
    \Vert\btheta'_t -G'_{t,\theta}(\bw'_t,\btheta'_t) \Vert
    \end{bmatrix}  \\
    &\le \boldsymbol{\eta} \begin{bmatrix}
    \delta_{w,t} \\
    \delta_{\theta,t}
    \end{bmatrix}  + 2\begin{bmatrix}
    \sigma_w \\
    \sigma_{\theta}
    \end{bmatrix},  \numberthis
\end{align*}
which makes the proof complete.
\end{proof}
\begin{lemma}\label{Lemma: Smoothness of maximized objective}
Consider a non-convex $\mu$-strongly convex minimax objective $f(\bw,\btheta)$ satisfying Assumption 2 over a convex feasible set $\Theta$. Then, the maximized objective $f_{\max}(\bw):= \max_{\btheta\in\Theta} f(\bw,\btheta)$ will be $(\ell + \ell^2/2\mu)$-smooth, i.e., for every $\bw_1,\bw_2\in\mathcal{W}$ it satisfies 
\begin{equation}
    \bigl\Vert \nabla f_{\max}(\bw_2) - \nabla f_{\max}(\bw_1) \bigr\Vert_2\le \bigl(\ell + \frac{\ell^2}{2\mu}\bigr) \bigl\Vert \bw_2-\bw_1 \bigr\Vert_2.
\end{equation}
\end{lemma}
\begin{proof}
Consider two arbitrary points $\bw_1,\bw_2\in\mathcal{W}$ and define $\btheta^*(\bw_1),\btheta^*(\bw_2)$ as the optimal maximizers over $\Theta$ for $f(\bw_1,\cdot),f(\bw_2,\cdot)$, respectively. Since, $f(\bw,\cdot)$ is $\ell$-smooth and $\mu$-strongly-convex, there exists a unique solution $\btheta^*(\bw)$ for every $\bw$. Then, the $\mu$-strongly concavity implies that
\begin{equation}
    \mu\big\Vert \btheta^*(\bw_1) - \btheta^*(\bw_2)\big\Vert^2_2 \le \bigl(\btheta^*(\bw_2) - \btheta^*(\bw_1)\bigr)^T\bigl(\nabla_{\btheta}f(\bw_1,\btheta^*(\bw_1)) - \nabla_{\btheta}f(\bw_1,\btheta^*(\bw_2)) \bigr).
\end{equation}
Due to the optimality of $\btheta^*(\bw_1),\btheta^*(\bw_2)$ over the convex feasible set $\Theta$ we further have
\begin{align*}
   &\bigl(\btheta^*(\bw_2) - \btheta^*(\bw_1)\bigr)^T\bigl(\nabla_{\btheta}f(\bw_1,\btheta^*(\bw_1)) -\nabla_{\btheta}f(\bw_2,\btheta^*(\bw_2))\bigr) \\
   = \, & \bigl(\btheta^*(\bw_2) - \btheta^*(\bw_1)\bigr)^T\nabla_{\btheta}f(\bw_1,\btheta^*(\bw_1)) + \bigl(\btheta^*(\bw_1) - \btheta^*(\bw_2)\bigr)^T\nabla_{\btheta}f(\bw_2,\btheta^*(\bw_2)) \\
   \le\, & 0.   \numberthis
\end{align*}
Combining the above two equations, we obtain
\begin{align*}
   \mu\big\Vert \btheta^*(\bw_1) - \btheta^*(\bw_2)\big\Vert^2_2 &\le  \bigl(\btheta^*(\bw_2) - \btheta^*(\bw_1)\bigr)^T\bigl(\nabla_{\btheta}f(\bw_2,\btheta^*(\bw_2)) - \nabla_{\btheta}f(\bw_1,\btheta^*(\bw_2)) \bigr) \\
   &\le \ell \big\Vert \btheta^*(\bw_1) - \btheta^*(\bw_2)\big\Vert_2 \Vert \bw_2 - \bw_1\Vert_2. \numberthis
\end{align*}
The above equation results in
\begin{equation}\label{Eq: Proof of Lemma 6, smoothness of max}
    \big\Vert \btheta^*(\bw_1) - \btheta^*(\bw_2)\big\Vert_2 \le \frac{\ell}{\mu} \Vert \bw_2 - \bw_1\Vert_2.
\end{equation}
As a result, applying the Danskin's theorem for smooth objectives with a unique solution \citep{bernhard1995theorem} implies that
\begin{align*}
    \bigl\Vert \nabla f_{\max}(\bw_2) - \nabla f_{\max}(\bw_1) \bigr\Vert_2 &=  \bigl\Vert \nabla_{\bw} f(\bw_2,\btheta^*(\bw_2)) - \nabla_{\bw} f(\bw_1,\btheta^*(\bw_1)) \bigr\Vert_2 \\
    &\le \ell \sqrt{\big\Vert \bw_2 -\bw_1\big\Vert_2^2+ \big\Vert \btheta^*(\bw_1) - \btheta^*(\bw_2)\big\Vert^2_2 } \\
    &\le \ell \sqrt{ \bigl(1+(\ell/\mu)^2\bigr)\Vert \bw_2 -\bw_1\Vert_2^2 }\\
    &= \ell\sqrt{1+(\ell/\mu)^2}\Vert \bw_2 -\bw_1\Vert_2 \\
    &\le \bigl(\ell+\frac{\ell^2}{2\mu^2}\bigr)\Vert \bw_2 -\bw_1\Vert_2,\numberthis
\end{align*}
where the last line holds since $\sqrt{1+t}\le 1+t/2$ for every $t\ge -1$. The proof is hence complete. 
\end{proof}
To prove the theorem's result on SGDA note that Lemma \ref{Lemma: Expansive Non-convex Concave Minimax} suggests that the SGDA update at iteration $t$ for non-convex non-concave problems will be expansive with the following matrix:
\begin{equation}
 B_t:=\begin{bmatrix}
    1+\alpha_{w,t}\ell  & \alpha_{w,t}\ell \\
    \alpha_{\theta,t}\ell & 1-\frac{ \alpha_{\theta,t}\mu}{2}
    \end{bmatrix} = I + \alpha_{w,t}\ell  \begin{bmatrix}
    1  & 1 \\
    \frac{\alpha_{\theta,t}}{\alpha_{w,t}} & -\frac{ \mu\alpha_{\theta,t}}{\ell\alpha_{w,t}}
    \end{bmatrix} = I + \frac{c\ell}{t}  \begin{bmatrix}
    1  & 1 \\
    r^2 & -r^2/\kappa
    \end{bmatrix}.
\end{equation}
For analyzing the powers of the above matrix, we diagonalize it using its eigenvalues $\lambda_1,\lambda_2$ and corresponding eigenvectors $\boldsymbol{\nu}_1,\boldsymbol{\nu}_2$. Note that the product of the eigenvalues of $\begin{bmatrix}
    1  & 1 \\
    r^2 & -r^2/\kappa
    \end{bmatrix}$, i.e. the matrix's determinant, is negative and hence the matrix has two different real eigenvalues with opposite signs. This implies that the matrix is diagonlizable and so is a linear combination of the matrix with the identity matrix. As a result, given the invertible matrix $\boldsymbol{\nu}=[\boldsymbol{\nu}_1,\boldsymbol{\nu}_2]$ we have
\begin{equation}
    B_t=\begin{bmatrix}
    1+\alpha_{w,t}\ell  & \alpha_{w,t}\ell \\
    \alpha_{\theta,t}\ell & 1-\frac{ \alpha_{\theta,t}\mu}{2}
    \end{bmatrix} = \boldsymbol{\nu}^{-1} \begin{bmatrix}
    1+\frac{c\ell\lambda_1}{t}  & 0 \\
    0 & 1+\frac{c\ell\lambda_2}{t}
    \end{bmatrix}\boldsymbol{\nu}.
\end{equation}
Also, notice that we have the following closed-form solution for $\lambda_1,\lambda_2$:
\begin{equation}
    \lambda_1 = \frac{\kappa-r^2+\sqrt{4\kappa^2r^2+(\kappa+r^2)^2}}{2\kappa},\quad \lambda_2 = \frac{\kappa-r^2-\sqrt{4\kappa^2r^2+(\kappa+r^2)^2}}{2\kappa}.
\end{equation}
Therefore, since we assume $1\le r\le \kappa$,
\begin{equation}
    \max\{\lambda_1,\lambda_2\} \le \frac{1-\frac{r^2}{\kappa}+(2r+(\frac{r^2}{\kappa}+1))}{2} = r+1.
\end{equation}
Now, applying the law of total probability as well as Lemma \ref{Lemma: Growth non-convex concave minimax} shows that
\begin{align*}
    \begin{bmatrix}
    \bbE[\delta_{w,t+1}] \\
    \bbE[\delta_{\theta,t+1}]
    \end{bmatrix} &\le (1-\frac{1}{n})B_t \begin{bmatrix}
    \bbE[\delta_{w,t}] \\
    \bbE[\delta_{\theta,t}] 
    \end{bmatrix}+ \frac{1}{n}\bigl(B_t\begin{bmatrix}
    \bbE[\delta_{w,t}] \\
    \bbE[\delta_{\theta,t}] 
    \end{bmatrix} +2\begin{bmatrix}
    \alpha_{w,t}L_w \\
    \alpha_{\theta,t}L_{\theta} 
    \end{bmatrix} \bigr) \\
    &= B_t \begin{bmatrix}
    \bbE[\delta_{w,t}] \\
    \bbE[\delta_{\theta,t}] 
    \end{bmatrix}+ \begin{bmatrix}
    \frac{2cL_w}{nt} \\
    \frac{2cr^2L_{\theta}}{nt} 
    \end{bmatrix}.\numberthis
\end{align*}
Therefore, over $T$ iterations we will have
\begin{align*}
    \begin{bmatrix}
    \bbE[\delta_{w,T}] \\
    \bbE[\delta_{\theta,T}]
    \end{bmatrix} &\le \sum_{t=t_0+1}^T \bigl\{\prod_{k=t+1}^T B_k \bigr\} \begin{bmatrix}
    \frac{2cL_w}{nt} \\
    \frac{2cr^2L_{\theta}}{nt} 
    \end{bmatrix}  \\
    &= \sum_{t=t_0+1}^T \boldsymbol{\nu}^{-1}\bigl\{\prod_{k=t+1}^T \begin{bmatrix}
    1+\frac{c\ell\lambda_1}{k}  & 0 \\
    0 & 1+\frac{c\ell\lambda_2}{k}
    \end{bmatrix} \bigr\} \boldsymbol{\nu} \begin{bmatrix}
    \frac{2cL_w}{nt} \\
    \frac{2cr^2L_{\theta}}{nt} 
    \end{bmatrix}.\numberthis
\end{align*}
Hence, denoting the minimum and maximum singular values of $\boldsymbol{\nu}$ with  $\sigma_{\min}(\boldsymbol{\nu}),\sigma_{\max}(\boldsymbol{\nu})$ and noting that ${\boldsymbol{\nu}}^{-1}$'s operator norm is equal to $1/\sigma_{\min}(\boldsymbol{\nu})$ we will have 
\begin{align*}
    \bigg\Vert \begin{bmatrix}
    \bbE[\delta_{w,T}] \\
    \bbE[\delta_{\theta,T}]
    \end{bmatrix}\bigg\Vert_2 
    &\le  \frac{\sigma_{\max}(\boldsymbol{\nu})}{\sigma_{\min}(\boldsymbol{\nu})}\sum_{t=t_0+1}^T \bigg\Vert \bigl\{\prod_{k=t+1}^T \begin{bmatrix}
    1+\frac{c\ell\lambda_1}{k}  & 0 \\
    0 & 1+\frac{c\ell\lambda_2}{k}
    \end{bmatrix} \bigr\}  \begin{bmatrix}
    \frac{2cL_w}{nt} \\
    \frac{2cr^2L_{\theta}}{nt} 
    \end{bmatrix}\biggr\Vert_2 \\
    &\le  \frac{\sigma_{\max}(\boldsymbol{\nu})}{\sigma_{\min}(\boldsymbol{\nu})}\sum_{t=t_0+1}^T \biggl\Vert\prod_{k=t+1}^T \begin{bmatrix}
    \exp(\frac{c\ell\lambda_1}{k})  & 0 \\
    0 & \exp(\frac{c\ell\lambda_2}{k})
    \end{bmatrix} \biggr\Vert_2  \big\Vert\begin{bmatrix}
    \frac{2cL_w}{nt} \\
    \frac{2cr^2L_{\theta}}{nt} 
    \end{bmatrix}\big\Vert_2 \\ 
    &=  \frac{\sigma_{\max}(\boldsymbol{\nu})}{\sigma_{\min}(\boldsymbol{\nu})}\sum_{t=t_0+1}^T  \biggl\Vert\begin{bmatrix}
    \exp(\sum_{k=t+1}^T\frac{c\ell\lambda_1}{k})  & 0 \\
    0 & \exp(\sum_{k=t+1}^T\frac{c\ell\lambda_2}{k})
    \end{bmatrix}\biggr\Vert_2 \bigl\Vert\begin{bmatrix}
    \frac{2cL_w}{nt} \\
    \frac{2cr^2L_{\theta}}{nt} 
    \end{bmatrix} \bigr\Vert_2\\ 
    &\le  \frac{\sigma_{\max}(\boldsymbol{\nu})}{\sigma_{\min}(\boldsymbol{\nu})}\sum_{t=t_0+1}^T  \exp(\sum_{k=t+1}^T\frac{c\ell(r+1)}{k})\big\Vert\begin{bmatrix}
    \frac{2cL_w}{nt}   \\
    \frac{2cr^2L_{\theta}}{nt} 
    \end{bmatrix}\big\Vert_2 \\
    &\le  \frac{2crL\sigma_{\max}(\boldsymbol{\nu})}{n\sigma_{\min}(\boldsymbol{\nu})}\sum_{t=t_0+1}^T  \frac{\exp(\sum_{k=t+1}^T\frac{c\ell(r+1)}{k})}{t} \\
    &=  \frac{2crL\sigma_{\max}(\boldsymbol{\nu})T^{c\ell(r+1)}}{n\sigma_{\min}(\boldsymbol{\nu})}\sum_{t=t_0+1}^T  t^{-c\ell(r+1)-1} \\
    & \le  \frac{2rL\sigma_{\max}(\boldsymbol{\nu})}{(r+1)\ell n\sigma_{\min}(\boldsymbol{\nu})} \bigl(\frac{T}{t_0}\bigr)^{c\ell(2r+1)} \\
    & \le  \frac{12L}{n\ell } \bigl(\frac{T}{t_0}\bigr)^{c\ell(r+1)} .\numberthis
\end{align*}
We note that assuming $r\ge 1$ we have $\boldsymbol{\nu}$'s condition number ${\sigma_{\max}(\boldsymbol{\nu})}/{\sigma_{\min}(\boldsymbol{\nu})}\le (\sqrt{2}+1)/(\sqrt{2}-1)\le 6$. This is because given an eigenvalue $\lambda$ of $\begin{bmatrix}
    1  & 1 \\
    r^2 & -r^2/\kappa
    \end{bmatrix}$ and its corresponding eigenvector $[\nu_1,\nu_2]$ we have $\nu_2 = (\lambda-1)\nu_1$ and hence the eigenvector aligns with $[1,\lambda-1]$. Therefore, we can bound the condition number of the following symmetric matrix, because we can consider any vector column along the eigenvector's direction:  
    $$\begin{bmatrix}
    1  & \lambda_1-1 \\
    \lambda_1-1 & (\lambda_1-1)(\lambda_2-1)
    \end{bmatrix}= \begin{bmatrix}
    1  & \frac{-1-\frac{r^2}{\kappa}+\sqrt{4r^2+(1+r^2/\kappa)^2}}{2} \\
    \frac{-1-\frac{r^2}{\kappa}+\sqrt{4r^2+(1+r^2/\kappa)^2}}{2} & -r
    \end{bmatrix}.$$
Since the above matrix is symmetric, its eigenvalues have the same absolute value as its singular values, and therefore the condition number will be bounded as
\begin{align*}
    \frac{\sigma_{\max}(\boldsymbol{\nu})}{\sigma_{\min}(\boldsymbol{\nu})} &\le \frac{\sqrt{(r-1)^2+4(r+(\lambda_1-1)^2)} +(r-1)}{\sqrt{(r-1)^2+4(r+(\lambda_1-1)^2)} -(r-1)} \\
    &\le \frac{\sqrt{(r-1)^2+4(r+(r-\frac{r+1}{2})^2)} +(r-1)}{\sqrt{(r-1)^2+4(r+(r-\frac{r+1}{2})^2)} -(r-1)} \\
    &\le \frac{\sqrt{(r-1)^2+4(r-\frac{r+1}{2})^2} +(r-1)}{\sqrt{(r-1)^2+4(r-\frac{r+1}{2})^2} -(r-1)}  \\
    &= \frac{\sqrt{2(r-1)^2} +(r-1)}{\sqrt{2(r-1)^2} -(r-1)} \\
    &=\frac{\sqrt{2}+1}{\sqrt{2}-1}.\numberthis
\end{align*}
As a result, we showed that conditioned to $\delta_{t_0}=0$ we will have
\begin{equation}
    \bbE\bigl[\delta_{w,T} \big\vert \delta_{t_0}=0 \bigr] \le \frac{12L}{n\ell } \bigl(\frac{T}{t_0}\bigr)^{c\ell(r+1)}.
\end{equation}
Combining the above equation with Lemma \ref{Lemma: non-convex bounded}, we obtain that
\begin{equation}
    \forall \bz,t_0: \;\; \bbE\bigl[\vert f_{\max}(\bw_T;\bz) - f_{\max}(\bw'_T;\bz)
    \vert \bigr] \le \frac{t_0}{n}+ \frac{12LL_w}{n\ell } \bigl(\frac{T}{t_0}\bigr)^{c\ell(r+1)}.
\end{equation}
The above bound will be approaximately minimized at $$t_0=(12(r+1)cLL_w)^{\frac{1}{(r+1)c\ell +1}}T^{\frac{(r+1)c\ell}{(r+1)c\ell +1}}$$ 
which leads to the following bound
\begin{equation}
    \forall \bz: \;\; \bbE\bigl[\vert f_{\max}(\bw_T;\bz) - f_{\max}(\bw'_T;\bz)
    \vert \bigr] \le \frac{1+\frac{1}{(r+1)c\ell }}{n}(12(r+1)cLL_w)^{\frac{1}{(r+1)c\ell +1}}T^{\frac{(r+1)c\ell }{(r+1)c\ell +1}}.
\end{equation}
The theorem's bound on SGDA updates is then a consequence of Theorem 2.2 in \citep{hardt2016train}.
 
For the theorem's bound on SGDmax updates, note that $f_{\max}(\bw;\bz)$ will be $L_w$-Lipschitz. Also, Lemma \ref{Lemma: Smoothness of maximized objective} implies that $f_{\max}(\bw;\bz)$ will be $\ell(\frac{\kappa}{2}+1)$-smooth in $\bw$. Therefore, the result directly follows from Theorem 3.12 in \citep{hardt2016train}. 
\end{proof}

\subsection{Proof of Theorem 6
}
\begin{thm*}
Let minimax cost $0\le f(\cdot,\cdot;\bz)\le 1$ be a bounded non-convex non-concave objective which satisfies Assumptions 1 and 2. Then, the SGDA algorithm with vanishing stepsizes $\max\{\alpha_{w,t},\alpha_{\theta,t}\}\le {c}/{t}$ for constant $c>0$ satisfies the following bound over $T$ iterations: 
    \begin{equation}
        \epsilon_{\gen}(\sgda)\le \frac{1+\frac{1}{\ell  c}}{n}\bigl(2cLL_w\bigr)^{\frac{1}{\ell c+1}}T^{\frac{\ell c}{\ell c+1}}.
    \end{equation}
\end{thm*}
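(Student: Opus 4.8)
The plan is to follow the stability recipe of Theorem~1 together with the expansivity and growth machinery developed above, adapting the non-convex analysis of \citep{hardt2016train} to the minimax setting. First I would reduce the claim to a uniform-stability estimate: since $f(\bw,\btheta;\bz)$ is $L_w$-Lipschitz in $\bw$ uniformly in $\btheta$, for any $\bz,\btheta$ we have $\mathbb{E}_A[f(A_w(S),\btheta;\bz)-f(A_w(S'),\btheta;\bz)]\le L_w\,\mathbb{E}_A[\Vert A_w(S)-A_w(S')\Vert_2]$. Couple two runs of SGDA on datasets $S,S'$ that differ in one sample, using a shared index sequence, and let $\delta_t:=\sqrt{\Vert\bw_t-\bw'_t\Vert^2+\Vert\btheta_t-\btheta'_t\Vert^2}$. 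By Theorem~1 it then suffices to bound (a slightly refined version of) $L_w\,\mathbb{E}[\delta_T]$.

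Next I would derive the one-step recursion for $\mathbb{E}[\delta_t]$, writing $\alpha_t:=\max\{\alpha_{w,t},\alpha_{\theta,t}\}\le c/t$. At iteration $t$, with probability $1-1/n$ the sampled index lies outside the differing coordinate, so both runs apply the same GDA map, which by Case~1 of Lemma~\ref{Lemma: Expansive Minimax} is $(1+\ell\alpha_t)$-expansive, hence $(1+\ell c/t)$-expansive; with probability $1/n$ the two runs apply distinct GDA maps, and Lemma~\ref{Lemma: Growth} (with $r=1$) gives $\delta_{t+1}\le(1+\ell c/t)\delta_t+2\sup_{\bw,\btheta}\Vert[\bw,\btheta]-G_t([\bw,\btheta])\Vert\le(1+\ell c/t)\delta_t+2cL/t$, using that a GDA step displaces any point by at most $\alpha_t\Vert[\nabla_\bw f,\nabla_\theta f]\Vert\le cL/t$ by Assumption~1. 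Taking the total expectation yields $\mathbb{E}[\delta_{t+1}]\le(1+\ell c/t)\mathbb{E}[\delta_t]+2cL/(tn)$.

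Then I would invoke the random-stopping-time/boundedness trick to tame the large early stepsizes. For a parameter $t_0\in\{1,\dots,n\}$, let $E_{t_0}$ be the event that the differing index is never picked during the first $t_0$ iterations; a union bound gives $\Pr(E_{t_0}^c)\le t_0/n$, and on $E_{t_0}$ we have $\delta_{t_0}=0$. Conditioning on $E_{t_0}$ versus $E_{t_0}^c$, using $0\le f\le 1$ on the latter and $L_w$-Lipschitzness on the former, yields $\sup_{\bz,\btheta}\mathbb{E}\bigl[\,|f(\bw_T,\btheta;\bz)-f(\bw'_T,\btheta;\bz)|\,\bigr]\le t_0/n+L_w\,\mathbb{E}[\delta_T\mid\delta_{t_0}=0]$. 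Unrolling the recursion from $\delta_{t_0}=0$ and using $\prod_{k=t+1}^{T-1}(1+\ell c/k)\le\exp\bigl(\ell c\sum_{k=t+1}^{T-1}1/k\bigr)\le(T/t)^{\ell c}$ together with $\sum_{t\ge t_0}t^{-\ell c-1}\le t_0^{-\ell c}/(\ell c)$ gives $\mathbb{E}[\delta_T\mid\delta_{t_0}=0]\le\frac{2L}{\ell n}(T/t_0)^{\ell c}$, so SGDA is $\epsilon$-uniformly stable in minimization with $\epsilon\le t_0/n+\frac{2LL_w}{\ell n}(T/t_0)^{\ell c}$.

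Finally I would optimize over $t_0$: the choice $t_0=(2cLL_w)^{1/(\ell c+1)}T^{\ell c/(\ell c+1)}$ balances the two terms (the second becoming $1/(\ell c)$ times the first), yielding $\epsilon\le\frac{1+1/(\ell c)}{n}(2cLL_w)^{1/(\ell c+1)}T^{\ell c/(\ell c+1)}$, and Theorem~1 turns this into the stated bound on $\epsilon_{\gen}(\sgda)$. The main obstacle is not any individual estimate but the correct orchestration: with $\ell c$ possibly at least $1$, a naive unrolling of the recursion from $t=1$ is superlinear in $T$, and it is precisely the $t_0$-truncation paired with the bounded-loss assumption that keeps $\epsilon$ sublinear in $T$; one also has to check that the harmonic-sum comparisons point in the right direction and that the minimizing $t_0$ can be taken inside $\{1,\dots,n\}$ in the relevant regime (the bound being vacuous otherwise).
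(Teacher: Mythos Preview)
Your proposal is correct and follows essentially the same route as the paper: the same coupling and one-step recursion $\mathbb{E}[\delta_{t+1}]\le(1+\ell c/t)\mathbb{E}[\delta_t]+2cL/(nt)$ via Lemma~\ref{Lemma: Expansive Minimax} and Lemma~\ref{Lemma: Growth}, the same $t_0$-truncation using the boundedness of $f$ (the paper packages this as Lemma~\ref{Lemma: non-convex bounded}), the same telescoping bound $\mathbb{E}[\delta_T\mid\delta_{t_0}=0]\le\frac{2L}{\ell n}(T/t_0)^{\ell c}$, and the same optimization of $t_0$. The only cosmetic difference is that the paper phrases the final step through $f_{\max}$ and invokes \citep{hardt2016train}'s Theorem~2.2, whereas you bound $|f(\bw_T,\btheta;\bz)-f(\bw'_T,\btheta;\bz)|$ uniformly in $\btheta$ and invoke Theorem~1 directly; both yield the same stability constant and the same generalization bound.
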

\begin{proof}
To show this result, we apply Lemma \ref{Lemma: non-convex bounded}. Defining $\delta_t=\sqrt{\Vert\bw_t-\bw'_t \Vert^2+\Vert\btheta_t-\btheta'_t \Vert^2}$ for the norm difference of parameters learned by SGDA over two datasets $S,S'$ with one different sample, according to the law of total probability we have:
\begin{align*}
    \mathbb{E}[\delta_{t+1}] &\le (1-\frac{1}{n})(1+\frac{c\ell}{t})\mathbb{E}[\delta_{t}] + \frac{1}{n}\bigl((1+\frac{c\ell}{t})\mathbb{E}[\delta_{t}] + \frac{2cL}{t} \bigr) \\
    &= (1+\frac{c\ell}{t})\mathbb{E}[\delta_{t}] + \frac{2cL}{nt}.\numberthis
\end{align*}
As a result, conditioned on $\delta_{t_0}=0$ we will have
\begin{align*}
    \mathbb{E}[\delta_{T}\big\vert \delta_{t_0}=0] &\le \sum_{t=t_0+1}^T \prod_{k=t+1}^T \bigl\{1+  \frac{c\ell}{k} \bigr\}\frac{2cL}{nt} \\
    &\le \sum_{t=t_0+1}^T \prod_{k=t+1}^T \bigl\{\exp(\frac{c\ell}{k}) \bigr\}\frac{2cL}{nt} \\
    &= \sum_{t=t_0+1}^T  \exp\bigl(\sum_{k=t+1}^T\frac{c\ell}{k}\bigr) \frac{2cL}{nt} \\
    &\le \sum_{t=t_0+1}^T  \exp({c\ell}{\log(T/t)}) \frac{2cL}{nt} \\
    &= \frac{2cLT^{c\ell}}{n}\sum_{t=t_0+1}^T  t^{-c\ell-1} \\
    &\le \frac{2L}{n\ell}\bigl(\frac{T}{t_0}\bigr)^{c\ell}.\numberthis
\end{align*}
Therefore, Lemma \ref{Lemma: non-convex bounded} shows that for every $t_0$ and $\bz$:
\begin{equation}
    \mathbb{E}\bigl[\vert f_{\max}(\bw_t;\bz) -f_{\max}(\bw'_t;\bz) \vert \bigr] \le \frac{t_0}{n}+\frac{2LL_w}{n\ell}\bigl(\frac{T}{t_0}\bigr)^{c\ell}.
\end{equation}
The above upper-bound will be approximately minimized at
\begin{equation}
    t_0 = (2cLL_w)^{\frac{1}{\ell c +1}}T^{\frac{\ell c}{\ell c +1}}.
\end{equation}
Plugging in the above $t_0$ to the upper-bound we obtain the following bound for every $\bz$:
\begin{equation}
    \mathbb{E}\bigl[\vert f_{\max}(\bw_t;\bz) -f_{\max}(\bw'_t;\bz) \vert \bigr] \le \frac{1+\frac{1}{\ell c}}{n}(2cLL_w)^{\frac{1}{\ell c +1}}T^{\frac{\ell c}{\ell c +1}}.
\end{equation}
The above result combined with Theorem 2.2 from \citep{hardt2016train} proves the theorem.
\end{proof}

\end{appendices}


\end{document}